\documentclass[a4paper]{article}

\usepackage[utf8]{inputenc} 
\usepackage[T1]{fontenc} 
\usepackage{graphicx}
\usepackage{subfig}
\usepackage{amsmath,amssymb}
\usepackage{amsthm}
\usepackage{mathrsfs}
\usepackage{mathtools}
\usepackage[shortlabels]{enumitem}
\usepackage{algorithm}
\usepackage[noend]{algpseudocode}
\usepackage{thmtools}
\usepackage{setspace}
\usepackage[round]{natbib}
\usepackage[dvipsnames]{xcolor}
\usepackage{geometry}

\usepackage{bbm}
\usepackage{bm}
\usepackage{placeins}
\usepackage{accents}
\usepackage{tikz}
\usetikzlibrary{positioning}
\usetikzlibrary{arrows.meta}
\usetikzlibrary{calc}

\allowdisplaybreaks

\usepackage{microtype}
\usepackage{booktabs}

\usepackage[colorlinks=true, linkcolor=MidnightBlue, filecolor=MidnightBlue, citecolor=MidnightBlue, urlcolor=MidnightBlue,]{hyperref}

\usepackage[capitalize,noabbrev]{cleveref}

\let\Algorithm\algorithm
\renewcommand\algorithm[1][]{\Algorithm[#1]\setstretch{1.3}}

\DeclareMathOperator*{\argmax}{arg\,max}

\DeclareMathAlphabet{\mathbbold}{U}{bbold}{m}{n}

\newtheorem{assumption}{Assumption}
\newtheorem{theorem}{Theorem}

\newtheorem{lemma}{Lemma}
\newtheorem{setting}{Setting}

\begin{document}

  \title{Invariance-based dynamic regret minimization}
  \author{Margherita Lazzaretto\textsuperscript{1}\thanks{E-mail: \texttt{mala@math.ku.dk}},\,Jonas Peters\textsuperscript{2} and Niklas Pfister\textsuperscript{3}}
  \date{
        \textsuperscript{1}University of Copenhagen,\,\textsuperscript{2}ETH Z\"{u}rich,\,\textsuperscript{3}Lakera Z\"{u}rich
        } 
\maketitle

\begin{abstract}%
We consider stochastic non-stationary linear bandits where the linear parameter connecting contexts to the reward changes over time. Existing algorithms in this setting localize the policy by gradually discarding or down-weighting past data, effectively shrinking the time horizon over which learning can occur. However, in many settings historical data may still carry partial information about the reward model.
We propose to leverage such data while adapting to changes, by assuming the reward model decomposes into stationary and non-stationary components. Based on this assumption, we introduce ISD-linUCB, an algorithm that uses past data to learn invariances in the reward model and subsequently exploits them to improve online performance. We show both theoretically and empirically that leveraging invariance reduces the problem dimensionality, yielding significant regret improvements in fast-changing environments when sufficient historical data is available. 
\end{abstract}

\section{Introduction}
A stochastic contextual bandit models an online decision-making process in which an agent, over $T\in\mathbb{N}$ rounds, sequentially selects actions based on contextual information \citep[see, e.g.,][]{lattimore2020bandit}. The agent's objective is to learn a decision policy that selects actions in a way that maximizes a cumulative reward, by balancing exploration of new actions and exploitation of acquired knowledge.
Bandit algorithms provide the simplest framework for decision problems under uncertainty, where the actions of the agent do not affect the environment, often serving as a starting point for more complex models such as reinforcement learning.
The design of stochastic contextual bandit algorithms relies on assumptions that specify the class of possible reward distributions over which the agent learns.
We consider the case of linear time-varying reward functions, where the expected reward is assumed to depend linearly on some $p$-dimensional context-action feature.

Regret is the main performance measure of bandit algorithms. It corresponds to the difference between the cumulative reward of an optimal (oracle) sequence of actions and the action selected by the bandit algorithm. The analysis of linear bandit algorithms involves studying finite sample upper and lower bounds for the regret in terms of the time horizon $T$ and the dimensionality $p$. 
The standard setup assumes that the environment in which the agent acts is stationary, and is widely studied in the literature \citep[see, for example,][a detailed review of the bandit literature is provided in Appendix~\ref{sec:related works}.]{lattimore2020bandit}. \citet{dani2008stochastic} show a lower bound for the regret of the stationary stochastic linear bandit problem of $\Omega(p\sqrt{T})$. Different algorithms have been proposed in the literature achieving an upper bound that match this lower bound up to logarithmic factors, e.g., based on strategies using upper confidence bound (UCB) \citep{auer2002using,dani2008stochastic, abbasi2011improved}.
Recent works relax the stationarity assumption. In such settings, the evaluation of non-stationary bandit algorithms additionally takes into account a variation budget $B_T$, measuring how much the environment changes through the $T$ rounds.
When the underlying reward function changes through time, learning and updating a reliable policy can be challenging since the algorithm needs to continually explore the context-action space to detect and adapt to changes. 
\citet{cheung2019learning} deal with non-stationarity by using a sliding window regularized least squares estimator, \citet{russac2019weighted} weight, instead, past observations by a discounting factor; \citet{zhao2020simple} propose to periodically restart a standard linear bandit algorithm, where the restarting interval depends on $B_T$. All these algorithms achieve the same regret upper bound in terms of $p, T$ and $B_T$ given by $\tilde{O}(p^{\frac{7}{8}}T^{\frac{3}{4}}B_T^{\frac{1}{4}})$. The approach of \citet{zhao2020simple} implies in particular that the excess regret compared to the stationary case  arises from assuming a fixed linear parameter, and incurring an additional loss that depends on its variation between restarts. 

With the overall goal of adapting more rapidly to environment changes, we investigate whether parts of the non-stationary reward function remain invariant through all rounds, so that the data collected further in the past can be exploited rather than being discounted or fully discarded. Invariant information allows, for example, to learn policies that select worst-case optimal actions \citep[e.g.,][]{saengkyongam2023invariant}. Since purely invariant policies can be sub-optimal at specific time points, we are interested in learning an invariant policy that can be updated online. To do so, we rely on the invariant subspace decomposition (ISD) framework proposed by \citet{lazzaretto2025invariant} for regression in linear non-stationary settings. ISD splits the learning of the time-varying parameter into two lower-dimensional components, one of which is time-invariant of dimension $p^{\operatorname{inv}}<p$: this allows us to use all available data for the invariant component estimation and thus to reduce prediction error. A schematic representation of the proposed algorithm is shown in Figure~\ref{fig:isd_linucb_scheme}.
\begin{figure}
    \centering
\begin{tikzpicture}[
  box/.style={rounded corners=4pt},
  arrow/.style={->},
  font=\sffamily\footnotesize
]

\node[box, fill=cyan!30, minimum width=5cm, minimum height=3cm] (core) at (0,0) {};

\node[box, align=center, 
     fill=gray!20]
(inv) at ([xshift=-2pt,yshift=21pt]core)
{invariant \\ reward \\ prediction};

\node[box, align=center, 
     fill=gray!20]
(res) at ([xshift=-2pt,yshift=-23pt]core)
{residual \\ reward \\ prediction};

\node (plus) at ($(inv.south)!0.5!(res.north)$) {$+$};

\node[box, minimum width=0.9cm, minimum height=0.7cm, 
    fill=gray!20]
(ucb) at ([xshift=-25pt, yshift=-1]core.east) {UCB};

\node (isd) at ([xshift=-25pt, yshift=30]core.east) {\textbf{ISD-linUCB}};

\draw[arrow, rounded corners=6pt] (inv.east) -| (ucb.north);
\draw[arrow, rounded corners=6pt] (res.east) -| (ucb.south);

\node[box,draw] (context) at (-3.5,0)
{context};

\node[box, draw, align=center, 
]
(offline) at ([xshift=-2pt,yshift=55pt]core)
{offline data};

\node[box, draw, align=center, anchor=west]
(online) at ([xshift=8pt,yshift=0pt]core.west)
{online \\ data};

\draw[arrow] (offline.south) -- (inv.north);

\draw[arrow] (context.east) -- (online.west);
\draw[arrow, rounded corners=6pt, dashed] (online.north) |- (inv.west) node[midway, above, black!50] {(optional)};
\draw[arrow, rounded corners=6pt] (online.south) |- (res.west);

\node[box, draw] (reward) at (3.9,-0.03)
{reward};
\node at (2.9,0.2) {action};
\draw[arrow] (ucb.east) -- (reward.west);

\draw[arrow, rounded corners=6pt]
  (reward.south) -- ++(0,-1.5)
  -- ++(-5.85,0)
  --  ($(online.south west)!0.25!(online.south east)$)+(0, -0.001);
\end{tikzpicture}
    \caption{ISD-linUCB exploits historical data to improve reward predictions used by the UCB policy.
    }
    \label{fig:isd_linucb_scheme}
\end{figure}
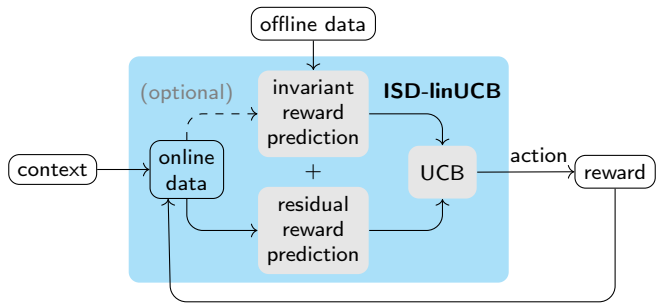

Our contributions are as follows.
\begin{enumerate}[(i), topsep=0.01cm, itemsep=-0.2cm]
    \item We propose 
    a practical novel linear contextual bandit algorithm (ISD-linUCB) that reduces online adaptation to a lower-dimensional residual subspace by exploiting the ISD framework to estimate the invariant component from historical data.
    \item We establish regret bounds scaling with the residual dimension $(p - p^{\operatorname{inv}})$ rather than $p$, yielding significant improvements in rapidly changing environments when sufficient historical data is available.
\end{enumerate}
\subsection{Notation}
For all $n\in\mathbb{N}$, we define $[n]\coloneqq\{1, \dots, n\}$ and $[-n]\coloneqq\{-n, \dots, -1\}$. For all linear subspaces $\mathcal{S}\subseteq \mathbb{R}^p$, we denote by $\Pi^{\mathcal{S}}\in\mathbb{R}^{p\times p}$ the orthogonal projection matrix onto $\mathcal{S}$ and by $\mathcal{S}^{\perp}$ the orthogonal complement of $\mathcal{S}$ in $\mathbb{R}^p$. For all $x\in\mathbb{R}^p$ and for all positive semi-definite matrices $A\in\mathbb{R}^{p\times p}$, we define the  norm $\|x\|_A\coloneqq\sqrt{x^{\top}Ax}$. We further denote by $\lambda_{\min}(A)$ and $\lambda_{\max}(A)$ the minimum and maximum eigenvalues of $A$, respectively. For all $p\in\mathbb{N}$, we denote by $I_p$ the $p$-dimensional identity matrix. We write $\tilde{O}(\cdot)$ for asymptotic bounds up to
polylogarithmic factors, i.e., $b_n=\tilde{O}(a_n)$ if there exist constants $c>0$ and $k\geq 0$ such that $b_n\leq c a_n \log(n)^k$ for sufficiently large $n$.
\section{Problem setting}
\label{sec:problem_setting}
In a linear stochastic contextual bandit, an agent sequentially observes context $X_t\in\mathcal{X}\subseteq\mathbb{R}^p$ for rounds $t\in[T]$, where each $X_t$ is drawn independently from previous contexts and has distribution $\mathbb{P}_{X_t}$. Given $X_t$, the agent selects an action $a_t\in\mathcal{A}=[K]$ and receives a noisy reward $R_t^{a_t}$. The reward for all $a\in\mathcal{A}$ is assumed to satisfy
\begin{align}
    R_t^a = \varphi(X_t, a)^{\top}\gamma_{0, t} + \epsilon_t 
    \label{eq:reward_model}
\end{align}
where $\varphi:\mathcal{X}\times\mathcal{A}\rightarrow\mathbb{R}^p$ is a known context-action feature map, $\epsilon_t$ is a conditionally zero mean $\sigma$-sub-Gaussian noise variable with respect to the $\sigma$-algebra $\mathcal{F}_t = \sigma((\varphi(X_{\tau}, a_{\tau}), R^{a_{\tau}}_{\tau})_{\tau\in[t-1]}, \varphi(X_{t}, a_{t}))$, and $\gamma_{0, t}\in\mathbb{R}^p$ is an unknown linear parameter. As in the standard linear bandit setup \citep[e.g.,][]{lattimore2020bandit}, we assume that the context-action features and the linear parameter are bounded, and define $L\coloneqq \max_{t\in\mathbb{N}}\sup_{a\in\mathcal{A}}\|\varphi(X_t, a)\|_2$ and $M\coloneqq\max_{t\in\mathbb{N}}\|\gamma_{0,t}\|_2$.

Linear bandit algorithms sequentially estimate the linear parameter $\gamma_{0,t}$ and use it to select the optimal action given the observed context $X_t$. The quality of the algorithm can then be assessed by considering the \emph{dynamic regret} defined by
\begin{equation}
     \operatorname{Reg}_T \coloneqq \mathbb{E}\left[\sum_{t=1}^T (R_t^{a^*_t}-R_t^{a_t})\right]
     \label{eq:regret_definition}
 \end{equation}
where $a^*_t \coloneqq \argmax_{a\in\mathcal{A}} \varphi(X_t, a)^{\top}\gamma_{0,t}$ is the unknown optimal action. We call the regret at time $t$, $\operatorname{reg}_t\coloneqq\mathbb{E}(R_t^{a^*_t}-R_t^{a_t})$, \emph{instantaneous regret}.

We consider a non-stationary setup in which both the parameter $\gamma_{0,t}$ and the context distribution $\mathbb{P}_{X_t}$ may vary across rounds. To be able to effectively estimate $\gamma_{0, t}$ in the non-stationary setting, we make two assumptions: (1) We assume $\gamma_{0,t}$ can be decomposed into a varying and an invariant part and (2) we assume that the varying part changes slowly so it can be seen as approximately constant within short time-intervals. To formalize (1), we introduce Assumption~\ref{ass:isd} below, which is based on the invariant subspace decomposition framework proposed by \citet{lazzaretto2025invariant} adapted to the linear bandit setup.

\begin{assumption}[Invariant subspace decomposition (ISD)]
\label{ass:isd}
There exists a non-degenerate \emph{invariant subspace decomposition} of $\mathbb{R}^p$ for the time-varying linear bandit model \eqref{eq:reward_model}, that is, a partition of $\mathbb{R}^p$ into two linear subspaces $(\mathcal{S}^{\operatorname{inv}}, \mathcal{S^{\operatorname{res}}})$ such that $\operatorname{dim}(\mathcal{S}^{\operatorname{inv}})\coloneqq p^{\operatorname{inv}}<p$ and $\operatorname{dim}(\mathcal{S}^{\operatorname{res}})\coloneqq p^{\operatorname{res}} = p-p^{\operatorname{inv}}$, satisfying
\begin{enumerate}[i), topsep=0.01cm]
	\item $\mathcal{S}^{\operatorname{inv}} = (\mathcal{S^{\operatorname{res}}})^{\perp}$;
	\item  $\forall t\in\mathbb{N}$, $\forall x\in\mathcal{X}$, $\forall a\in\mathcal{A}$: $\operatorname{Cov}(\Pi^{\mathcal{S}^{\operatorname{inv}}}\varphi(x, a), \Pi^{\mathcal{S}^{\operatorname{res}}}\varphi(x, a))=0$;
	\item $\forall t\in\mathbb{N}$, $\exists\beta^{\operatorname{inv}}\in\mathcal{S}^{\operatorname{inv}}$, $\exists\delta^{\operatorname{res}}_t\in\mathcal{S}^{\operatorname{res}}$: $\gamma_{0,t}=\beta^{\operatorname{inv}}+\delta^{\operatorname{res}}_t$; 
	\item $\forall$ partitions $(\tilde{\mathcal{S}}^{\operatorname{inv}}, \tilde{\mathcal{S}}^{\operatorname{res}})$ of $\mathbb{R}^p$ satisfying (i)---(iii): $\operatorname{dim}(\tilde{\mathcal{S}}^{\operatorname{inv}})\le\operatorname{dim}(\mathcal{S}^{\operatorname{inv}})$. 
\end{enumerate}
\end{assumption}
The spaces $\mathcal{S}^{\operatorname{inv}}$ and $\mathcal{S}^{\operatorname{res}}$ are called \emph{invariant} and \emph{residual subspace}, and $\beta^{\operatorname{inv}}$ and $\delta^{\operatorname{res}}_t$ are called \emph{invariant} and \emph{residual component} of $\gamma_{0,t}$, respectively. \citet{lazzaretto2025invariant} show that both the invariant and the residual component can be expressed as the least squares solution to the regression problem in the corresponding subspace.

Using Assumption~\ref{ass:isd}, we can now formalize the bandit setting.
We decompose the problem into an offline phase followed by a shorter online phase.
\begin{setting}
\label{set:hybrid}
    We have access to $T_0\in\mathbb{N}$ offline observations $(\varphi(X_t, a_t), a_t, R_t^{a_t})_{t\in[-T_0]}$, collected by a bandit agent interacting with a changing environment satisfying model \eqref{eq:reward_model}. The online time horizon $T\in\mathbb{N}$ is such that, for all $t\in[T]$, the parameter $\gamma_{0,t}$ in model \eqref{eq:reward_model} is constant and Assumption~\ref{ass:isd} holds.
\end{setting}
The constant $\gamma_{0,t}$ assumption simplifies the analysis and is motivated by settings of other non-stationary bandit algorithms, where $T$ may represent a single epoch in restarting algorithms \citep{zhao2020simple} or a sliding window \citep{russac2019weighted}. In practice, our proposed algorithm sequentially moves previously seen observations to the offline data.
This framing also facilitates comparison with stationary linear bandit algorithms.

While for stationary bandits the time horizon $T$ tends to dominate the bounds, for non-stationary bandits the $p$ term becomes more dominant because time horizons are short due to the shifting distributions. By leveraging the offline data, we are able to reduce the upper bound from $\tilde{O}(p\sqrt{T})$ to $\tilde{O}(p^{\operatorname{res}}\sqrt{T})$ where $p^{\operatorname{res}}$ is the dimension of the residual subspace (in Appendix~\ref{sec:lower_bound} we show that under Assumption~\ref{ass:isd} the lower bound for the regret of a linear bandit algorithm is $\Omega(p^{\operatorname{res}}\sqrt{T})$).

We begin by recalling some concepts for stationary linear bandits relevant for our work in Section~\ref{sec:prelim}. We then introduce our method (Section~\ref{sec:algorithm}) and analyze it in two steps: first assuming the decomposition $(\mathcal{S}^{\operatorname{inv}},\mathcal{S}^{\operatorname{res}})$ is known (Section~\ref{sec:oracle_subs}), then extending the analysis to the case where it is estimated  from data (Section~\ref{sec:sub_dec_error}).

\subsection{Regret analysis for stationary linear bandits}
 \label{sec:prelim}
 
 In a linear contextual bandit algorithm, the exploration is normally based on the uncertainty in the linear parameter estimation. More specifically, let $\hat{\gamma}_t$ denote an estimate of $\gamma_{0,t}$ obtained as the solution to ridge regression with regularization parameter $\lambda>0$, using observations $(X_{\tau}, R_{\tau}^{a_{\tau}})_{\tau=1}^{t-1}$. The instantaneous regret is usually upper bounded with high probability by a term proportional to the product of the estimation error on the linear parameter, $\|\hat{\gamma}_t-\gamma_{0,t}\|_{\hat{\Sigma}_{t-1}}$, and the context-action features norm, $\|\varphi(X_t, a_t)\|_{\hat{\Sigma}_{t-1}^{-1}}$, normalized by $\hat{\Sigma}_{t-1}$
 and its inverse, respectively, where for all $t\in[T]$ we define the regularized sample covariance matrix  \begin{equation}
     \hat{\Sigma}_{t} \coloneqq \lambda I_p + \sum_{\tau=1}^{t}\varphi(X_{\tau}, a_{\tau})\varphi(X_{\tau}, a_{\tau})^{\top}.
     \label{eq:cov_matrix_t}
 \end{equation}
  This is the case, for example, both for the upper confidence bound based algorithm by \citet{abbasi2011improved} (LinUCB) and for the Thompson sampling based algorithm by \citet{agrawal2013thompson} (LinTS). 
Using the explicit expression for the linear parameter estimator and the triangle inequality, it holds that
\begin{align}
     \|\hat{\gamma}_t-\gamma_{0,t}\|_{\hat{\Sigma}_{t-1}} \le &\|\sum_{\tau=1}^{t-1} \varphi(X_{\tau}, a_{\tau})^{\top}\epsilon_{\tau}\|_{\hat{\Sigma}_{t-1}^{-1}} + \sqrt{\lambda}\|\gamma_{0,t}\|_2.
     \label{eq:gamma_hat_conf_rad}
\end{align}
A result that plays a key role in bounding the linear parameter estimation error is given by the following lemma by \citet{abbasi2011improved}. 
\begin{lemma}[\citet{abbasi2011improved}, Theorem 1]
\label{lemma:self_norm_bound}
    Let $\{F_t\}_{t=0}^\infty$ be a filtration. Let $\{\epsilon_t\}_{t=1}^\infty$ be a real-valued stochastic process such that $\epsilon_t$ is $F_t$-measurable and sub-Gaussian conditionally on $F_{t-1}$ with parameter $\sigma>0$. Let $\{\varphi(X_t, a_t)\}_{t=1}^\infty$ be an $\mathbb{R}^p$-valued stochastic process such that $\varphi(X_t, a_t)$ is $F_{t-1}$-measurable. Let $\lambda\in\mathbb{R}$ be a strictly positive constant. Then, for all $\eta\in(0, 1)$, it holds with probability at least $1-\eta$ that, for all $t\in\mathbb{N}$,
    \begin{equation*}
        \|\sum_{\tau=1}^t\varphi(X_{\tau}, a_{\tau})\epsilon_{\tau}\|^2_{\hat{\Sigma}_t^{-1}} \le 2 \sigma^2 \log\left(\tfrac{1}{\eta}\sqrt{\tfrac{\operatorname{det}(\hat{\Sigma}_t)}{\operatorname{det}(\lambda I_p)}}\right).
    \end{equation*}
    Moreover, if, for all $t\in[T]$, $\|\varphi(X_t, a_t)\|_2\le L$, then
    \begin{equation*}
        \log\left(\tfrac{\operatorname{det}(\hat{\Sigma}_t)}{\operatorname{det}(\lambda I_p)}\right)\le p\log\left(1+\tfrac{tL^2}{\lambda p}\right).
    \end{equation*}
\end{lemma}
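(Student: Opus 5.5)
The proof splits into two parts: a self-normalized martingale bound via the method of mixtures, and a determinant bound via AM--GM. Write $x_\tau\coloneqq\varphi(X_\tau,a_\tau)$, $S_t\coloneqq\sum_{\tau=1}^t x_\tau\epsilon_\tau$ and $V_t\coloneqq\sum_{\tau=1}^t x_\tau x_\tau^\top$, so that $\hat{\Sigma}_t=\lambda I_p+V_t$.

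\textbf{Supermartingale.} For fixed $u\in\mathbb{R}^p$, define
\[
M_t^u\coloneqq\exp\left(\tfrac{1}{\sigma}u^\top S_t-\tfrac12\|u\|_{V_t}^2\right).
\]
Since $x_t$ is $F_{t-1}$-measurable and $\epsilon_t$ is conditionally $\sigma$-sub-Gaussian,
\[
\mathbb{E}\left[\exp\left(\tfrac{1}{\sigma}u^\top x_t\epsilon_t-\tfrac12(u^\top x_t)^2\right)\,\middle|\,F_{t-1}\right]\le1 .
\]
Hence $M_t^u$ is a nonnegative supermartingale with $\mathbb{E}M_t^u\le1$.

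\textbf{Mixing over $u$.} Let $u\sim\mathcal{N}(0,\lambda^{-1}I_p)$, independent of everything else, and set $\bar M_t\coloneqq\mathbb{E}_u[M_t^u]$. By Fubini, $\bar M_t$ is again a nonnegative supermartingale with $\mathbb{E}\bar M_t\le1$. Completing the square in the Gaussian integral gives the closed form
\[
\bar M_t=\left(\tfrac{\det(\lambda I_p)}{\det(\hat\Sigma_t)}\right)^{1/2}\exp\left(\tfrac{1}{2\sigma^2}\|S_t\|^2_{\hat\Sigma_t^{-1}}\right).
\]

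\textbf{Uniformity in $t$.} This is the step that needs the most care. I would use Ville's maximal inequality for nonnegative supermartingales, $\mathbb{P}(\exists t:\bar M_t\ge1/\eta)\le\eta$. Alternatively, one can apply Markov's inequality to $\bar M_\tau$ at the stopping time $\tau\coloneqq\min\{t:\bar M_t\ge1/\eta\}$, using Fatou's lemma to get $\mathbb{E}\bar M_\tau\le1$ even when $\tau=\infty$. Then, with probability at least $1-\eta$, $\bar M_t<1/\eta$ for all $t$. Taking logarithms and rearranging gives exactly the first claimed inequality,
\[
\|S_t\|^2_{\hat\Sigma_t^{-1}}\le2\sigma^2\log\left(\tfrac1\eta\sqrt{\tfrac{\det(\hat\Sigma_t)}{\det(\lambda I_p)}}\right).
\]
The only subtle points are the measurability and Fubini justification for the mixture. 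The computation itself is routine.

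\textbf{Determinant bound.} By AM--GM applied to the eigenvalues of $\hat\Sigma_t$,
\[
\det(\hat\Sigma_t)\le\left(\tfrac{\operatorname{tr}(\hat\Sigma_t)}{p}\right)^p .
\]
Moreover
\[
\operatorname{tr}(\hat\Sigma_t)=\lambda p+\sum_{\tau=1}^t\|x_\tau\|_2^2\le\lambda p+tL^2 .
\]
Dividing by $\det(\lambda I_p)=\lambda^p$ and taking logarithms yields
\[
\log\left(\tfrac{\det(\hat\Sigma_t)}{\det(\lambda I_p)}\right)\le p\log\left(1+\tfrac{tL^2}{\lambda p}\right),
\]
which is the second claim.
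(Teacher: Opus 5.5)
Your proposal is correct and follows essentially the same route as the source the paper cites without proof, \citet{abbasi2011improved}. That source also uses the method of mixtures with a $\mathcal{N}(0,\lambda^{-1}I_p)$ mixing distribution, a stopping-time argument (equivalently, Ville's inequality) for uniformity in $t$, and the determinant--trace inequality via AM--GM on the eigenvalues, which is the same step the paper uses to bound $\det(\tilde{\Sigma}^{\operatorname{inv}}_{[-T_0]})$ in the proof of Theorem~\ref{thm:estimated_beta_inv}.
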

Lemma~\ref{lemma:self_norm_bound} implies that
$\|\hat{\gamma}_t-\gamma_{0,t}\|_{\hat{\Sigma}_{t-1}}$ is $\tilde{O}(\sqrt{p})$, where the dependence on $p$ is determined by the dimensionality of the matrix $\hat{\Sigma}_t$.
 The sum over the time horizon of the squared norm of the context-action features, $\sum_{t=1}^T\|\varphi(X_t, a_t)\|^2_{\hat{\Sigma}_{t-1}^{-1}}$, is also shown to be $\tilde{O}(p)$, depending again on the context-action features dimensionality.
This term appears under square root when bounding the cumulative regret, leading to an additional $\tilde{O}(\sqrt{p})$ term in the final bound. Our goal is to show that, in Setting~\ref{set:hybrid}, we can reduce the regret to depend on $p^{\operatorname{res}}$ rather than $p$, up to some term that becomes negligible for large enough $T_0$. In the following, we focus on the LinUCB analysis, but a similar reasoning could apply for other linear bandit algorithms based on least squares estimation.

\section{ISD-linUCB algorithm}
\label{sec:algorithm}
Assuming Setting~\ref{set:hybrid}, we propose an algorithm that is split into an offline phase for the estimation of $(\mathcal{S}^{\operatorname{inv}}, \mathcal{S}^{\operatorname{res}})$ and $\beta^{\operatorname{inv}}$, followed by an online bandit phase adapting to the non-stationarity.

The offline phase relies on the $T_0$ historical observations from Setting~\ref{set:hybrid}. To estimate $(\mathcal{S}^{\operatorname{inv}}, \mathcal{S}^{\operatorname{res}})$, \citet{lazzaretto2025invariant} estimate an orthonormal matrix $U$ that jointly block diagonalizes the covariance matrices $(\operatorname{Var}(\varphi(X_t, a_t)))_{t\in[-T_0]}$. $U$ can be partitioned into two submatrices $U^{\operatorname{inv}}\in\mathbb{R}^{p\times p^{\operatorname{inv}}}$ and $U^{\operatorname{res}}\in\mathbb{R}^{p\times p^{\operatorname{res}}}$ whose columns form a basis for $\mathcal{S}^{\operatorname{inv}}$ and $\mathcal{S}^{\operatorname{res}}$, respectively. Then, the orthogonal projection matrices onto $\mathcal{S}^{\operatorname{inv}}$ and $\mathcal{S}^{\operatorname{res}}$ are $\Pi^{\mathcal{S}^{\operatorname{inv}}}=U^{\operatorname{inv}}(U^{\operatorname{inv}})^{\top}$ and $\Pi^{\mathcal{S}^{\operatorname{res}}}=U^{\operatorname{res}}(U^{\operatorname{res}})^{\top}$.
Under Assumption~\ref{ass:isd}, for all $t\in[T]$ we can rewrite \eqref{eq:reward_model} as
\begin{align}
    R^{a_t}_t & = 
    \varphi(X_t, a_t)^{\top}\Pi^{\mathcal{S}^{\operatorname{inv}}}\beta^{\operatorname{inv}} + \varphi(X_t, a_t)^{\top}\Pi^{\mathcal{S}^{\operatorname{res}}}\delta^{\operatorname{res}}_t + \epsilon_t, \label{eq:sep_reward}
\end{align}
where $\operatorname{Cov}( \Pi^{\mathcal{S}^{\operatorname{inv}}}\varphi(X_t, a_t), \Pi^{\mathcal{S}^{\operatorname{res}}}\varphi(X_t, a_t))=0$. 
The subspace decomposition therefore allows to estimate $\beta^{\operatorname{inv}}$ and $\delta^{\operatorname{res}}_t$ separately. 
For any index set $\mathcal{T} \subseteq [-T_0] \cup [T]$, we define the sample covariance matrix
\begin{equation*}
    \hat{\Sigma}_{\mathcal{T}} \coloneqq  \sum_{\tau \in \mathcal{T}} \varphi(X_{\tau}, a_{\tau}) \varphi(X_{\tau}, a_{\tau})^{\top}.
    \label{eq:hist_sample_cov}
\end{equation*}
We make the following assumption to ensure that $\hat{\Sigma}_{[-T_0]}$,
is strictly positive definite.
\begin{assumption}
\label{ass:min_eig}
    The policy used to collect the $T_0$ observations is such that $\exists\lambda_0>0$ such that
    $\lambda_{\min}(\frac{1}{T_0} \hat{\Sigma}_{[-T_0]})\ge \lambda_0$ almost surely.
\end{assumption}
 This means in particular that the policy used in the collection of the offline data has explored the context-action feature space sufficiently well in all directions. This is also required to be able to estimate the invariant subspace decomposition in the first place \citep[see][]{lazzaretto2025invariant}.

We denote an estimate of $U$ by $\hat{U}=[\hat{U}^{\operatorname{inv}}, \hat{U}^{\operatorname{res}}]$. 
 Under Assumption~\ref{ass:min_eig}, we can estimate the invariant component $\beta^{\operatorname{inv}}$ as the OLS solution in the invariant subspace, using the estimated $\hat{U}^{\operatorname{inv}}$ and the offline observations, that is, 
\begin{equation}
\label{eq:beta_inv_est}
    \hat{\beta}^{\operatorname{inv}}  \coloneqq \hat{U}^{\operatorname{inv}}(\tilde{\Sigma}^{\operatorname{inv}}_{[-T_0]})^{-1} \sum_{t\in[-T_0]}(\hat{U}^{\operatorname{inv}})^{\top}\varphi(X_t, a_t)R_t^{a_t},
\end{equation}
where $\tilde{\Sigma}^{\operatorname{inv}}_{[-T_0]}\coloneqq (\hat{U}^{\operatorname{inv}})^{\top}\hat{\Sigma}_{[-T_0]}\hat{U}^{\operatorname{inv}}$. We define a confidence set around $\hat{\beta}^{\operatorname{inv}}$ by
\begin{equation}
    \hat{\mathcal{C}}^{\beta} \coloneqq \{\beta\in\hat{\mathcal{S}}^{\operatorname{inv}}\mid \|\hat{\beta}^{\operatorname{inv}}-\beta\|_{\hat{\Sigma}_{[-T_0]}}^2\le\hat{\rho}^{\operatorname{inv}}_{T_0}(\eta, L, M) \},
    \label{eq:beta_confidence_set}
\end{equation}
where $\hat{\rho}^{\operatorname{inv}}_{T_0}(\eta, L, M)$ is defined in \eqref{eq:beta_inv_radius_full} in Appendix~\ref{sec:proof_full_isd} such that the set contains $\hat{\Pi}^{\mathcal{S}^{\operatorname{inv}}}\beta^{\operatorname{inv}}$ with probability at least $1-\eta$. 
As new observations become available, the estimate $\hat{\beta}^{\operatorname{inv}}$ and the confidence set in in \eqref{eq:beta_confidence_set} may be recomputed online.

At round $t\in[T]$ of the online ISD-linUCB algorithm, we estimate the residual component as
\begin{equation*}
    \hat{\delta}^{\operatorname{res}}_t \coloneqq \hat{U}^{\operatorname{res}}(\tilde{\Sigma}^{\operatorname{res}}_{t-1})^{-1}\sum_{\tau=1}^{t-1}(\hat{U}^{\operatorname{res}})^{\top}\varphi_{\tau}(R^{a_{\tau}}_{\tau}-\varphi_{\tau}^{\top}\hat{\beta}^{\operatorname{inv}})
\end{equation*}
where $\varphi_{\tau}\coloneqq\varphi(X_{\tau},a_{\tau})$ and for all $t\in[T]$, $\tilde{\Sigma}^{\operatorname{res}}_{t}\coloneqq (\hat{U}^{\operatorname{res}})^{\top}\hat{\Sigma}_{t}\hat{U}^{\operatorname{res}}$ and $\hat{\Sigma}_t$ is defined as in \eqref{eq:cov_matrix_t}. 
We then define a confidence set around $\hat{\delta}_t^{\operatorname{res}}$ by
\begin{equation*}
    \hat{\mathcal{C}}^{\delta}_t \coloneqq \{\delta\in\hat{\mathcal{S}}^{\operatorname{res}}\mid \|\hat{\delta}^{\operatorname{res}}_t-\delta\|_{\hat{\Sigma}_{t-1}}^2\le\hat{\rho}^{\operatorname{res}}_t(\eta, L, M) \},
\end{equation*}
where $\hat{\rho}_t^{\operatorname{res}}(\eta,L,M)$ is defined in \eqref{eq:delta_res_radius_full} in Appendix~\ref{sec:proof_full_isd} such that the set contains $\hat{\Pi}^{\mathcal{S}^{\operatorname{res}}}\delta^{\operatorname{res}}_t$ with probability at least $1-\eta$, and choose an action based on $\hat{\mathcal{C}}^{\beta}\oplus\hat{\mathcal{C}}^{\delta}_t$. Assumption~\ref{ass:isd} ensures that there is no bias introduced due to omitting the context-action features in the residual subspace when estimating ${\beta}^{\operatorname{inv}}$ (and vice versa for $\delta^{\operatorname{res}}_t$) and therefore that $\hat{\mathcal{C}}^{\beta}$ and $\hat{\mathcal{C}}^{\delta}_t$ have the desired coverage.

The full ISD-linUCB algorithm for a single online step, with and without updating the invariant component, is provided in Algorithm~\ref{alg:isd_linucb}.
\begin{algorithm}[t]
    \caption{ISD-linUCB (iteration at time $t$)}
    \begin{algorithmic}
    \State \textbf{Input:} $(\varphi(X_{\tau}, a_{\tau}), a_{\tau}, R^{a_{\tau}}_{\tau})_{\tau\in[t-1]\cup[-T_0]}$, $X_t$
    \State \textbf{Parameters:} $\mathcal{A}$, $\lambda$, $\eta$, $L$, $M$, \texttt{recompute}
    \vspace{-1.25ex}
    \State \hrulefill
    \State $\mathbf{X}_{t} \gets [\varphi(X_1,a_1),\dots, \varphi(X_{t-1},a_{t-1})]^{\top}$ \State $\mathbf{R}_t\gets [R_{1}^{a_1}, \dots, R_{t-1}^{a_{t-1}}]^{\top}$
    \If{\texttt{recompute} or $t=1$}
        \State $\mathcal{T}\gets [-T_0]\cup [t-1]$
        \State $\overline{\mathbf{X}}\gets [\varphi(X_{-T_0},a_{-T_0}),\dots, \varphi(X_{t-1},a_{t-1})]^{\top}$
        \State compute $\hat{U}^{\operatorname{res}}$ and $\hat{U}^{\operatorname{inv}}$ using joint block diagonalization
        \State $\overline{\mathbf{X}}^{\operatorname{inv}} \gets \overline{\mathbf{X}}\hat{U}^{\operatorname{inv}}$
        \State $\hat{\beta}^{\operatorname{inv}} \gets \hat{U}^{\operatorname{inv}}\big((\overline{\mathbf{X}}^{\operatorname{inv}})^{\top}\overline{\mathbf{X}}^{\operatorname{inv}} \big)^{-1}(\overline{\mathbf{X}}^{\operatorname{inv}})^{\top}\overline{\mathbf{R}}$
        \State $\hat{\mathcal{C}}^{\beta} \gets \{\beta\in\hat{\mathcal{S}}^{\operatorname{inv}}\mid \|\hat{\beta}^{\operatorname{inv}}-\beta\|_{\hat{\Sigma}_{\mathcal{T}}}^2\le\hat{\rho}^{\operatorname{inv}}_{|\mathcal{T}|}(\eta, L, M) \}$
    \Else
        \State load $\hat{\beta}^{\operatorname{inv}}$, $\hat{U}^{\operatorname{res}}$ and $\hat{\mathcal{C}}^{\beta}$ from previous iteration
    \EndIf
    \State $\mathbf{R}^{\operatorname{res}}_t \gets \mathbf{R}_t -\mathbf{X}_t\hat{\beta}^{\operatorname{inv}}$  
    \State $\mathbf{X}^{\operatorname{res}}_{t} \gets \mathbf{X}_{t}\hat{U}^{\operatorname{res}}$
    \State $\hat{\delta}^{\operatorname{res}}_t \gets \hat{U}^{\operatorname{res}}\big(\lambda I_{p^{\operatorname{res}}} + (\mathbf{X}^{\operatorname{res}}_{t})^{\top}\mathbf{X}^{\operatorname{res}}_t \big)^{-1}(\mathbf{X}_t^{\operatorname{res}})^{\top}\mathbf{R}_t^{\operatorname{res}}$
    \State $\hat{\mathcal{C}}^{\delta}_t \gets \{\delta\in\hat{\mathcal{S}}^{\operatorname{res}}\mid \|\hat{\delta}^{\operatorname{res}}_t-\delta\|_{\hat{\Sigma}_{t-1}}^2\le\hat{\rho}^{\operatorname{res}}_t(\eta, L, M) \}$
    \State $a_t\gets \argmax_{a\in\mathcal{A}} \max_{\gamma\in\hat{\mathcal{C}}^{\beta}\oplus\hat{\mathcal{C}}^{\delta}_t}\varphi(X_t, a)^{\top}\gamma$
    \\
    \Return $a_t$
    \end{algorithmic}
    \label{alg:isd_linucb}
\end{algorithm}
Since Setting~\ref{set:hybrid} assumes $\gamma_{0,t}$ is fixed in $[T]$, ISD-linUCB acts in $\mathcal{S}^{\operatorname{res}}$ as a standard LinUCB algorithm. When this assumption fails, Algorithm~\ref{alg:isd_linucb} can be modified to act in $\mathcal{S}^{\operatorname{res}}$ as existing non-stationary algorithms (e.g., using a sliding window),  
to improve their performance in terms of dimensionality.

\section{Regret analysis}
To motivate our approach, in Section~\ref{sec:oracle_analysis} we first analyze the regret of a simplified algorithm having oracle knowledge of the subspace decomposition $(\mathcal{S}^{\operatorname{inv}}, \mathcal{S}^{\operatorname{res}})$. We then discuss the complete regret analysis in Section \ref{sec:complete_reg_analysis}.

\subsection{Motivation: oracle ISD-linUCB}
\label{sec:oracle_analysis}
We study Algorithm~\ref{alg:isd_linucb} assuming the matrix $U=[U^{\operatorname{inv}}, U^{\operatorname{res}}]$ is known. For simplicity, we keep the notation introduced in the previous section to denote the same quantities with known $U$. We define $(\bar{\beta}_t, \bar{\delta}_t)\coloneqq\argmax_{\beta\in\hat{\mathcal{C}}^{\beta}, \delta\in\hat{\mathcal{C}}^{\delta}}\varphi(X_t, a_t)^{\top}(\beta+\delta)$ the parameters at which the upper confidence bound for the chosen action at time $t$ is achieved. 

Using the decomposition of the reward from \eqref{eq:sep_reward} and the standard LinUCB analysis \citep[see, for example,][Section 19.3]{lattimore2020bandit}, we obtain that at time $t\in[T]$, with high probability, the instantaneous regret is upper bounded by 
\begin{align}
    \operatorname{reg}_t \le \underbrace{\varphi(X_t, a_t)^{\top}U^{\operatorname{inv}}(U^{\operatorname{inv}})^{\top}(\bar{\beta}_t-\beta^{\operatorname{inv}})}_{\operatorname{reg}^{\operatorname{inv}}_t}
    + \underbrace{\varphi(X_t, a_t)^{\top}U^{\operatorname{res}}(U^{\operatorname{res}})^{\top}(\bar{\delta}_t-\delta^{\operatorname{res}}_t)}_{\operatorname{reg}^{\operatorname{res}}_t}. \label{eq:inst_reget_isd}
\end{align}
Intuitively, for larger $T_0$ the confidence set $\hat{\mathcal{C}}^{\beta}$ shrinks and $\bar{\beta}_t-\beta^{\operatorname{inv}}$ gets close to zero. Therefore, for $T_0\gg T$, $\operatorname{reg}^{\operatorname{inv}}_t$  becomes negligible with respect to $\operatorname{reg}^{\operatorname{res}}_t$. Removing the uncertainty on the invariant component implies that, in the regret analysis, the dependence on the dimension $p$ of the bandit parameter is reduced to the dimension $p^{\operatorname{res}}$ of the residual parameter.
We first study the regret assuming to also have oracle knowledge of the invariant component $\beta^{\operatorname{inv}}$ (Section~\ref{sec:oracle_subs_beta}). Then, we include in the regret analysis the estimation of the invariant component (Section~\ref{sec:oracle_subs}). 

\subsubsection{Regret analysis for oracle $(\mathcal{S}^{\operatorname{inv}}, \mathcal{S}^{\operatorname{res}}),\,\beta^{\operatorname{inv}}$}
\label{sec:oracle_subs_beta}
If we know $\beta^{\operatorname{inv}}$, then $\hat{\mathcal{C}}^{\beta}=\{\beta^{\operatorname{inv}}\}$ and $\operatorname{reg}^{\operatorname{inv}}_t=0$. Therefore, Algorithm~\ref{alg:isd_linucb} only performs the exploration in the residual subspace. As shown in Theorem~\ref{thm:oracle_regret}, its regret only depends on the uncertainty on $\hat{\delta}^{\operatorname{res}}_t$, which lies on a $p^{\operatorname{res}}$-dimensional space.

\begin{theorem} 
    Consider Setting~\ref{set:hybrid}, assume $(\mathcal{S}^{\operatorname{inv}}, \mathcal{S}^{\operatorname{res}})$ and $\beta^{\operatorname{inv}}$ are known and consider the oracle version of Algorithm~\ref{alg:isd_linucb} that uses $\beta^{\operatorname{inv}}$, $U^{\operatorname{inv}}$ and $U^{\operatorname{res}}$ instead of $\hat{\beta}^{\operatorname{inv}}$, $\hat{U}^{\operatorname{inv}}$ and $\hat{U}^{\operatorname{res}}$. Then, the regret of this oracle algorithm over a time horizon $T$ with $\eta=1/T$
    is $\tilde{O}(p^{\operatorname{res}}\sqrt{T})$.
    \label{thm:oracle_regret}
\end{theorem}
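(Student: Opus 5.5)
The plan is to reduce the oracle algorithm to a standard LinUCB instance living in the $p^{\operatorname{res}}$-dimensional space $\mathcal{S}^{\operatorname{res}}$, and then replay the analysis of \citet{abbasi2011improved} there. With $\beta^{\operatorname{inv}}$ and $U$ known, define the residual reward $\tilde R_t\coloneqq R_t^{a_t}-\varphi_t^{\top}\beta^{\operatorname{inv}}$ and the reduced feature $\tilde\varphi_t\coloneqq (U^{\operatorname{res}})^{\top}\varphi_t\in\mathbb{R}^{p^{\operatorname{res}}}$. By \eqref{eq:sep_reward}, and since $\delta^{\operatorname{res}}\in\mathcal{S}^{\operatorname{res}}$ is constant on $[T]$ under Setting~\ref{set:hybrid}, we have $\tilde R_t=\tilde\varphi_t^{\top}\theta+\epsilon_t$ with $\theta\coloneqq (U^{\operatorname{res}})^{\top}\delta^{\operatorname{res}}$. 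Moreover, $\|\tilde\varphi_t\|_2\le L$ and $\|\theta\|_2\le M$, because $U^{\operatorname{res}}$ has orthonormal columns. The estimator $\hat\delta^{\operatorname{res}}_t$ is exactly $U^{\operatorname{res}}\hat\theta_t$, where $\hat\theta_t$ is the ridge estimator for this reduced problem with Gram matrix $\tilde\Sigma^{\operatorname{res}}_{t-1}=\lambda I_{p^{\operatorname{res}}}+\sum_{\tau<t}\tilde\varphi_\tau\tilde\varphi_\tau^{\top}$. For $\delta\in\mathcal{S}^{\operatorname{res}}$, writing $\delta=U^{\operatorname{res}}\theta'$ gives $\|\hat\delta^{\operatorname{res}}_t-\delta\|_{\hat\Sigma_{t-1}}=\|\hat\theta_t-\theta'\|_{\tilde\Sigma^{\operatorname{res}}_{t-1}}$. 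Hence the confidence set $\hat{\mathcal C}^\delta_t$ is the image under $U^{\operatorname{res}}$ of the usual LinUCB ellipsoid. Also, $\varphi(X_t,a)^{\top}(\beta^{\operatorname{inv}}+\delta)$ equals the known offset $\varphi(X_t,a)^{\top}\beta^{\operatorname{inv}}$ plus $\tilde\varphi(X_t,a)^{\top}\theta'$. The oracle algorithm is therefore LinUCB on a $p^{\operatorname{res}}$-dimensional linear bandit whose per-action rewards are shifted by known offsets; these offsets cancel in the optimism argument.

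Next I establish coverage. Applying \eqref{eq:gamma_hat_conf_rad} and Lemma~\ref{lemma:self_norm_bound} in dimension $p^{\operatorname{res}}$ (the noise filtration is unchanged, since $\tilde\varphi_t$ is $\mathcal F_t$-measurable), with probability at least $1-\eta$ we have, for all $t$,
\[
\|\hat\theta_t-\theta\|_{\tilde\Sigma^{\operatorname{res}}_{t-1}}\le \sqrt{\hat\rho^{\operatorname{res}}_t}\coloneqq\sigma\sqrt{2\log(1/\eta)+p^{\operatorname{res}}\log\bigl(1+\tfrac{tL^2}{\lambda p^{\operatorname{res}}}\bigr)}+\sqrt\lambda M.
\]
I take this as the radius, so $\delta^{\operatorname{res}}\in\hat{\mathcal C}^\delta_t$ for all $t$ on that event. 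On this event, optimism yields the bound \eqref{eq:inst_reget_isd} with $\operatorname{reg}^{\operatorname{inv}}_t=0$. Then Cauchy--Schwarz gives $\operatorname{reg}_t\le \min\{2LM,\,2\sqrt{\hat\rho^{\operatorname{res}}_T}\,\|\tilde\varphi_t\|_{(\tilde\Sigma^{\operatorname{res}}_{t-1})^{-1}}\}$, using that $\hat\rho^{\operatorname{res}}_t$ is monotone in $t$.

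To finish, I sum over $t$ and apply Cauchy--Schwarz together with the elliptical potential lemma in dimension $p^{\operatorname{res}}$:
\[
\sum_{t=1}^T\min\{1,\|\tilde\varphi_t\|^2_{(\tilde\Sigma^{\operatorname{res}}_{t-1})^{-1}}\}\le 2\log\frac{\det\tilde\Sigma^{\operatorname{res}}_T}{\det\lambda I_{p^{\operatorname{res}}}}\le 2p^{\operatorname{res}}\log\Bigl(1+\tfrac{TL^2}{\lambda p^{\operatorname{res}}}\Bigr),
\]
where the last inequality is the second part of Lemma~\ref{lemma:self_norm_bound}. This gives regret $O\bigl(\sqrt{T\hat\rho^{\operatorname{res}}_T\,p^{\operatorname{res}}\log T}\bigr)=\tilde O(p^{\operatorname{res}}\sqrt T)$ on the good event. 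On the complementary event, of probability $\eta=1/T$, the regret is at most $2LMT\cdot(1/T)=O(1)$. The expected regret is therefore $\tilde O(p^{\operatorname{res}}\sqrt T)$.

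The main obstacle is the reduction step. I must verify carefully that the regret decomposition is exact, i.e., that the invariant part contributes nothing. This uses that the true $\gamma_{0,t}=\beta^{\operatorname{inv}}+\delta^{\operatorname{res}}$, so the residual model is correctly specified without any bias from omitted invariant features; here Assumption~\ref{ass:isd}(iii) suffices, since $\beta^{\operatorname{inv}}$ is subtracted exactly. I also need the $\hat\Sigma_{t-1}$-norm on $\mathcal{S}^{\operatorname{res}}$ to coincide with the reduced Gram norm, which holds because $\lambda I_p$ restricted to $\mathcal{S}^{\operatorname{res}}$ is $\lambda I_{p^{\operatorname{res}}}$. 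Once these points are checked, everything else is the textbook LinUCB argument with $p$ replaced by $p^{\operatorname{res}}$.
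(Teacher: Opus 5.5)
Your proposal is correct and follows the paper's proof essentially step by step: the same reduction to ridge regression in the $p^{\operatorname{res}}$-dimensional coordinates $(U^{\operatorname{res}})^{\top}\varphi_t$, coverage of $\hat{\mathcal{C}}^{\delta}_t$ via the self-normalized bound with the same radius $\sqrt{\hat{\rho}^{\operatorname{res}}_t}$, and optimism with $\operatorname{reg}^{\operatorname{inv}}_t=0$, followed by Cauchy--Schwarz and the elliptical potential lemma (Lemma~19.4 of Lattimore and Szepesv\'ari) in dimension $p^{\operatorname{res}}$. Your one small addition, converting the high-probability bound into a bound on the expected regret by charging $2LMT\cdot\eta=O(1)$ to the failure event, is a step the paper leaves implicit.
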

The proof of Theorem~\ref{thm:oracle_regret} uses that, under oracle knowledge, $\operatorname{reg}_t=\operatorname{reg}^{\operatorname{res}}_t$, where the dimension of $(U^{\operatorname{res}})^{\top}\varphi(X_t, a_t)$ and of $(U^{\operatorname{res}})^{\top}(\bar{\delta}_t-\delta^{\operatorname{res}}_t)$ is $p^{\operatorname{res}}$.
It then follows similar steps as the regret analysis for the standard LinUCB algorithm introduced in Section~\ref{sec:prelim}. A detailed proof is provided in Appendix~\ref{app:oracle_bound}.

\subsubsection{Estimating the invariant component from offline data}
\label{sec:oracle_subs}

When also considering the uncertainty on the invariant component estimated using $T_0$ observations, we can upper bound the regret of Algorithm~\ref{alg:isd_linucb} as follows. 

\begin{theorem}
    Consider Setting~\ref{set:hybrid}, assume Assumption~\ref{ass:min_eig} holds and that $(\mathcal{S}^{\operatorname{inv}}, \mathcal{S}^{\operatorname{res}})$ is known, and consider the oracle version of Algorithm~\ref{alg:isd_linucb} that uses $U^{\operatorname{inv}}$ and $U^{\operatorname{res}}$ instead of $\hat{U}^{\operatorname{inv}}$ and $\hat{U}^{\operatorname{res}}$. Then, the regret of this oracle algorithm over a time horizon $T$ with $\eta=1/T$ is 
    $\tilde{O}\left(\sqrt{T}\left(p^{\operatorname{res}}+ \Big(\sqrt{\frac{p^{\operatorname{inv}}}{\lambda_0}} + \frac{1}{\lambda_0}\Big)\sqrt{\frac{T}{T_0}}\right)\right)$.
    \label{thm:estimated_beta_inv}
\end{theorem}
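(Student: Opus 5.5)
We start from the decomposition \eqref{eq:inst_reget_isd}, $\operatorname{reg}_t\le \operatorname{reg}^{\operatorname{inv}}_t+\operatorname{reg}^{\operatorname{res}}_t$, which holds on the event $\mathcal{E}$ that both $\beta^{\operatorname{inv}}\in\hat{\mathcal{C}}^{\beta}$ and $\delta^{\operatorname{res}}\in\hat{\mathcal{C}}^{\delta}_t$ for all $t\in[T]$. A union bound with $\eta=1/T$ gives $\mathbb{P}(\mathcal{E}^c)\le 2/T$. On $\mathcal{E}^c$ we use the crude bound $\operatorname{reg}_t\le 2LM$, so this event contributes only $O(1)$ to the total regret. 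The two terms are then summed separately over $t\in[T]$.

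\textbf{Invariant term.} With $U$ known, Assumption~\ref{ass:isd}(ii) removes the bias from the omitted residual features. The estimator \eqref{eq:beta_inv_est} is then an OLS estimator in the $p^{\operatorname{inv}}$-dimensional coordinates $(U^{\operatorname{inv}})^\top\varphi$, with noise $\epsilon_t$ plus a residual part $\varphi_t^\top\delta^{\operatorname{res}}_t$. Applying Lemma~\ref{lemma:self_norm_bound} to $\tilde{\Sigma}^{\operatorname{inv}}_{[-T_0]}$ (with regularization taken as $\lambda_0T_0$ instead of $\lambda$, which is valid by Assumption~\ref{ass:min_eig}) yields a radius $\hat\rho^{\operatorname{inv}}_{T_0}=\tilde O(p^{\operatorname{inv}}+c)$. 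Here $c$ is a contribution controlled by $L,M$ and the residual part; it is $O(1/\lambda_0)$ after the normalization below and does not scale with $p^{\operatorname{inv}}$. By Cauchy--Schwarz,
\[
\operatorname{reg}^{\operatorname{inv}}_t\le \|(U^{\operatorname{inv}})^\top\varphi_t\|_{(\tilde{\Sigma}^{\operatorname{inv}}_{[-T_0]})^{-1}}\,\|\bar\beta_t-\beta^{\operatorname{inv}}\|_{\hat\Sigma_{[-T_0]}}\le \frac{L}{\sqrt{\lambda_0T_0}}\cdot 2\sqrt{\hat\rho^{\operatorname{inv}}_{T_0}}.
\]
Summing over $T$ rounds gives $\tilde O\big(T\sqrt{\hat\rho^{\operatorname{inv}}_{T_0}/(\lambda_0T_0)}\big)$. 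Splitting $\sqrt{\hat\rho^{\operatorname{inv}}}$ into the $\sqrt{p^{\operatorname{inv}}}$ part and the remaining part produces the $\sqrt{T}\,(\sqrt{p^{\operatorname{inv}}/\lambda_0}+1/\lambda_0)\sqrt{T/T_0}$ terms.

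\textbf{Residual term.} The residual responses are $R_t-\varphi_t^\top\hat\beta^{\operatorname{inv}}=\varphi_t^\top\delta^{\operatorname{res}}+\epsilon_t+\varphi_t^\top(\beta^{\operatorname{inv}}-\hat\beta^{\operatorname{inv}})$. The last summand is a bias that is independent of the online noise. Using the ridge formula and the triangle inequality as in \eqref{eq:gamma_hat_conf_rad}, the confidence radius is bounded by three terms:
\begin{itemize}
\item a self-normalized noise term, which is $\tilde O(\sqrt{p^{\operatorname{res}}})$ by Lemma~\ref{lemma:self_norm_bound} in dimension $p^{\operatorname{res}}$;
\item the regularization term $\sqrt{\lambda}M$;
\item a bias term $\big\|\sum_\tau (U^{\operatorname{res}})^\top\varphi_\tau\varphi_\tau^\top(\beta^{\operatorname{inv}}-\hat\beta^{\operatorname{inv}})\big\|_{(\tilde\Sigma^{\operatorname{res}}_{t-1})^{-1}}$.
\end{itemize}
The bias term is at most $\sqrt{t}\,L\,\|\beta^{\operatorname{inv}}-\hat\beta^{\operatorname{inv}}\|_2$, and $\|\beta^{\operatorname{inv}}-\hat\beta^{\operatorname{inv}}\|_2\le\sqrt{\hat\rho^{\operatorname{inv}}/(\lambda_0T_0)}$. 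This gives $\sqrt{\hat\rho^{\operatorname{res}}_t}=\tilde O\big(\sqrt{p^{\operatorname{res}}}+\sqrt{T\hat\rho^{\operatorname{inv}}/(\lambda_0T_0)}\big)$.

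The standard LinUCB argument then gives
\[
\sum_t\operatorname{reg}^{\operatorname{res}}_t\le 2\max_t\sqrt{\hat\rho^{\operatorname{res}}_t}\sqrt{T\sum_t\min\big(1,\|(U^{\operatorname{res}})^\top\varphi_t\|^2_{(\tilde\Sigma^{\operatorname{res}}_{t-1})^{-1}}\big)}.
\]
The elliptical potential lemma (the second part of Lemma~\ref{lemma:self_norm_bound}) bounds the sum inside the square root by $\tilde O(p^{\operatorname{res}})$, so this term is $\tilde O\big(p^{\operatorname{res}}\sqrt{T}+\sqrt{p^{\operatorname{res}}}\,T\sqrt{\hat\rho^{\operatorname{inv}}/(\lambda_0T_0)}\big)$.

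\textbf{Main obstacle.} The hardest step is this last cross term. As written it carries an extra $\sqrt{p^{\operatorname{res}}}$ factor that the stated bound does not contain. I would try to avoid it by not propagating the bias into the radius at all. Instead, I would split $\hat\delta_t^{\operatorname{res}}-\delta^{\operatorname{res}}$ into a ridge estimator driven by the true residual response, plus the linear map $(\tilde\Sigma^{\operatorname{res}}_{t-1})^{-1}\sum_\tau(U^{\operatorname{res}})^\top\varphi_\tau\varphi_\tau^\top(\beta^{\operatorname{inv}}-\hat\beta^{\operatorname{inv}})$. Then $\varphi_t^\top$ of this map is bounded directly in Euclidean norm by a constant times $L^2\|\beta^{\operatorname{inv}}-\hat\beta^{\operatorname{inv}}\|_2/\lambda_{\min}$, which gives a per-round error of the same order as $\operatorname{reg}_t^{\operatorname{inv}}$. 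Controlling $\lambda_{\min}(\tilde\Sigma^{\operatorname{res}}_{t-1})$ relative to the feature cross terms is where the details are delicate. If this fails, I would accept the weaker bound and absorb the factor into $\tilde O$ under the regime $T\le T_0$.
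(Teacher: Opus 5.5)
Your main line is the paper's proof. It uses the same split $\operatorname{reg}_t\le\operatorname{reg}^{\operatorname{inv}}_t+\operatorname{reg}^{\operatorname{res}}_t$, and the same invariant radius from Lemma~\ref{lemma:self_norm_bound} with regularization $\lambda_0T_0$ (the paper goes through $\tilde{\Sigma}^{\operatorname{inv}}_{[-T_0]}\succeq\frac{1}{2}(\lambda_0T_0I_{p^{\operatorname{inv}}}+\tilde{\Sigma}^{\operatorname{inv}}_{[-T_0]})$ and AM--GM on the determinant). It then converts to $\|\hat{\beta}^{\operatorname{inv}}-\beta^{\operatorname{inv}}\|_2\le\sqrt{\hat{\rho}^{\operatorname{inv}}_{T_0}/(\lambda_0T_0)}$, folds the $\hat{\beta}^{\operatorname{inv}}$-bias into the residual radius, and finishes with the elliptical potential.

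Your bias bound is cleaner than the paper's. With $w_\tau=\varphi_\tau^{\top}(\beta^{\operatorname{inv}}-\hat{\beta}^{\operatorname{inv}})$ one has $\|\sum_{\tau}(U^{\operatorname{res}})^{\top}\varphi_\tau w_\tau\|_{(\tilde{\Sigma}^{\operatorname{res}}_{t-1})^{-1}}\le(\sum_\tau w_\tau^2)^{1/2}$ deterministically. The paper instead uses $\|(\tilde{\Sigma}^{\operatorname{res}}_{t-1})^{-1/2}\|_{\operatorname{op}}\le\lambda^{-1/2}$ plus matrix Hoeffding on the online cross-covariance, which assumes an independence that adaptively chosen actions do not have. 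Two points, however, are not in order.

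\textbf{The residual contribution $c$ to $\hat{\rho}^{\operatorname{inv}}_{T_0}$ is asserted, not proved.}
\begin{itemize}
\item Assumption~\ref{ass:isd}(ii) removes this contribution only in expectation. In the sample, $\sum_{t\in[-T_0]}(U^{\operatorname{inv}})^{\top}\varphi_t\varphi_t^{\top}\delta^{\operatorname{res}}_t\neq 0$.
\item The trivial bound $T_0L^2M$ on this sum gives only $\|\hat{\beta}^{\operatorname{inv}}-\beta^{\operatorname{inv}}\|_2\lesssim L^2M/\lambda_0$, which does not decay in $T_0$ at all.
\item To get $c=O(1/\lambda_0)$ you need the sum to be $O(L^2M\sqrt{T_0\log(p^{\operatorname{inv}}/\eta)})$. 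The paper gets this from matrix Hoeffding applied to the self-adjoint dilations of $\alpha_t=(U^{\operatorname{inv}})^{\top}\varphi_t\varphi_t^{\top}U^{\operatorname{res}}(U^{\operatorname{res}})^{\top}\delta^{\operatorname{res}}_t$, which have mean zero by the ISD.
\end{itemize}
This concentration step is exactly where the $1/\lambda_0$ term of the statement comes from.

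\textbf{The $\sqrt{p^{\operatorname{res}}}$ factor.} Your diagnosis is correct, and the paper does not overcome it either. Its final display contains $C^{\operatorname{res}}\sqrt{Tp^{\operatorname{res}}\log(1+TL^2/(\lambda p^{\operatorname{res}}))}\cdot 2L^2\sqrt{2T\hat{\rho}^{\operatorname{inv}}_{T_0}\log((p+1)/\eta)/(\lambda\lambda_0T_0)}$, which is your cross term, and the $\sqrt{p^{\operatorname{res}}}$ is simply dropped in the last line. Neither of your escape routes closes the gap.

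For (a), nothing in the hypotheses bounds $\tilde{\Sigma}^{\operatorname{res}}_{t-1}$ below by more than $\lambda I$. Assumption~\ref{ass:min_eig} concerns only offline data, and LinUCB need not explore.
\begin{itemize}
\item Under $\tilde{\Sigma}^{\operatorname{res}}_{t-1}\succeq\lambda I$ alone, the map $(\tilde{\Sigma}^{\operatorname{res}}_{t-1})^{-1}\sum_{\tau}(U^{\operatorname{res}})^{\top}\varphi_\tau\varphi_\tau^{\top}U^{\operatorname{inv}}$ is only bounded by $L\sqrt{t}/(2\sqrt{\lambda})$. This is attained when a barely explored residual direction is correlated in-sample with the invariant features.
\item The per-round bias error is then up to $\sqrt{t}$ times $\operatorname{reg}^{\operatorname{inv}}_t$, not of the same order.
\item Taking the bias out of the radius also changes the algorithm. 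With the bias inside $\hat{\rho}^{\operatorname{res}}_t$, the exploration bonus is inflated, and the optimistic analysis necessarily pays for it.
\item Without the bias, $\hat{\mathcal{C}}^{\delta}_t$ no longer covers $\delta^{\operatorname{res}}_t$, so optimism must be re-argued for a shifted target.
\end{itemize}

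For (b), $\tilde{O}$ hides only polylogarithmic factors, and $T\le T_0$ is not enough. Set $X=(\sqrt{p^{\operatorname{inv}}/\lambda_0}+1/\lambda_0)\sqrt{T/T_0}$. Then $\sqrt{p^{\operatorname{res}}}X=O(p^{\operatorname{res}}+X)$ only when $X=O(\sqrt{p^{\operatorname{res}}})$, that is, when $T_0\gtrsim T(p^{\operatorname{inv}}/\lambda_0+1/\lambda_0^2)/p^{\operatorname{res}}$.

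What your argument, like the paper's, actually proves is $\tilde{O}(\sqrt{T}(p^{\operatorname{res}}+\sqrt{p^{\operatorname{res}}}\,X))$. This is consistent with Theorem~\ref{thm:full_regret_bound}, which keeps a $p^{\operatorname{res}}$ factor on that term. The statement as written needs either that regime condition or a genuinely different treatment of the cross term.
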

Theorem~\ref{thm:estimated_beta_inv} implies that, whenever $T_0$ is sufficiently larger than $T$, we have an advantage in using offline data to estimate invariant information in the regret function over only relying on online data. Indeed, the larger $T_0$ is, the closer we get to the oracle bound shown in Section~\ref{sec:oracle_subs_beta} of $\tilde{O}(p^{\operatorname{res}}\sqrt{T})$. 

\begin{proof}[Proof sketch of Theorem~\ref{thm:estimated_beta_inv}]
 To bound $\operatorname{reg}^{\operatorname{inv}}_t$ we
 choose $\hat{
 \rho}^{\operatorname{inv}}_{T_0}(\eta, L, M)$ such that
 $\hat{\mathcal{C}}^{\beta}$ contains $\beta^{\operatorname{inv}}$ with probability at least $1-\eta$. 
By Assumption~\ref{ass:min_eig}, we have that $\hat{\Sigma}_{[-T_0]}\succeq \lambda_0T_0 I_p$ (with $\succeq$ denoting the Loewner order between two matrices) which implies that $\tilde{\Sigma}^{\operatorname{inv}}_{[-T_0]}\succeq \lambda_0T_0 I_{p^{\operatorname{inv}}}$. It follows that, for all $\beta\in\hat{\mathcal{C}}^{\beta}$,
\begin{align}
    \|\hat{\beta}^{\operatorname{inv}}-\beta\|_{\hat{\Sigma}_{[-T_0]}}^2= & \|(\tilde{\Sigma}^{\operatorname{inv}}_{[-T_0]})^{\frac{1}{2}}(U^{\operatorname{inv}})^{\top}(\hat{\beta}^{\operatorname{inv}}-\beta)\|_2^2 \notag\\
    \ge &
    \|\sqrt{\lambda_0T_0}(U^{\operatorname{inv}})^{\top}(\hat{\beta}^{\operatorname{inv}}-\beta)\|_2^2\notag \\
     = &
     \lambda_0T_0 \|(U^{\operatorname{inv}})^{\top}(\hat{\beta}^{\operatorname{inv}}-\beta)\|_2^2. \label{eq:sigma_2_norm_ineq}
\end{align}
Finally, this implies that
\begin{align*}
    \|(U^{\operatorname{inv}})^{\top}(\hat{\beta}^{\operatorname{inv}}-\beta)\|_2^2 \le \frac{1}{\lambda_0T_0}\hat{\rho}^{\operatorname{inv}}_{T_0}(\eta, L, M).
\end{align*}
Therefore, with probability at least $1-\eta$, 
\begin{align*}
    \operatorname{reg}_t^{\operatorname{inv}}\le 2 \|(U^{\operatorname{inv}})^{\top}\varphi(X_t, a_t)\|_2 \sqrt{\frac{\hat{\rho}^{\operatorname{inv}}_{T_0}(\eta, L, M)}{\lambda_0T_0}}.
\end{align*}
By Lemma~\ref{lemma:self_norm_bound} and Assumption~\ref{ass:min_eig}, we can define $\sqrt{\hat{\rho}^{\operatorname{inv}}_{T_0}(\eta, L, M)}$ to be $\tilde{O}(\sqrt{p^{\operatorname{inv}}}+ \sqrt{\frac{1}{\lambda_0}})$.  
The analysis of $\operatorname{reg}^{\operatorname{res}}_t$ is similar to Theorem~\ref{thm:oracle_regret}, with the addition of a term due to the introduction of $\hat{\beta}^{\operatorname{inv}}$ which is $\tilde{O}(\sqrt{{T\hat{\rho}^{\operatorname{inv}}_{T_0}(\eta, L, M)}/{(\lambda_0T_0})})$.
This implies, summing over $t\in[T]$, that the cumulative regret is $\tilde{O}\left(\sqrt{T}\left(p^{\operatorname{res}}+ \sqrt{\frac{p^{\operatorname{inv}}T}{\lambda_0 T_0}} + \frac{1}{\lambda_0}\sqrt{\frac{T}{T_0}}\right)\right)$.  For the full proof, see Appendix~\ref{sec:proof_oracle_subs}.
\end{proof}

\subsection{Accounting for errors in the subspace decomposition}
\label{sec:sub_dec_error}
When estimating $(\mathcal{S}^{\operatorname{inv}}, \mathcal{S}^{\operatorname{res}})$ using the available $T_0$ observations, we need to consider two sources of errors. \citet{lazzaretto2025invariant} propose to estimate the subspaces via joint block diagonalization, in our case of the sample covariance matrices of the context-action features through time. The procedure first finds an irreducible joint decomposition of the features into orthogonal lower dimensional subspaces such that the projections of the features onto the subspaces are pairwise uncorrelated. Then, it groups together subspaces in which the linear relationship between the reward and the features is invariant, defining $\mathcal{S}^{\operatorname{inv}}$, and the remaining ones in which the linear relationship is time-varying, defining $\mathcal{S}^{\operatorname{res}}$. 

The first source of error can occur when some directions in $\mathbb{R}^p$ are wrongly identified as invariant or time-varying. Labeling directions as time-varying when they are invariant is less problematic, as it leads to a suboptimal but not incorrect algorithm: in this case, $p^{\operatorname{inv}}$ is underestimated, preventing the decomposition from achieving its maximum benefit.
Overestimating $p^{\operatorname{inv}}$, i.e., assuming invariance of time-varying directions, implies instead that in such directions the algorithm uses the pooled estimated parameter from the $T_0$ historical observations to predict the reward, while the true parameter may have changed. Then, intuitively, ISD-linUCB suffers an additional loss scaling with how different the true parameter is in $[T]$ compared to its average in $[-T_0]$ in the wrongly labeled subspace.

The second source of error is the estimation error due to finite sample approximation. 
To estimate the matrix $U$ we first estimate $(\operatorname{Var}(\varphi(X_t, a_t)))_{t\in[-T_0]}$; in practice, it is sufficient to split $[-T_0]$ into $m$ windows and compute $\hat{\Sigma}_{\mathcal{M}_i}$, defined as in \eqref{eq:hist_sample_cov} with $\mathcal{M}_i\coloneqq\{-\frac{i}{m}T_0, \dots, -\frac{i-1}{m}T_0-1\}$, $i\in\{1, \dots, m\}$. Then, $\hat{U}$ is an estimated joint block diagonalizer for $(\hat{\Sigma}_{\mathcal{M}_i})_{i\in\{1,\dots,m\}}$.
The matrix $\hat{U}$ then enters in the estimation of both $\beta^{\operatorname{inv}}$ and  ${\delta}^{\operatorname{res}}_t$. 
Let $\hat{\gamma}^{\operatorname{ISD}} = \hat{\beta}^{\operatorname{inv}}+\hat{\delta}^{\operatorname{res}}_t$ be an estimator for $\gamma_{0,t}$ obtained using the ISD framework.
For all $t\in[T]$, we can express the estimation error as 
\begin{align}
    \hat{\gamma}^{\operatorname{ISD}}_t - \gamma_{0,t} & = \hat{\beta}^{\operatorname{inv}} - \beta^{\operatorname{inv}} + \hat{\delta}^{\operatorname{res}}_t - \delta^{\operatorname{res}}_t \notag \\
    & = (\hat{\beta}^{\operatorname{inv}} - \hat{\Pi}^{\mathcal{S}^{\operatorname{inv}}}\beta^{\operatorname{inv}})
    + (\hat{\Pi}^{\mathcal{S}^{\operatorname{inv}}} - \Pi^{\mathcal{S}^{\operatorname{inv}}})\beta^{\operatorname{inv}} \notag\\
    &\quad + (\hat{\delta}^{\operatorname{res}}_t - \hat{\Pi}^{\mathcal{S}^{\operatorname{res}}}\delta^{\operatorname{res}}_t) 
    + (\hat{\Pi}^{\mathcal{S}^{\operatorname{res}}} - \Pi^{\mathcal{S}^{\operatorname{res}}})\delta^{\operatorname{res}}_t.
    \label{eq:gamma_hat_isd_split}
\end{align}
The first and third term correspond to the estimation error for the invariant and residual component on the estimated subspaces. The second and fourth term only depend on the subspaces estimation error. 
When bounding the regret of Algorithm~\ref{alg:isd_linucb}, we assume that the subspace decomposition is correctly estimated and only take into account the finite sample error in estimating $(\mathcal{S}^{\operatorname{inv}}, \mathcal{S}^{\operatorname{res}})$, starting from the decomposition in~\eqref{eq:gamma_hat_isd_split}.

\subsection{Complete regret analysis}
\label{sec:complete_reg_analysis}
To analyze the terms in \eqref{eq:gamma_hat_isd_split} we need to quantify the subspace estimation error. A common way to do this is to consider
the distance between the true and the estimated subspace, which can be described through the notion of principal angles.  
Let $U^{\mathcal{S}}\in\mathbb{R}^{p\times k}$ be the submatrix of $U$ whose columns span the subspace $\mathcal{S}$ of $\mathbb{R}^p$, and let $\hat{U}^{\mathcal{S}}$ be its estimate obtained through $\hat{U}$, whose columns span $\hat{\mathcal{S}}$. The principal angles between $\mathcal{S}$ and $\hat{\mathcal{S}}$ are characterized as the inverse cosine of the nonzero singular values of the matrix $(\hat{U}^{\mathcal{S}})^{\top}U^{\mathcal{S}}$ (intuitively, the cosine of the angle between two vectors is given by their normalized inner product). Let $\Theta^{\mathcal{S}}$ denote the diagonal matrix with the principal angles between $\mathcal{S}$ and $\hat{\mathcal{S}}$ on the diagonal. 
Then, we measure the distance between $\mathcal{S}$ and $\hat{\mathcal{S}}$ by $\|\sin \Theta^{\mathcal{S}}\|_{\operatorname{op}}$, i.e., the largest principal angle between $\mathcal{S}$ and $\hat{\mathcal{S}}$ \citep[see for example Theorem 4.5 by][]{stewart1990matrixperturbation}, which also equals $\|\hat{\Pi}^{\mathcal{S}}-\Pi^{\mathcal{S}}\|_{\operatorname{op}}$ \citep[Corollary 4.6]{stewart1990matrixperturbation}. We denote by $\Delta \Pi$ such distance for the invariant and residual subspaces (this must be the same for both subspaces, see \citet[Preliminaries---Corollary 5.4]{stewart1990matrixperturbation}), and introduce the following assumption for the subspace decomposition error.
\begin{assumption}
\sloppy
    For all $\eta\in(0,1)$, it holds with probability at least $1-\eta$ that $\Delta\Pi$ is 
    $O(\sqrt{\log (p/\eta) /T_0})$.
    \label{ass:subspace_dec_error}
\end{assumption}
Assumption~\ref{ass:subspace_dec_error} can be justified using the Davis-Kahan theorem \citep[see, for example,][]{yu2015useful} and by concentration results for sample covariance matrices \citep[see, for example,][Sections 5.4 and 5.6]{vershynin2018high}. 
In more detail, the Davis-Kahan theorem provides an upper bound for $\|\sin \Theta^{\mathcal{S}}\|_{\operatorname{op}}$ in terms of estimation error on a matrix $M\in\mathbb{R}^{p\times p}$ such that the columns of $U$ are eigenvectors for $M$. In our case, $U$ jointly block diagonalizes $(\operatorname{Var}(\varphi(X_t, a_t)))_{t\in[-T_0]}$, and for all $t\in[-T_0]$ the span of the columns of $U^{\operatorname{inv}}$ and $U^{\operatorname{res}}$ coincide with the union of subsets of eigenspaces of $\operatorname{Var}(\varphi(X_t, a_t))$. We estimate $\hat{U}$ through the matrices $(\hat{\Sigma}_{\mathcal{M}_i})_{i=1,\dots,m}$, each computed using $T_0/m$ observations. As $\operatorname{Var}(\varphi(X_t, a_t))$ is not constant with $t$, we cannot use exact concentration results; with Assumption~\ref{ass:subspace_dec_error}, we assume that $\Delta\Pi$ scales as the finite-sample error for estimating a fixed $p$-dimensional covariance matrix from $T_0$ observations (we verify it empirically in Appendix~\ref{sec:projection_err}). 

The sum of the second and fourth term in \eqref{eq:gamma_hat_isd_split} equals $\Delta\Pi\gamma_{0,t}$. Under Assumption~\ref{ass:subspace_dec_error}, this quantity is, with probability at least $1-\eta$, $O(\sqrt{\log(p/\eta)/T_0})$. Moreover,
under Assumption~\ref{ass:subspace_dec_error}, we can show the following high probability bound for the estimation error on $\hat{\beta}^{\operatorname{inv}}$ in the estimated invariant subspace. 
\begin{lemma} Consider Setting~\ref{set:hybrid} and assume Assumptions~\ref{ass:min_eig} and \ref{ass:subspace_dec_error} hold. Then, for all $\eta\in(0, 1)$ it holds with probability at least $1-\eta$ that $\|\hat{\beta}^{\operatorname{inv}}-\hat{\Pi}^{\mathcal{S}^{\operatorname{inv}}}\beta^{\operatorname{inv}}\|_{\hat{\Sigma}_{[-T_0]}}$ is
    \begin{align*}
        & O\left(\sqrt{p^{\operatorname{inv}}\log(\tfrac{1}{p^{\operatorname{inv}}\lambda_0})+\log(\tfrac{1}{\eta})}\right)
        +  O\left(\sqrt{p^{\operatorname{inv}}\log(\tfrac{p}{\eta})}\right) 
        +  O\left(\sqrt{\tfrac{1}{\lambda_0}\log(\tfrac{p}{\eta})}\right).
    \end{align*}
    \label{lem:beta_inv_bound}
\end{lemma}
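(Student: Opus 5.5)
I would start by writing the offline rewards through the decomposition \eqref{eq:sep_reward}, $R_t^{a_t}=\varphi_t^\top\beta^{\operatorname{inv}}+\varphi_t^\top\delta^{\operatorname{res}}_t+\epsilon_t$ for $t\in[-T_0]$. I would then split $\beta^{\operatorname{inv}}=\hat\Pi^{\mathcal{S}^{\operatorname{inv}}}\beta^{\operatorname{inv}}+(I-\hat\Pi^{\mathcal{S}^{\operatorname{inv}}})\beta^{\operatorname{inv}}$ and note that $\hat\Pi^{\mathcal{S}^{\operatorname{inv}}}\beta^{\operatorname{inv}}=\hat U^{\operatorname{inv}}(\hat U^{\operatorname{inv}})^\top\beta^{\operatorname{inv}}$. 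Substituting into \eqref{eq:beta_inv_est}, the OLS structure gives
\begin{align*}
\hat\beta^{\operatorname{inv}}-\hat\Pi^{\mathcal{S}^{\operatorname{inv}}}\beta^{\operatorname{inv}}
=\hat U^{\operatorname{inv}}(\tilde\Sigma^{\operatorname{inv}}_{[-T_0]})^{-1}(\hat U^{\operatorname{inv}})^\top\sum_{t\in[-T_0]}\varphi_t\big(\epsilon_t+\varphi_t^\top(\hat\Pi^{\mathcal{S}^{\operatorname{res}}}\beta^{\operatorname{inv}})+\varphi_t^\top\delta^{\operatorname{res}}_t\big).
\end{align*}
Here I use that $I-\hat\Pi^{\mathcal{S}^{\operatorname{inv}}}=\hat\Pi^{\mathcal{S}^{\operatorname{res}}}$. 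The $\hat\Sigma_{[-T_0]}$-norm of a vector $\hat U^{\operatorname{inv}}v$ equals $\|v\|_{\tilde\Sigma^{\operatorname{inv}}_{[-T_0]}}$, so each term becomes a self-normalized quantity of the form $\|\sum_t(\hat U^{\operatorname{inv}})^\top\varphi_t\,\xi_t\|_{(\tilde\Sigma^{\operatorname{inv}}_{[-T_0]})^{-1}}$. The three pieces should match the three $O(\cdot)$ terms in the statement.

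\emph{Noise term.} I apply Lemma~\ref{lemma:self_norm_bound} in dimension $p^{\operatorname{inv}}$ with features $(\hat U^{\operatorname{inv}})^\top\varphi_t$. The lemma is stated with a ridge $\lambda I$, so I take $\lambda=\lambda_0T_0/2$ and use Assumption~\ref{ass:min_eig} to get $\tilde\Sigma^{\operatorname{inv}}\succeq\lambda_0T_0I$. Then $\tilde\Sigma^{\operatorname{inv}}+\lambda I\preceq 2\tilde\Sigma^{\operatorname{inv}}$, so the unregularized norm is at most a constant times the regularized one. The log-determinant term becomes $p^{\operatorname{inv}}\log(1+T_0L^2/(\lambda p^{\operatorname{inv}}))=p^{\operatorname{inv}}\log(1+2L^2/(\lambda_0p^{\operatorname{inv}}))$. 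This yields the first term $\sqrt{p^{\operatorname{inv}}\log(1/(p^{\operatorname{inv}}\lambda_0))+\log(1/\eta)}$. A subtlety is that $\hat U^{\operatorname{inv}}$ depends on the offline features. I would handle this by conditioning on the features, or by noting that $\hat U$ is computed from covariances only, so the features remain predictable.

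\emph{Subspace-error bias term.} Here $\|\hat\Pi^{\mathcal{S}^{\operatorname{res}}}\beta^{\operatorname{inv}}\|_2=\|(\hat\Pi^{\mathcal{S}^{\operatorname{res}}}-\Pi^{\mathcal{S}^{\operatorname{res}}})\beta^{\operatorname{inv}}\|_2\le\Delta\Pi\, M$, because $\Pi^{\mathcal{S}^{\operatorname{res}}}\beta^{\operatorname{inv}}=0$. The term is the projection $P=X^{\operatorname{inv}}(X^{\operatorname{inv}\top}X^{\operatorname{inv}})^{-1}X^{\operatorname{inv}\top}$ applied to the vector $(\varphi_t^\top b)_t$ with $b=\hat\Pi^{\mathcal{S}^{\operatorname{res}}}\beta^{\operatorname{inv}}$. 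Bounding it crudely by $\|Xb\|_2\le\sqrt{T_0}L\,\Delta\Pi M$ gives $O(\sqrt{\log(p/\eta)})$ by Assumption~\ref{ass:subspace_dec_error}. To recover the $\sqrt{p^{\operatorname{inv}}}$ factor I would instead bound the $p^{\operatorname{inv}}$ coordinates of $(\hat U^{\operatorname{inv}})^\top\sum_t\varphi_t\varphi_t^\top b$ through $\|\cdot\|_{(\tilde\Sigma^{\operatorname{inv}})^{-1}}\le(\lambda_0T_0)^{-1/2}\sqrt{p^{\operatorname{inv}}}\max_j|\cdot|$. Either route fits within the second term.

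\emph{Residual leakage term.} $\sum_t(\hat U^{\operatorname{inv}})^\top\varphi_t\varphi_t^\top\delta^{\operatorname{res}}_t$ vanishes in expectation with the true $U^{\operatorname{inv}}$, by Assumption~\ref{ass:isd}(ii) together with $\delta^{\operatorname{res}}_t\in\mathcal{S}^{\operatorname{res}}$, assuming centred features. What remains is (a) a sample-covariance fluctuation, which I would model as a martingale concentration of order $\sqrt{T_0\log(p/\eta)}$, and (b) the part caused by $\hat U^{\operatorname{inv}}\ne U^{\operatorname{inv}}$, which is of size $T_0L^2M\,\Delta\Pi=O(\sqrt{T_0\log(p/\eta)})$. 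Dividing by $\sqrt{\lambda_0T_0}$ gives the third term $\sqrt{\log(p/\eta)/\lambda_0}$. I expect this term to be the main obstacle. The orthogonality in Assumption~\ref{ass:isd}(ii) holds only at the population level per round, so the cross-covariance fluctuation has to be controlled with non-identically distributed data. I would first try to treat $\sum_t U^{\operatorname{inv}\top}(\varphi_t\varphi_t^\top-\mathbb{E}[\cdot])\delta_t$ as a bounded martingale, applying Azuma–Hoeffding coordinatewise and then a union bound over $p$ coordinates. Finally, I would union-bound the three events, rescaling $\eta$ to $\eta/3$, to obtain the statement.
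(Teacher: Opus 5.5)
Your decomposition matches the paper's. It has a noise term, a subspace-bias term driven by $(\Pi^{\mathcal{S}^{\operatorname{inv}}}-\hat{\Pi}^{\mathcal{S}^{\operatorname{inv}}})\beta^{\operatorname{inv}}=\hat{\Pi}^{\mathcal{S}^{\operatorname{res}}}\beta^{\operatorname{inv}}$, and a leakage term. Using $I_p=\Pi^{\mathcal{S}^{\operatorname{inv}}}+\Pi^{\mathcal{S}^{\operatorname{res}}}$, the leakage term splits into $(\hat{U}^{\operatorname{inv}})^{\top}U^{\operatorname{inv}}\sum_t\alpha_t$, with $\alpha_t=(U^{\operatorname{inv}})^{\top}\varphi_t\varphi_t^{\top}\delta^{\operatorname{res}}_t$, and a piece carrying $\|(\hat{U}^{\operatorname{inv}})^{\top}U^{\operatorname{res}}\|_{\operatorname{op}}\le\Delta\Pi$.

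The step that does not work as proposed is the concentration of $\sum_t\alpha_t$. After $\|\cdot\|_{(\tilde{\Sigma}^{\operatorname{inv}}_{[-T_0]})^{-1}}\le(\lambda_0T_0)^{-1/2}\|\cdot\|_2$, you need the Euclidean norm of this $p^{\operatorname{inv}}$-dimensional sum. Coordinatewise Azuma--Hoeffding with a union bound only controls its largest coordinate, at $O(L^2M\sqrt{T_0\log(p/\eta)})$. The almost-sure bound $L^2M$ must be used for every coordinate, so passing to $\|\cdot\|_2$ costs a factor $\sqrt{p^{\operatorname{inv}}}$. You would end with $O(\sqrt{p^{\operatorname{inv}}\log(p/\eta)/\lambda_0})$, not the third term of the lemma.

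The missing ingredient is a vector-valued inequality that uses $\|\alpha_t\|_2\le L^2M$ directly. The paper applies matrix Hoeffding to the self-adjoint dilation $\mathscr{S}(\alpha_t)$ of size $p^{\operatorname{inv}}+1$, which gives $\|\sum_t\alpha_t\|_2\le 2L^2M\sqrt{2T_0\log((p^{\operatorname{inv}}+1)/\eta)}$. In your martingale framing, the matrix Azuma inequality of \citet{tropp2012user} applied to the same dilation gives the same rate. The martingale version does require $\alpha_t$ to have zero mean conditionally on the past, i.e.\ Assumption~\ref{ass:isd}(ii) for the conditional law of the logged features. The paper instead simply treats the $\alpha_t$ as independent with $\mathbb{E}[\alpha_t]=0$.

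Two smaller points. For the bias term, let $P$ be the orthogonal projection onto the column span of $\mathbf{X}_{T_0}\hat{U}^{\operatorname{inv}}$. Your bound $\|P\mathbf{X}_{T_0}b\|_2\le\|\mathbf{X}_{T_0}b\|_2\le\sqrt{T_0}\,LM\Delta\Pi$ is correct. It is sharper than the paper's Cauchy--Schwarz step, which yields $\sqrt{p^{\operatorname{inv}}T_0}\,LM\Delta\Pi$. So there is nothing to recover, and the second term of the lemma in fact improves to $O(\sqrt{\log(p/\eta)})$. Drop your alternative coordinate route: it gives $\sqrt{p^{\operatorname{inv}}T_0/\lambda_0}\,L^2M\Delta\Pi$, whose extra $\lambda_0^{-1/2}$ fits neither the second nor the third term.

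For the noise term, your argument with ridge $\lambda_0T_0/2$ is the paper's (which uses $\lambda_0T_0$). However, your second justification of measurability is not valid. $\hat{U}^{\operatorname{inv}}$ is a function of the entire offline feature sequence, so $(\hat{U}^{\operatorname{inv}})^{\top}\varphi_t$ is not predictable. Conditioning on all features is legitimate when the logging policy ignores past rewards, as in the simulations. The paper itself just asserts that replacing $U^{\operatorname{inv}}$ by $\hat{U}^{\operatorname{inv}}$ leaves this analysis unchanged.
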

\begin{proof}[Proof sketch]
We can decompose the error into three terms as follows
\begin{align*}
     \|\hat{\beta}^{\operatorname{inv}} - \hat{\Pi}^{\mathcal{S}^{\operatorname{inv}}}\beta^{\operatorname{inv}}\|_{\hat{\Sigma}_{[-T_0]}}   
     \le & \|(\hat{U}^{\operatorname{inv}})^{\top}\hat{\Sigma}_{[-T_0]}(\Pi^{\mathcal{S}^{\operatorname{inv}}}-\hat{\Pi}^{\mathcal{S}^{\operatorname{inv}}})\beta^{\operatorname{inv}}\|_{(\tilde{\Sigma}^{\operatorname{inv}}_{[-T_0]})^{-1}} \\
     + &\|(\hat{U}^{\operatorname{inv}})^{\top}\sum_{t=-T_0}^{-1} \varphi(X_t, a_t)\varphi(X_t, a_t)^{\top}\delta^{\operatorname{res}}_t\|_{(\tilde{\Sigma}^{\operatorname{inv}}_{[-T_0]})^{-1}} \\
     + &\|(\hat{U}^{\operatorname{inv}})^{\top}\sum_{t=-T_0}^{-1} \varphi(X_t, a_t)\epsilon_t\|_{(\tilde{\Sigma}^{\operatorname{inv}}_{[-T_0]})^{-1}}.
\end{align*}
The last term can be analyzed as in the oracle case (Theorem~\ref{thm:estimated_beta_inv}).
The first two terms are introduced due to the misalignment between the true and the estimated subspaces. Under Assumption~\ref{ass:subspace_dec_error}, for all $\eta\in(0,1)$, with probability at least $1-\eta$, the first is $O(\sqrt{p^{\operatorname{inv}}\log(\frac{p}{\eta})})$ and the second is $O(\sqrt{\frac{1}{\lambda_0}\log(\frac{p}{\eta})})$. For the full proof, see Appendix~\ref{sec:proof_full_isd}.
\end{proof}
We can further obtain an upper bound for $\|\hat{\beta}^{\operatorname{inv}} - \hat{\Pi}^{\mathcal{S}^{\operatorname{inv}}}\beta^{\operatorname{inv}}\|_2$ by multiplying all the factors in Lemma~\ref{lem:beta_inv_bound} by $\frac{1}{\sqrt{\lambda_0T_0}}$ (see \eqref{eq:sigma_2_norm_ineq}).

For the residual component, we can similarly bound $\|\hat{\delta}^{\operatorname{res}}_t-\hat{\Pi}^{\mathcal{S}^{\operatorname{res}}}\delta^{\operatorname{res}}_t\|_{\hat{\Sigma}_{t-1}}$ using Assumption~\ref{ass:subspace_dec_error} as follows.
\begin{lemma}
    Consider Setting~\ref{set:hybrid} and assume Assumptions~\ref{ass:min_eig} and \ref{ass:subspace_dec_error} hold. Then, for all $t\in[T]$ and for all $\eta\in(0,1)$ it holds with probability at least $1-\eta$ that $\|\hat{\delta}^{\operatorname{res}}_t-\hat{\Pi}^{\mathcal{S}^{\operatorname{res}}}\delta^{\operatorname{res}}_t\|_{\hat{\Sigma}_{t-1}}$ is 
    \begin{align*}
        O\left(\sqrt{p^{\operatorname{res}}\log(\tfrac{t}{p^{\operatorname{res}}})+\log(\tfrac{1}{\eta})}\right)
        + O\left(\sqrt{p^{\operatorname{res}}t}\left(\sqrt{\tfrac{\log(p/\eta)}{T_0}} + \|\hat{\beta}^{\operatorname{inv}}-\beta^{\operatorname{inv}}\|_2\right)\right).
    \end{align*}
    \label{lem:delta_res_bound}
\end{lemma}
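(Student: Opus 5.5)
We plan to mirror the sketch of Lemma~\ref{lem:beta_inv_bound}: write out the residual estimator explicitly and split its error into a noise term, a subspace-misalignment term and a plug-in term. For $\tau\in[T]$ we have $\gamma_{0,\tau}=\gamma_{0}$ fixed, so $R^{a_\tau}_\tau-\varphi_\tau^\top\hat\beta^{\operatorname{inv}} = \varphi_\tau^\top\delta^{\operatorname{res}}_t + \varphi_\tau^\top(\beta^{\operatorname{inv}}-\hat\beta^{\operatorname{inv}})+\epsilon_\tau$. Inserting $\delta^{\operatorname{res}}_t=\hat\Pi^{\mathcal{S}^{\operatorname{res}}}\delta^{\operatorname{res}}_t+(\Pi^{\mathcal{S}^{\operatorname{res}}}-\hat\Pi^{\mathcal{S}^{\operatorname{res}}})\delta^{\operatorname{res}}_t$ and using $\hat\Pi^{\mathcal{S}^{\operatorname{res}}}=\hat U^{\operatorname{res}}(\hat U^{\operatorname{res}})^\top$, we obtain
\begin{align*}
\hat\delta^{\operatorname{res}}_t-\hat\Pi^{\mathcal{S}^{\operatorname{res}}}\delta^{\operatorname{res}}_t
= \hat U^{\operatorname{res}}(\tilde\Sigma^{\operatorname{res}}_{t-1})^{-1}(\hat U^{\operatorname{res}})^\top\Big[ -\lambda\hat\Pi^{\mathcal{S}^{\operatorname{res}}}\delta^{\operatorname{res}}_t + \textstyle\sum_{\tau<t}\varphi_\tau\epsilon_\tau + \sum_{\tau<t}\varphi_\tau\varphi_\tau^\top\big((\Pi^{\mathcal{S}^{\operatorname{res}}}-\hat\Pi^{\mathcal{S}^{\operatorname{res}}})\delta^{\operatorname{res}}_t+\beta^{\operatorname{inv}}-\hat\beta^{\operatorname{inv}}\big)\Big].
\end{align*}
Here we use that $\hat\Sigma_{t-1}$ includes $\lambda I_p$ in the estimator. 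Note also that $(\hat U^{\operatorname{res}})^\top\hat\Sigma_{t-1}\hat U^{\operatorname{res}}=\tilde\Sigma^{\operatorname{res}}_{t-1}$, so that $\|\hat U^{\operatorname{res}} v\|_{\hat\Sigma_{t-1}}=\|v\|_{\tilde\Sigma^{\operatorname{res}}_{t-1}}$, and hence the $\hat\Sigma_{t-1}$-norm of the error equals the $(\tilde\Sigma^{\operatorname{res}}_{t-1})^{-1}$-norm of the bracket after projection by $(\hat U^{\operatorname{res}})^\top$. By the triangle inequality it therefore suffices to bound three pieces.

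\textbf{Noise and regularization.} The noise piece is $\|\sum_{\tau<t}(\hat U^{\operatorname{res}})^\top\varphi_\tau\epsilon_\tau\|_{(\tilde\Sigma^{\operatorname{res}}_{t-1})^{-1}}$. Since $\hat U^{\operatorname{res}}$ is computed from offline data only, it is fixed with respect to the online filtration. The features $(\hat U^{\operatorname{res}})^\top\varphi_\tau\in\mathbb{R}^{p^{\operatorname{res}}}$ are therefore predictable and have norm at most $L$. Lemma~\ref{lemma:self_norm_bound} in dimension $p^{\operatorname{res}}$ then gives $O(\sqrt{p^{\operatorname{res}}\log(1+tL^2/(\lambda p^{\operatorname{res}}))+\log(1/\eta)})$, which is the first term of the statement. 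The regularization piece is at most $\sqrt\lambda M$ and is absorbed into it.

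\textbf{Misalignment and plug-in.} For the remaining two pieces, write $w=(\Pi^{\mathcal{S}^{\operatorname{res}}}-\hat\Pi^{\mathcal{S}^{\operatorname{res}}})\delta^{\operatorname{res}}_t+\beta^{\operatorname{inv}}-\hat\beta^{\operatorname{inv}}$. We need to bound $\|(\hat U^{\operatorname{res}})^\top\hat\Sigma_{[t-1]}w\|_{(\tilde\Sigma^{\operatorname{res}}_{t-1})^{-1}}$. Let $A=\hat\Sigma_{[t-1]}^{1/2}\hat U^{\operatorname{res}}$. Then $A(\lambda I+A^\top A)^{-1}A^\top\preceq I$, which gives the bound $\|\hat\Sigma_{[t-1]}^{1/2}w\|_2\le\sqrt{\lambda_{\max}(\hat\Sigma_{[t-1]})}\,\|w\|_2\le L\sqrt{t}\,\|w\|_2$. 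Assumption~\ref{ass:subspace_dec_error} gives $\|(\Pi^{\mathcal{S}^{\operatorname{res}}}-\hat\Pi^{\mathcal{S}^{\operatorname{res}}})\delta^{\operatorname{res}}_t\|_2\le\Delta\Pi\, M=O(\sqrt{\log(p/\eta)/T_0})$ with probability $1-\eta$. Together these yield $O(\sqrt{t}(\sqrt{\log(p/\eta)/T_0}+\|\hat\beta^{\operatorname{inv}}-\beta^{\operatorname{inv}}\|_2))$. The stated bound carries an extra $\sqrt{p^{\operatorname{res}}}$ factor. If the crude argument is pursued more carefully, bounding through $\|(\hat U^{\operatorname{res}})^\top\varphi_\tau\|_{(\tilde\Sigma^{\operatorname{res}}_{t-1})^{-1}}$ and using Cauchy--Schwarz with the elliptical potential $\sum_\tau\|(\hat U^{\operatorname{res}})^\top\varphi_\tau\|^2_{(\tilde\Sigma^{\operatorname{res}}_{\tau-1})^{-1}}=\tilde O(p^{\operatorname{res}})$, one naturally obtains exactly $\sqrt{p^{\operatorname{res}}t}$ times $\|w\|_2$. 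Either route suffices, since the statement is an upper bound.

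\textbf{Combining and main obstacle.} A union bound over the self-normalized event and the event of Assumption~\ref{ass:subspace_dec_error} completes the argument, after rescaling $\eta$. The main subtlety is the measurability step: Lemma~\ref{lemma:self_norm_bound} requires $\hat U^{\operatorname{res}}$ and $\hat\beta^{\operatorname{inv}}$ to be independent of the online noise. This holds when they are computed offline only, i.e.\ without \texttt{recompute}. The plug-in term $\hat\beta^{\operatorname{inv}}-\beta^{\operatorname{inv}}$ is kept symbolic precisely so that no such independence is needed for it.
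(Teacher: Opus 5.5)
Your argument is correct, and your three-way split (noise plus regularization, subspace misalignment, plug-in of $\hat\beta^{\operatorname{inv}}$) is exactly the paper's decomposition. The noise term is handled the same way, via Lemma~\ref{lemma:self_norm_bound} in dimension $p^{\operatorname{res}}$.

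The difference is in the two cross terms. The paper applies Cauchy--Schwarz over $\tau$ to $\sum_{\tau}(\tilde\Sigma^{\operatorname{res}}_{t-1})^{-1/2}(\hat U^{\operatorname{res}})^\top\varphi_\tau\,(\varphi_\tau^\top w)$. It combines this with the trace identity $\sum_{\tau\in[t-1]}\|(\tilde\Sigma^{\operatorname{res}}_{t-1})^{-1/2}(\hat U^{\operatorname{res}})^\top\varphi_\tau\|_2^2\le\operatorname{trace}(I_{p^{\operatorname{res}}})=p^{\operatorname{res}}$. In effect this is a Frobenius-norm bound on the matrix whose columns are $(\tilde\Sigma^{\operatorname{res}}_{t-1})^{-1/2}(\hat U^{\operatorname{res}})^\top\varphi_\tau$, and it is where the $\sqrt{p^{\operatorname{res}}}$ comes from.

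Your inequality $A(\lambda I+A^\top A)^{-1}A^\top\preceq I$ instead bounds the operator norm of that same matrix by $1$, giving $L\sqrt{t}\,\|w\|_2$. This is strictly sharper than the lemma as stated. If carried through the same steps, it would shrink the $p^{\operatorname{res}}$ prefactor of the $T_0$-dependent term in Theorem~\ref{thm:full_regret_bound} to $\sqrt{p^{\operatorname{res}}}$. The same trick also removes the $\sqrt{p^{\operatorname{inv}}}$ from the analogous misalignment term in Lemma~\ref{lem:beta_inv_bound}. The paper's route buys nothing beyond reusing the trace computation from that lemma.

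Your side remark describing the alternative is slightly off. The paper uses the fixed matrix $\tilde\Sigma^{\operatorname{res}}_{t-1}$ and an exact trace bound, not the elliptical potential with $\tilde\Sigma^{\operatorname{res}}_{\tau-1}$. That potential bound, without a $\min\{1,\cdot\}$ truncation, would need a condition such as $\lambda\ge L^2$. Your main argument does not depend on it, so this is harmless.

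Finally, you note that Lemma~\ref{lemma:self_norm_bound} requires $\hat U^{\operatorname{res}}$ to be fixed relative to the online filtration, i.e.\ no \texttt{recompute}. The paper relies on this condition only implicitly, so stating it is a worthwhile addition.
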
 
The first term is the same that appears in the case of oracle subspaces. The second term is instead due to the error in the estimation of the subspaces. 
\begin{proof}[Proof sketch]
We can upper bound $\|\hat{\delta}^{\operatorname{res}}_t-\hat{\Pi}^{\mathcal{S}^{\operatorname{res}}}\delta^{\operatorname{res}}_t\|_{\hat{\Sigma}_{t-1}}$ by the sum of the following components
    \begin{align*}
        \|\hat{\delta}^{\operatorname{res}}_t-\hat{\Pi}^{\mathcal{S}^{\operatorname{res}}}\delta^{\operatorname{res}}_t\|_{\hat{\Sigma}_{t-1}}  
         \le & \|\sum_{\tau= 1}^{t-1}( \hat{U}^{\operatorname{res}})^{\top}\varphi_\tau\varphi_\tau^{\top}(\Pi^{\mathcal{S}^{\operatorname{res}}}-\hat{\Pi}^{\mathcal{S}^{\operatorname{res}}})\delta^{\operatorname{res}}_t\|_{(\tilde{\Sigma}^{\operatorname{res}}_{t-1})^{-1}} \\
         +& \|\sum_{\tau=1}^{t-1}( \hat{U}^{\operatorname{res}})^{\top}\varphi_\tau\varphi_\tau^{\top}(\beta^{\operatorname{inv}}-\hat{\beta}^{\operatorname{inv}})\|_{(\tilde{\Sigma}^{\operatorname{res}}_{t-1})^{-1}} \\ 
         + & \|\sum_{\tau= 1}^{t-1}( \hat{U}^{\operatorname{res}})^{\top}\varphi_\tau\epsilon_{\tau}\|_{(\tilde{\Sigma}^{\operatorname{res}}_{t-1})^{-1}} + \sqrt{\lambda}\|\delta^{\operatorname{res}}_t\|_2 
    \end{align*}
    where $\varphi_\tau\coloneqq\varphi(X_{\tau}, a_{\tau})$. The analysis of the last two terms is the same as in Theorem~\ref{thm:oracle_regret} and results in such terms being, for all $\eta\in(0,1)$, with probability at least $1-\eta$, $O\left(\sqrt{\log(\frac{1}{\eta})+p^{\operatorname{res}}\log(\frac{t}{p^{\operatorname{res}}}})\right)$.
    The first two terms are introduced due to the subspace estimation error. Under Assumption~\ref{ass:subspace_dec_error}, the first term is, for all $\eta\in(0,1)$, with probability at least $1-\eta$, $O\left(\sqrt{\frac{p^{\operatorname{res}}t}{T_0}\log(\frac{p}{\eta})}\right)$ and the second is $O\left(\sqrt{p^{\operatorname{res}}t}\|\hat{\beta}^{\operatorname{inv}}- \beta^{\operatorname{inv}}\|_2\right)$. For the full proof, see Appendix~\ref{sec:proof_full_isd}.
\end{proof}

Lemma~\ref{lem:beta_inv_bound} and Lemma~\ref{lem:delta_res_bound} imply the following regret bound for the ISD-linUCB algorithm.
\begin{theorem}
\label{thm:full_regret_bound}
    Consider Setting~\ref{set:hybrid} and assume Assumptions~\ref{ass:min_eig} and \ref{ass:subspace_dec_error} hold. If $\hat{\mathcal{C}}^{\beta}$ is defined to contain $\hat{\Pi}^{\mathcal{S}^{\operatorname{inv}}}\beta^{\operatorname{inv}}$ with probability at least $1-4/T_0$ and $\hat{\mathcal{C}}^{\delta}_t$ is defined to contain $\hat{\Pi}^{\mathcal{S}^{\operatorname{res}}}\delta^{\operatorname{res}}_t$ with probability at least $1-(\frac{2}{T}+\frac{4}{T_0})$, then, with probability at least $1-(\frac{2}{T}+\frac{9}{T_0})$ the regret of ISD-linUCB is
    \begin{align*}
        \tilde{O}\left(\sqrt{T}\left(p^{\operatorname{res}} + p^{\operatorname{res}}\sqrt{\tfrac{T}{\lambda_0 T_0}}\left(\sqrt{p^{\operatorname{inv}}}+\sqrt{\max\{\lambda_0, \tfrac{1}{\lambda_0}\}}\right)\right)\right).
    \end{align*}
\end{theorem}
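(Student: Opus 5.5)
The plan is to follow the optimism-based LinUCB argument, working on the good event $\mathcal{E}$ where three things hold simultaneously: $\hat{\Pi}^{\mathcal{S}^{\operatorname{inv}}}\beta^{\operatorname{inv}}\in\hat{\mathcal{C}}^{\beta}$, $\hat{\Pi}^{\mathcal{S}^{\operatorname{res}}}\delta^{\operatorname{res}}_t\in\hat{\mathcal{C}}^{\delta}_t$ for all $t\in[T]$, and the bound of Assumption~\ref{ass:subspace_dec_error} holds with $\eta$ of order $1/T_0$. The probability $1-(\frac{2}{T}+\frac{9}{T_0})$ comes from a union bound.
\begin{itemize}
\item The $4/T_0$ for the invariant set comes from the three terms in Lemma~\ref{lem:beta_inv_bound} plus the subspace event.
\item The residual set fails with probability at most $\frac{2}{T}+\frac{4}{T_0}$.
\item One further $1/T_0$ is for $\Delta\Pi$ in the bias term below.
\end{itemize}
The radii $\hat{\rho}^{\operatorname{inv}}_{T_0}$ and $\hat{\rho}^{\operatorname{res}}_t$ are chosen as the squares of the bounds in Lemmas~\ref{lem:beta_inv_bound} and~\ref{lem:delta_res_bound} at these confidence levels. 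Inside $\mathcal{E}$, the quantity $\|\hat{\beta}^{\operatorname{inv}}-\beta^{\operatorname{inv}}\|_2$ in Lemma~\ref{lem:delta_res_bound} is bounded by $\Delta\Pi M$ plus the bound of Lemma~\ref{lem:beta_inv_bound} times $1/\sqrt{\lambda_0T_0}$, as in \eqref{eq:sigma_2_norm_ineq}. This gives
\[
\sqrt{\hat{\rho}^{\operatorname{res}}_t}=\tilde{O}\Bigl(\sqrt{p^{\operatorname{res}}}+\sqrt{p^{\operatorname{res}}t/(\lambda_0T_0)}\bigl(\sqrt{p^{\operatorname{inv}}}+\sqrt{\max\{\lambda_0,1/\lambda_0\}}\bigr)\Bigr).
\]
The $\max\{\lambda_0,1/\lambda_0\}$ comes from combining the $\sqrt{\log(p/\eta)/T_0}$ term, rescaled as $\sqrt{\lambda_0}/\sqrt{\lambda_0T_0}$, with the $1/\lambda_0$ term of Lemma~\ref{lem:beta_inv_bound}.

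Next, bound the instantaneous regret through the optimism inequality. The true parameter is $\gamma_{0,t}=\hat{\Pi}^{\mathcal{S}^{\operatorname{inv}}}\beta^{\operatorname{inv}}+\hat{\Pi}^{\mathcal{S}^{\operatorname{res}}}\delta^{\operatorname{res}}_t+b_t$, with bias $b_t=\Delta\Pi$-terms satisfying $\|b_t\|_2\le 2\Delta\Pi M$ (decomposition \eqref{eq:gamma_hat_isd_split}). The feasible point is in $\hat{\mathcal{C}}^{\beta}\oplus\hat{\mathcal{C}}^{\delta}_t$, so optimism gives
\[
\operatorname{reg}_t\le \varphi_t^{\top}(\bar{\beta}_t-\hat{\Pi}^{\mathcal{S}^{\operatorname{inv}}}\beta^{\operatorname{inv}})+\varphi_t^{\top}(\bar{\delta}_t-\hat{\Pi}^{\mathcal{S}^{\operatorname{res}}}\delta^{\operatorname{res}}_t)+4L\Delta\Pi M,
\]
mirroring \eqref{eq:inst_reget_isd}. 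The three terms are handled as follows.
\begin{itemize}
\item The invariant term is at most $2L\sqrt{\hat{\rho}^{\operatorname{inv}}_{T_0}/(\lambda_0T_0)}$, as in the proof sketch of Theorem~\ref{thm:estimated_beta_inv}. Summed over $T$ rounds it gives $\tilde{O}(T(\sqrt{p^{\operatorname{inv}}}+\lambda_0^{-1/2})/\sqrt{\lambda_0T_0})$, which is dominated by the stated bound.
\item The bias term sums to $\tilde{O}(T/\sqrt{T_0})$, also absorbed.
\item The residual term is at most $2\sqrt{\hat{\rho}^{\operatorname{res}}_t}\,\|(\hat{U}^{\operatorname{res}})^{\top}\varphi_t\|_{(\tilde{\Sigma}^{\operatorname{res}}_{t-1})^{-1}}$. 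Because $\bar{\delta}_t$ and $\hat{\delta}^{\operatorname{res}}_t$ lie in $\hat{\mathcal{S}}^{\operatorname{res}}$, the $\hat{\Sigma}_{t-1}$-norm reduces to the $\tilde{\Sigma}^{\operatorname{res}}_{t-1}$-norm of the $\hat{U}^{\operatorname{res}}$-coordinates, which is the key dimension-reduction step.
\end{itemize}

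Finally, sum the residual term. Using the clip $\min\{2LM,\cdot\}$ together with Cauchy--Schwarz and monotonicity of $\hat{\rho}^{\operatorname{res}}_t$ in $t$,
\[
\sum_t\operatorname{reg}^{\operatorname{res}}_t\le 2\sqrt{\hat{\rho}^{\operatorname{res}}_T}\sqrt{T\sum_t\min\{1,\|(\hat{U}^{\operatorname{res}})^{\top}\varphi_t\|^2_{(\tilde{\Sigma}^{\operatorname{res}}_{t-1})^{-1}}\}}.
\]
The elliptical potential lemma in dimension $p^{\operatorname{res}}$, together with the determinant bound of Lemma~\ref{lemma:self_norm_bound} applied to the $p^{\operatorname{res}}$-dimensional features $(\hat{U}^{\operatorname{res}})^{\top}\varphi_t$, bounds the inner sum by $\tilde{O}(p^{\operatorname{res}})$. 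Multiplying out gives
\[
\sqrt{Tp^{\operatorname{res}}}\bigl(\sqrt{p^{\operatorname{res}}}+\sqrt{p^{\operatorname{res}}T/(\lambda_0T_0)}(\cdots)\bigr),
\]
which is exactly the claimed form.

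The main obstacle I expect is the bookkeeping that makes the probabilities and the $\lambda_0$-dependence come out exactly as stated. In particular, the conversion of $\|\hat{\beta}^{\operatorname{inv}}-\beta^{\operatorname{inv}}\|_2$, which mixes the $\hat{\Pi}$-projection error and the estimation error, into the $\sqrt{\max\{\lambda_0,1/\lambda_0\}/(\lambda_0T_0)}$ factor needs care. I would first fix all confidence levels at $1/T_0$ and verify that the constants collapse into this factor.
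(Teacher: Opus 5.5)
Your proposal is correct and follows essentially the same route as the paper. You split the instantaneous regret into three pieces and union-bound the events:
\begin{itemize}
\item an invariant term, controlled via $\hat{\Sigma}_{[-T_0]}\succeq\lambda_0T_0I_p$ and Lemma~\ref{lem:beta_inv_bound};
\item a residual term, controlled in $\hat{U}^{\operatorname{res}}$-coordinates via Lemma~\ref{lem:delta_res_bound} and the elliptical-potential bound of Lemma~\ref{lem:bandits_alg_contexts_bound};
\item a $\Delta\Pi$ bias term.
\end{itemize}
Your version is in fact slightly more careful than the paper's in two places, though neither changes the rate. First, you pay the bias on both the optimal and the chosen action. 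The paper instead asserts $\operatorname{reg}_t\le\varphi(X_t,a_t)^{\top}(\bar{\gamma}_t-\gamma_{0,t})$ directly, although $\gamma_{0,t}$ need not lie in $\hat{\mathcal{C}}^{\beta}\oplus\hat{\mathcal{C}}^{\delta}_t$. Second, you explicitly include the $\Delta\Pi M$ projection error when converting $\|\hat{\beta}^{\operatorname{inv}}-\beta^{\operatorname{inv}}\|_2$.
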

Theorem~\ref{thm:full_regret_bound} shows in particular that, for $T_0$ sufficiently larger than $T$ (e.g., $T_0=\Omega(T^{1+\varepsilon})$ for some $\epsilon>0$), the regret bound for ISD-linUCB is dominated by $p^{\operatorname{res}}\sqrt{T}$.

\section{Simulation experiments}
To support our theoretical results, we present several simulation experiments. 
We first apply the ISD-linUCB algorithm  with oracle subspace knowledge 
in Section~\ref{sec:simulations_oracle_subs}, and show that
the regret indeed scales as $p^{\operatorname{res}}\sqrt{T}$, 
supporting the result that if $T_0$ grows faster than $T$, the regret indeed scales as $p^{\operatorname{res}}\sqrt{T}$.
We then include the estimation of the subspace decomposition in the algorithm (Section~\ref{sec:simulations_est_isd}), and show that for increasing $T_0$ its performance gets closer to the one of ISD-linUCB with oracle knowledge on subspaces.  In all simulations, we provide a comparison with the standard LinUCB algorithm since, in Setting~\ref{set:hybrid} we assume $\gamma_{0,t}$ to be fixed in [T] (in this setting, the performance of other non-stationary algorithms is comparable to the one of LinUCB, see Appendix~\ref{sec:non-stationary_comp}).

\subsection{Oracle subspace decomposition}
\label{sec:simulations_oracle_subs}
We consider a setting with $T_0=2000$, $T=100$ and $|\mathcal{A}|=5$ and with known $U^{\operatorname{inv}}, U^{\operatorname{res}}$. The matrix $U$ is sampled as a random orthonormal matrix, and we generate the covariance matrix of context-action features as $U\tilde{V}_t U^{\top}$, where $\tilde{V}_t$ is a block diagonal matrix with two blocks of dimensions $p^{\operatorname{inv}}, p^{\operatorname{res}}$. The entries of $(U^{\operatorname{inv}})^{\top}\beta^{\operatorname{inv}}$ are sampled uniformly in $(0.5, 1.5)$. The same is done for the initial values of $(U^{\operatorname{res}})^{\top}\delta^{\operatorname{res}}_t$, to which for $t\in[-T_0]$ we add $-1.5(t/T0)\sin^2(0.25it/T0+i)$, with $i\in\{1,\dots,p^{\operatorname{res}}\}$ being the entry index. The data in $[-T_0]$, used to estimate $\beta^{\operatorname{inv}}$, is the output of a bandit algorithm using a policy that chooses the actions uniformly at random. 
For $t\in[T]$, the entries of $(U^{\operatorname{res}})^{\top}\delta^{\operatorname{res}}_t$ are sampled uniformly in $(0.5, 1.5)$. We set $\lambda=0.1$ and we use $\eta=1/T$. We then run two experiments (in both, $\hat{\beta}^{\operatorname{inv}}$ is not updated online).

First, for a fixed dimension $p=10$ of the context-action features we consider values of $p^{\operatorname{res}}\in\{2, 4, 6 , 8\}$. Figure~\ref{fig:reg_residual_dimension} shows that the regret of ISD-linUCB (with oracle subspace information) grows sublinearly in $T$ (left) and (approximately) linearly in $p^{\operatorname{res}}$ (right), empirically supporting the result obtained in Theorems~\ref{thm:estimated_beta_inv}. 
\begin{figure}[btp]
    \centering
    \includegraphics[width=.7\linewidth]{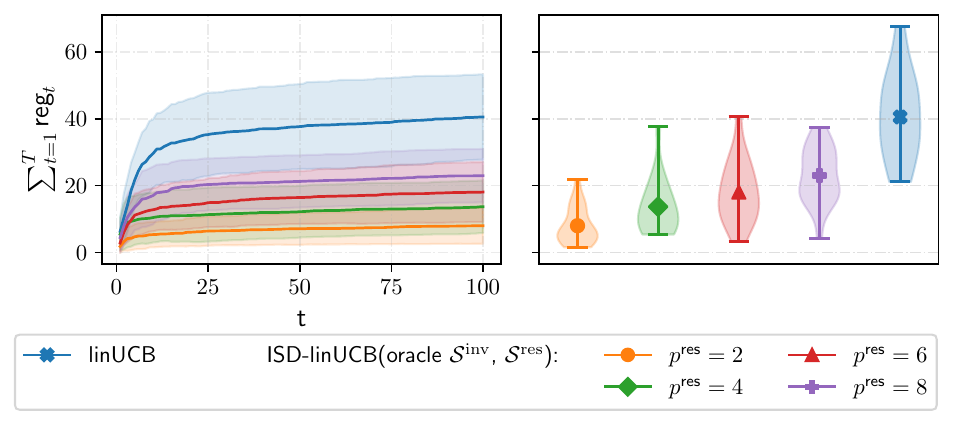}
    \caption{Regret of ISD-linUCB with oracle $(\mathcal{S}^{\operatorname{inv}}, \mathcal{S}^{\operatorname{res}})$ over $T=100$ rounds for $p^{\operatorname{res}}\in\{2,4,6,8\}$. For each $p^{\operatorname{res}}$ the experiment is repeated $20$ times. The left plot shows the average performance and the standard deviation over the $20$ repetitions, the right plot shows the distribution of the regret over the $20$ repetitions.}
    \label{fig:reg_residual_dimension}
\end{figure}
To further support these results, we run a second experiment where we consider context-action features of dimension $p$ varying between $3$ and $10$, while keeping the dimension of $\mathcal{S}^{\operatorname{res}}$ fixed to $p^{\operatorname{res}}=2$. The results are shown in Figure~\ref{fig:regret_rates_dimension_example}, which compares the cumulative regret to the one of the standard LinUCB algorithm. While the latter increases linearly with $p$, the regret of ISD-linUCB remains approximately constant as $p^{\operatorname{res}}$ is fixed. 
\begin{figure}[t]
    \centering
     \includegraphics[width=.5\linewidth]{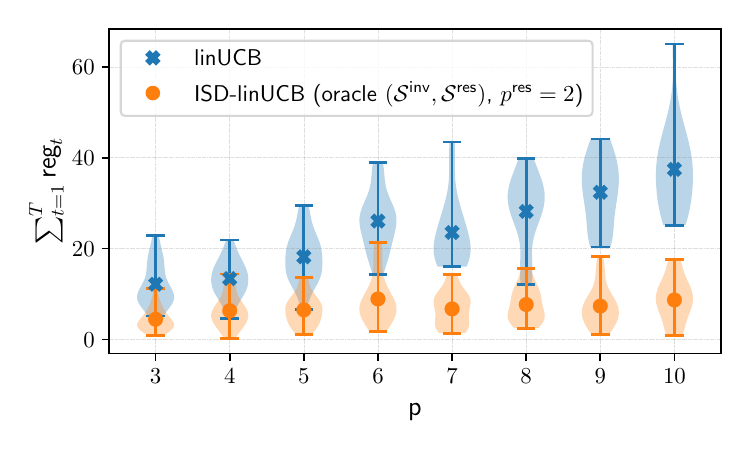}
\caption{Cumulative regret of standard LinUCB and ISD-linUCB with oracle $(\mathcal{S}^{\operatorname{inv}}, \mathcal{S}^{\operatorname{res}})$ for $T=100$ and increasing values of context-action feature dimension $p$. For ISD-linUCB, the invariant component $\beta^{\operatorname{inv}}$ is estimated using $T_0=2000$ observations.  $p^{\operatorname{inv}}$ varies from $3$ to $10$, while the $p^{\operatorname{res}}$ is fixed to $2$. For each $p$ the experiment is repeated $20$ times.
}
    \label{fig:regret_rates_dimension_example}
\end{figure}
\subsection{Estimated subspace decomposition}
\label{sec:simulations_est_isd}
We consider the same data-generating process as in Section~\ref{sec:simulations_oracle_subs}, but now include the subspace estimation using the $T_0$ observations. We set $T=500$ and consider $T_0\in\{1000, 3500, 8000\}$. Moreover, we fix the dimensions of the subspaces to $p^{\operatorname{inv}}=7$ and $p^{\operatorname{res}}=3$. 
Figure~\ref{fig:T_0_conv_example} shows the cumulative regret for increasing $T_0$ for the standard LinUCB algorithm (unaffected by $T_0$) and for the algorithms studied in Sections~\ref{sec:oracle_subs_beta}, \ref{sec:oracle_subs} and \ref{sec:sub_dec_error}, supporting the presented theoretical results. Indeed, for increasing $T_0$, the regret of the ISD-linUCB algorithm estimating $(\mathcal{S}^{\operatorname{inv}}, \mathcal{S}^{\operatorname{res}})$ and $\beta^{\operatorname{inv}}$ using the $T_0$ observations gets closer to the one of an algorithm with oracle knowledge. Moreover, the regret of the algorithm using the estimated ISD is lower than the standard LinUCB one for all considered values of $T_0$. 
\begin{figure}[t]
    \centering
    \includegraphics[width=.6\linewidth]{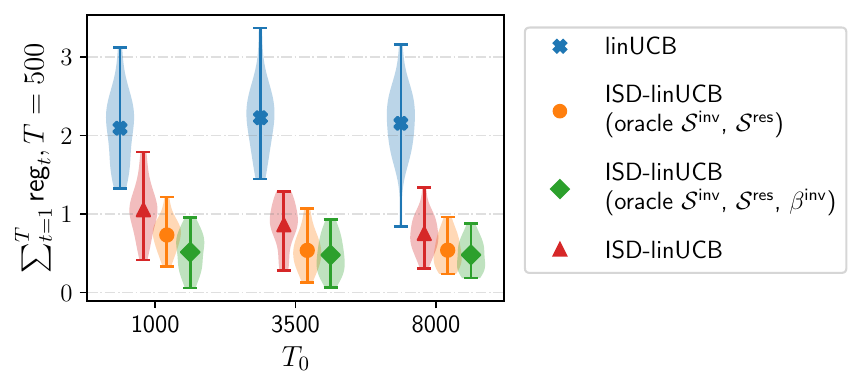}
    \caption{Cumulative regret for $T_0\in\{1000, 3500, 8000\}$ (20 repetitions) for ISD-linUCB, in comparison with the same algorithm having oracle information and with the standard linUCB alogorithm (unaffected by $T_0$). For increasing $T_0$, the regret of ISD-linUCB gets closer to the one of the oracle version.}
    \label{fig:T_0_conv_example}
\end{figure}
\section{Conclusions}
Exploiting invariances in non-stationary environments allows linear contextual bandit algorithms to adapt more efficiently to changes in the reward model. We propose ISD-linUCB, a novel algorithm that learns such invariances from offline bandit data by decomposing the context-action feature space into two orthogonal subspaces: in one of the two subspaces, which has  dimension $p^{\operatorname{inv}}$, the relationship between reward and context-action features remains stationary. When sufficient offline data are available, the regret of ISD-linUCB scales with $(p-p^{\operatorname{inv}})$ rather than $p$, leading to a substantial performance gain in environments that are subject to rapid changes.

\subsection*{Acknowledgments}
NP and ML are supported by a research grant (0069071) from Novo Nordisk Fonden.

\newpage
\bibliographystyle{abbrvnat}
\bibliography{refs}

\begin{thebibliography}{21}
\providecommand{\natexlab}[1]{#1}
\providecommand{\url}[1]{\texttt{#1}}
\expandafter\ifx\csname urlstyle\endcsname\relax
  \providecommand{\doi}[1]{doi: #1}\else
  \providecommand{\doi}{doi: \begingroup \urlstyle{rm}\Url}\fi

\bibitem[Abbasi-Yadkori et~al.(2011)Abbasi-Yadkori, P{\'a}l, and
  Szepesv{\'a}ri]{abbasi2011improved}
Y.~Abbasi-Yadkori, D.~P{\'a}l, and C.~Szepesv{\'a}ri.
\newblock Improved algorithms for linear stochastic bandits.
\newblock \emph{Advances in Neural Information Processing Systems}, 24, 2011.

\bibitem[Agrawal and Goyal(2013)]{agrawal2013thompson}
S.~Agrawal and N.~Goyal.
\newblock Thompson sampling for contextual bandits with linear payoffs.
\newblock In \emph{International Conference on Machine Learning}, pages
  127--135. PMLR, 2013.

\bibitem[Auer(2002)]{auer2002using}
P.~Auer.
\newblock Using confidence bounds for exploitation-exploration trade-offs.
\newblock \emph{Journal of Machine Learning Research}, 3\penalty0
  (Nov):\penalty0 397--422, 2002.

\bibitem[Bilaj et~al.(2024)Bilaj, Dhouib, and Maghsudi]{bilaj2024meta}
S.~Bilaj, S.~Dhouib, and S.~Maghsudi.
\newblock Meta learning in bandits within shared affine subspaces.
\newblock In \emph{27th International Conference on Artificial Intelligence and
  Statistics}, pages 523--531. PMLR, 2024.

\bibitem[Cheung et~al.(2019)Cheung, Simchi-Levi, and Zhu]{cheung2019learning}
W.~C. Cheung, D.~Simchi-Levi, and R.~Zhu.
\newblock Learning to optimize under non-stationarity.
\newblock In \emph{22nd International Conference on Artificial Intelligence and
  Statistics}, pages 1079--1087. PMLR, 2019.

\bibitem[Chu et~al.(2011)Chu, Li, Reyzin, and Schapire]{chu2011contextual}
W.~Chu, L.~Li, L.~Reyzin, and R.~Schapire.
\newblock Contextual bandits with linear payoff functions.
\newblock In \emph{14th International Conference on Artificial Intelligence and
  Statistics}, pages 208--214, 2011.

\bibitem[Dani et~al.(2008)Dani, Hayes, and Kakade]{dani2008stochastic}
V.~Dani, T.~P. Hayes, and S.~M. Kakade.
\newblock Stochastic linear optimization under bandit feedback.
\newblock In \emph{21st Annual Conference on Learning Theory}, pages 355--366,
  2008.

\bibitem[Kausik et~al.(2024)Kausik, Tan, and Tewari]{kausik2024leveraging}
C.~Kausik, K.~Tan, and A.~Tewari.
\newblock Leveraging offline data in linear latent bandits.
\newblock \emph{arXiv preprint arXiv:2405.17324}, 2024.

\bibitem[Lattimore and Szepesv{\'a}ri(2020)]{lattimore2020bandit}
T.~Lattimore and C.~Szepesv{\'a}ri.
\newblock \emph{Bandit algorithms}.
\newblock Cambridge University Press, 2020.

\bibitem[Lazzaretto et~al.(2025)Lazzaretto, Peters, and
  Pfister]{lazzaretto2025invariant}
M.~Lazzaretto, J.~Peters, and N.~Pfister.
\newblock Invariant subspace decomposition.
\newblock \emph{Journal of Machine Learning Research}, 26\penalty0
  (95):\penalty0 1--56, 2025.

\bibitem[Li et~al.(2010)Li, Chu, Langford, and Schapire]{li2010contextual}
L.~Li, W.~Chu, J.~Langford, and R.~E. Schapire.
\newblock A contextual-bandit approach to personalized news article
  recommendation.
\newblock In \emph{Proceedings of the 19th international conference on World
  wide web}, pages 661--670, 2010.

\bibitem[Papini et~al.(2021)Papini, Tirinzoni, Restelli, Lazaric, and
  Pirotta]{papini2021leveraging}
M.~Papini, A.~Tirinzoni, M.~Restelli, A.~Lazaric, and M.~Pirotta.
\newblock Leveraging good representations in linear contextual bandits.
\newblock In \emph{38th International Conference on Machine Learning}, pages
  8371--8380. PMLR, 2021.

\bibitem[Qin et~al.(2022)Qin, Menara, Oymak, Ching, and
  Pasqualetti]{qin2022non}
Y.~Qin, T.~Menara, S.~Oymak, S.~Ching, and F.~Pasqualetti.
\newblock Non-stationary representation learning in sequential linear bandits.
\newblock \emph{IEEE Open Journal of Control Systems}, 1:\penalty0 41--56,
  2022.

\bibitem[Russac et~al.(2019)Russac, Vernade, and Capp{\'e}]{russac2019weighted}
Y.~Russac, C.~Vernade, and O.~Capp{\'e}.
\newblock Weighted linear bandits for non-stationary environments.
\newblock \emph{Advances in Neural Information Processing Systems}, 32, 2019.

\bibitem[Saengkyongam et~al.(2023)Saengkyongam, Thams, Peters, and
  Pfister]{saengkyongam2023invariant}
S.~Saengkyongam, N.~Thams, J.~Peters, and N.~Pfister.
\newblock Invariant policy learning: A causal perspective.
\newblock \emph{IEEE Transactions on Pattern Analysis and Machine
  Intelligence}, 45\penalty0 (7):\penalty0 8606--8620, 2023.

\bibitem[Stewart and Sun(1990)]{stewart1990matrixperturbation}
G.~W. Stewart and J.-g. Sun.
\newblock \emph{Matrix perturbation theory}.
\newblock Computer science and scientific computing. Academic Press, 1990.
\newblock URL \url{https://cir.nii.ac.jp/crid/1971149384759715084}.

\bibitem[Trella et~al.(2024)Trella, Dempsey, Doshi-Velez, and
  Murphy]{trella2024non}
A.~L. Trella, W.~Dempsey, F.~Doshi-Velez, and S.~A. Murphy.
\newblock Non-stationary latent auto-regressive bandits.
\newblock \emph{arXiv preprint arXiv:2402.03110}, 2024.

\bibitem[Tropp(2012)]{tropp2012user}
J.~A. Tropp.
\newblock User-friendly tail bounds for sums of random matrices.
\newblock \emph{Foundations of computational mathematics}, 12\penalty0
  (4):\penalty0 389--434, 2012.

\bibitem[Vershynin(2018)]{vershynin2018high}
R.~Vershynin.
\newblock \emph{High-dimensional probability: An introduction with applications
  in data science}, volume~47.
\newblock Cambridge University Press, 2018.

\bibitem[Yu et~al.(2015)Yu, Wang, and Samworth]{yu2015useful}
Y.~Yu, T.~Wang, and R.~J. Samworth.
\newblock A useful variant of the {D}avis--{K}ahan theorem for statisticians.
\newblock \emph{Biometrika}, 102\penalty0 (2):\penalty0 315--323, 2015.

\bibitem[Zhao et~al.(2020)Zhao, Zhang, Jiang, and Zhou]{zhao2020simple}
P.~Zhao, L.~Zhang, Y.~Jiang, and Z.-H. Zhou.
\newblock A simple approach for non-stationary linear bandits.
\newblock In \emph{23rd International Conference on Artificial Intelligence and
  Statistics}, pages 746--755. PMLR, 2020.

\end{thebibliography}
\newpage

\appendix

\section{Further related works}
\label{sec:related works}
\paragraph{Linear contextual bandits}
The stochastic linear bandit problem is extensively studied in the literature. It is commonly assumed \citep[see, for example,][]{lattimore2020bandit} that there exists an unknown linear parameter of dimension $p$ that is shared among all context-action features: this allows in particular to deal with arbitrarily large action spaces, and to obtain regret bounds that do not depend on the cardinality $K$ of the action space. Two standard algorithms proposed in the literature to solve this problem are LinUCB (linear upper confidence bound), and LinTS (linear Thompson sampling). In UCB-based algorithms, at all rounds the agent chooses the action maximizing an upper confidence bound on the estimated regret. \citet{li2010contextual} consider finite action spaces and build such confidence bound directly on the estimated regret for each possible action. \citet{chu2011contextual} show a regret bound for this version of LinUCB of $\tilde{O}(\sqrt{pKT})$. \citet{abbasi2011improved} propose a more general UCB algorithm that allows for infinitely large action spaces, by constructing a confidence set for the estimated linear parameter rather than for the reward, and show a regret bound of $\tilde{O}(p\sqrt{T})$. Linear bandits based on Thompson sampling \citep{agrawal2013thompson} at all rounds sample from the rewards posterior distribution for all possible arms, and then choose the action maximizing the sampled reward. LinTS is shown to achieve a regret of $\tilde{O}(p^{\frac{3}{2}}\sqrt{T})$. 

In these algorithms, no assumptions (besides boundedness of the context-action features and of the bandit parameter) are made on the nature of the observed contexts. \citet{papini2021leveraging} review different diversity conditions on the context-action features space
that allow to improve the regret bounds, leading to either logarithmic or constant regret in $T$. One example is when the variance of the optimal context-action features is a positive definite matrix, meaning that all directions in $\mathbb{R}^p$ are potentially optimal for some context.

\paragraph{Linear latent contextual bandits}

To deal with heterogeneity in the reward function, \citet{kausik2024leveraging} consider a latent linear contextual bandits framework, where a latent linear parameter $\theta\in\mathbb{R}^{p_l}$, with $p_l<p$, is shared across all heterogeneous observations. They assume the reward is of the form $R_t^{a_t} = \varphi(X_t, a_t)^{\top}\gamma_0 + \epsilon_t$, with $\gamma_0= U\theta$ and $U\in\mathbb{R}^{p\times p_l}$ is a unitary matrix. They propose to learn the matrix $U$ from offline data, and use this to obtain a regret bound that depends on the dimension $p_l$ of the latent parameter rather than the context dimension $p$. 

\citet{bilaj2024meta} assume that the agent sequentially interacts with different tasks such that, for each task, the parameter $\gamma_0$ is independently drawn from the same distribution. They assume there exists an orthogonal projection matrix $P$ with rank $p_l$ such that $\mathbb{E}_{\gamma_0}[\|(I-P)(\gamma_0-\mathbb{E}[\gamma_0])\|^2]\ll \mathbb{E}_{\gamma_0}[\|P(\gamma_0-\mathbb{E}[\gamma_0])\|^2]\le \mathbb{E}_{\gamma_0}[\|\gamma_0-\mathbb{E}[\gamma_0]\|^2]$, namely the variance of the distribution of $\gamma_0$ is very low along $p-p_l$ orthogonal directions. They show a regret bound that depends on $p_l$ rather than $p$.

\citet{qin2022non} consider a multi-task sequential linear bandit model where the agent plays a sequence of $S$ bandit tasks (drawn from $m$ different environments) for $N$ rounds each. They model the reward as $R_t^{a_t}=\varphi(X_t, a_t)^{\top}\gamma_{s(t)}+\epsilon_t$ and assume that there exists a shared latent parameter $\theta\in\mathbb{R}^{p_l}$ such that for all $s\in[S]$ there exists a matrix $U_s\in\mathbb{R}^{p\times p_l}$ such that $\gamma_s=U_s\theta$. They propose an algorithm that sequentially learns the latent representation and achieves a regret of $\tilde{O}(pp_l\sqrt{mSN} + p_lS\sqrt{N})$.

\citet{trella2024non} consider a non-stationary reward model governed by an underlying latent variable evolving as an AR process of order $\ell$. They show a regret bound of $\tilde{O}(\ell\sqrt{pT})$.

\paragraph{Non-stationary linear bandits}
\citet{cheung2019learning} and \citet{russac2019weighted} consider the linear bandit problem under non-stationarity of the environment, and in particular allow for the linear bandit parameter to change through time, so that the reward model is of the form $R_t^a = \varphi(X_t, a)^{\top}\gamma_{0,t} + \epsilon_t$.
They define the variation budget $B_T$ as a constant such that $\sum_{t=1}^{T-1} \|\gamma_{0,t+1}-\gamma_{0,t}\|_2 \le B_T$. \citet{cheung2019learning} propose an algorithm based on a sliding window regularized least squares estimator, \cite{russac2019weighted} propose instead to weight past observations by a discounting factor. Both achieve a regret bound of $\tilde{O}(p^{\frac{7}{8}}T^{\frac{3}{4}}B_T^{\frac{1}{4}})$. As shown by \citet{zhao2020simple}, this same regret is achieved by periodically restarting the standard LinUCB algorithm, where the optimal number of rounds to be played before each restart depends on $B_T$.
This means that the excess regret compared to the stationary case is due to applying the standard LinUCB algorithm assuming that the linear parameter is fixed, and incurring an additional loss that depends on how much the parameter is changing in the time period between two restarts. 
\citet{cheung2019learning} also show a lower bound for this setting of $\Omega(p^{\frac{2}{3}}T^{\frac{2}{3}}B_T^{\frac{1}{3}})$.

\section{Additional experiments}
\subsection{Projection error}
\label{sec:projection_err}
The key role of Assumption~\ref{ass:subspace_dec_error} is to ensure that the projection error on $\mathcal{S}^{\operatorname{inv}}$ and $\mathcal{S}^{\operatorname{res}}$ decreases with the sample size $T_0$ used to estimate the matrix $U$ (and thus the projection matrices onto the subspaces). Within the same experiment described in Section~\ref{sec:simulations_est_isd}, we evaluate such error, i.e., $\|\hat{\Pi}^{\mathcal{S}^{\operatorname{inv}}}-\Pi^{\mathcal{S}^{\operatorname{inv}}}\|_{\operatorname{op}}$, and show (Figure~\ref{fig:projection_err}) that it indeed decreases approximately as $\sqrt{T_0}$, as assumed in Assumption~\ref{ass:subspace_dec_error}.
\begin{figure}[ht]
    \centering
\includegraphics[width=0.6\linewidth]{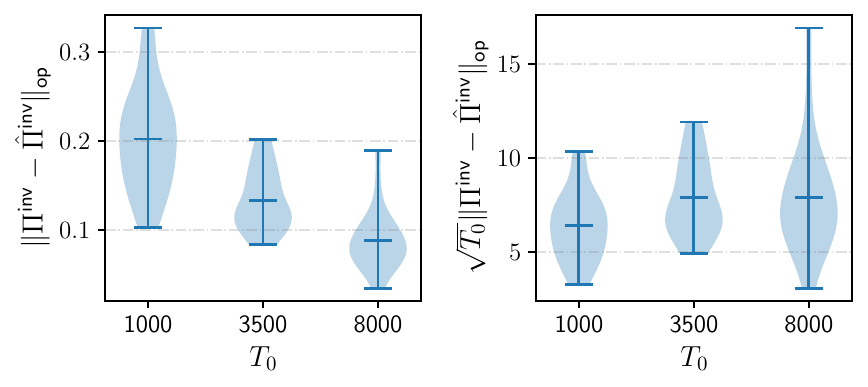}
    \caption{Projection error for $T_0\in\{1000, 3500, 8000\}$. The plot on the left shows the same values multiplied by $\sqrt{T_0}$, confirming our assumption.}
    \label{fig:projection_err}
\end{figure}

\subsection{Other non-stationary algorithms}
\label{sec:non-stationary_comp}
Within the same setting of the second experiment described in Section~\ref{sec:simulations_oracle_subs}, we add an additional comparison with two non-stationary algorithms: one using a sliding window (SW-linUCB) by \citet{cheung2019learning} and one using a discounting factor (D-linUCB) by \citet{russac2019weighted}. In this setting, the linear parameter $\gamma_{0,t}$ is fixed in the time horizon $[T]$, so the dimension of the window for SW-linUCB is set to $T$ and the discounting factor for D-linUCB is set to $0.999$, and Figure~\ref{fig:non-stat} shows that these two algorithms indeed have a comparable performance to LinUCB.
In particular, we keep $\gamma_{0,t}$ fixed because our focus is on the reduction of the dimensionality (for the non-stationary part of the problem). When this assumption fails, we could still integrate the ISD framework within existing non-stationary algorithms to improve their performance in terms of dimensionality, as we do in the presented experiments with LinUCB. 

\begin{figure}[h]
    \centering
    \includegraphics[width=0.8\linewidth]{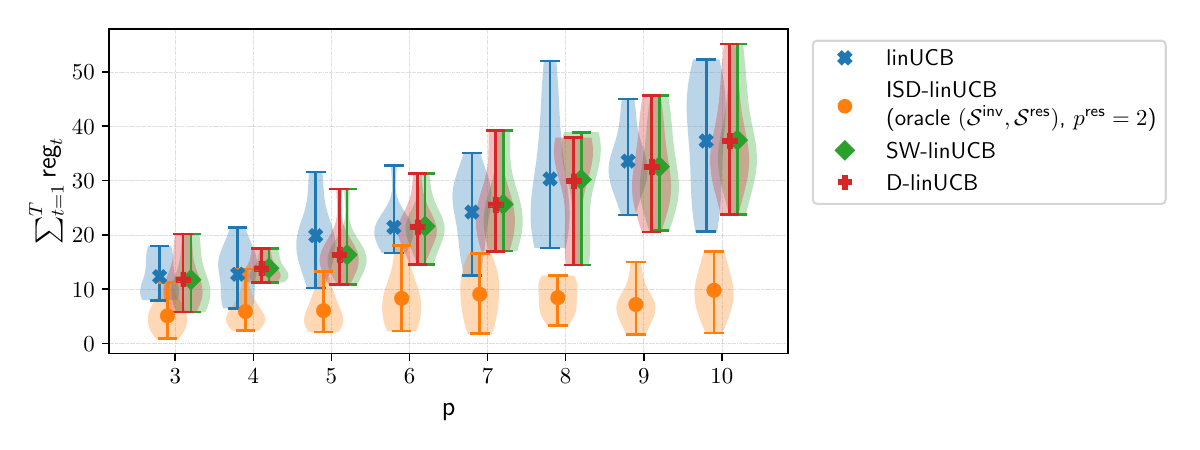}
    \caption{Cumulative regret of standard LinUCB, SW-linUCB, D-linUCB and ISD-linUCB with oracle $(\mathcal{S}^{\operatorname{inv}}, \mathcal{S}^{\operatorname{res}})$ for $T=100$ and increasing values of context-action feature dimension $p$. For ISD-linUCB, the invariant component $\beta^{\operatorname{inv}}$ is estimated using $T_0=2000$ observations.  $p^{\operatorname{inv}}$ varies from $3$ to $10$, while the $p^{\operatorname{res}}$ is fixed to $2$.}
    \label{fig:non-stat}
\end{figure}
The same happens for the simulation experiments presented in Section~\ref{sec:simulations_est_isd}, as shown in Figure~\ref{fig:non-stat2}.
\begin{figure}[ht]
    \centering
    \includegraphics[width=0.7\linewidth]{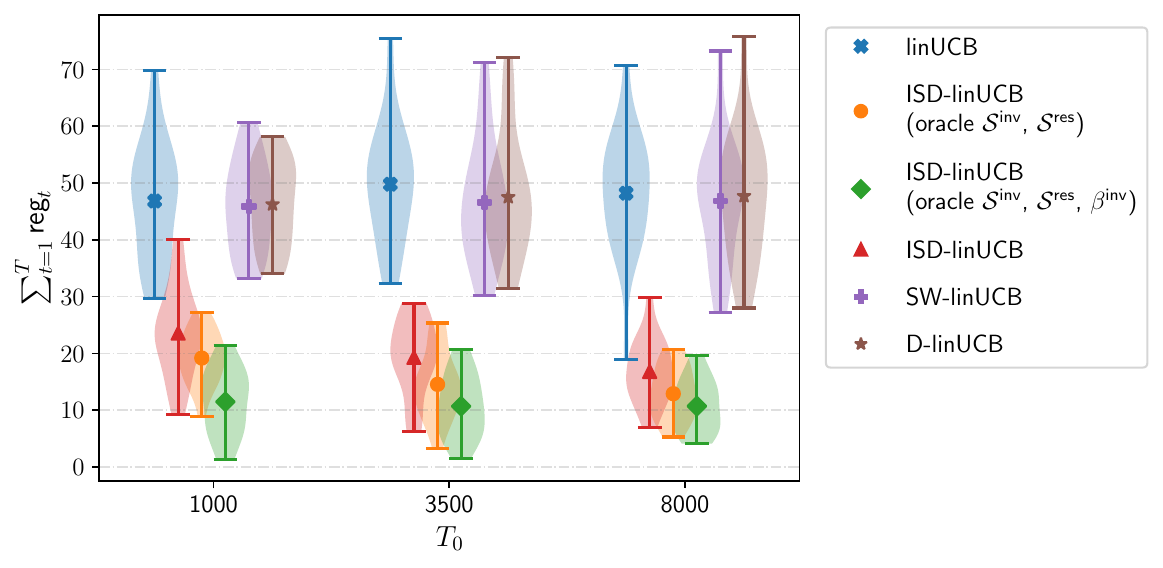}
    \caption{Cumulative regret for $T_0\in\{1000, 3500, 8000\}$ for ISD-linUCB, in comparison with the same algorithm having oracle information and with the standard linUCB alogorithm, with SW-linUCB and D-linUCB (unaffected by $T_0$). For each $T_0$ the experiment is repeated $20$ times.}
    \label{fig:non-stat2}
\end{figure}
\FloatBarrier

\section{Regret analysis: proofs}

\subsection{Oracle $\mathcal{S}^{\operatorname{inv}}, \mathcal{S}^{\operatorname{res}}, \beta^{\operatorname{inv}}$}
\label{app:oracle_bound}

\begin{proof}[Proof of Theorem~\ref{thm:oracle_regret}]
We start by proving that the estimated confidence set $\hat{\mathcal{C}}^{\delta}_t$ around $\hat{\delta}^{\operatorname{res}}_t$ contains $\delta^{\operatorname{res}}_t$ with high probability.

Let $\tilde{\Sigma}^{\operatorname{res}}_{t}$ and $\hat{\Sigma}_{t}$ be defined as in Section~\ref{sec:algorithm} and \eqref{eq:cov_matrix_t}, respectively, and let $\tilde{\delta}_t^{\operatorname{res}}\coloneqq \big(\lambda I_{p^{\operatorname{res}}} + (\mathbf{X}^{\operatorname{res}})^{\top}_{t}\mathbf{X}^{\operatorname{res}}_t \big)^{-1}(\mathbf{X}_t^{\operatorname{res}})^{\top}\mathbf{R}_t^{\operatorname{res}}$.
For all $t\in[T]$, we have that
\begin{align}
    \|\hat{\delta}^{\operatorname{res}}_t-\delta^{\operatorname{res}}_t\|_{\hat{\Sigma}_{t-1}}& = \|\tilde{\delta}^{\operatorname{res}}_t-(U^{\operatorname{res}})^{\top}\delta^{\operatorname{res}}_t\|_{\tilde{\Sigma}^{\operatorname{res}}_{t-1}}  \notag\\
    & = \|(\tilde{\Sigma}^{\operatorname{res}}_{t-1})^{-1}\sum_{\tau=1}^{t-1}(U^{\operatorname{res}})^{\top}\varphi(X_{\tau}, a_{\tau})(\varphi(X_{\tau}, a_{\tau})^{\top}\delta^{\operatorname{res}}_{\tau}+\epsilon_{\tau})-(U^{\operatorname{res}})^{\top}\delta^{\operatorname{res}}_t\|_{\tilde{\Sigma}^{\operatorname{res}}_{t-1}} \notag\\
    & = \|(\tilde{\Sigma}^{\operatorname{res}}_{t-1})^{-1}\sum_{\tau=1}^{t-1}(U^{\operatorname{res}})^{\top}\varphi(X_{\tau}, a_{\tau})\epsilon_{\tau}-\lambda(\tilde{\Sigma}^{\operatorname{res}}_{t-1})^{-1}(U^{\operatorname{res}})^{\top}\delta^{\operatorname{res}}_t\|_{\tilde{\Sigma}^{\operatorname{res}}_{t-1}} \notag\\
    & = \|\sum_{\tau=1}^{t-1}(U^{\operatorname{res}})^{\top}\varphi(X_{\tau}, a_{\tau})\epsilon_{\tau}-\lambda(U^{\operatorname{res}})^{\top}\delta^{\operatorname{res}}_t\|_{(\tilde{\Sigma}^{\operatorname{res}}_{t-1})^{-1}} \notag\\
    & \le \|\sum_{\tau=1}^{t-1}(U^{\operatorname{res}})^{\top}\varphi(X_{\tau}, a_{\tau})\epsilon_{\tau}\|_{(\tilde{\Sigma}^{\operatorname{res}}_{t-1})^{-1}}  + \|\lambda(U^{\operatorname{res}})^{\top}\delta^{\operatorname{res}}_t\|_{(\tilde{\Sigma}^{\operatorname{res}}_{t-1})^{-1}} \notag\\
    & \le \|\sum_{\tau=1}^{t-1}(U^{\operatorname{res}})^{\top}\varphi(X_{\tau}, a_{\tau})\epsilon_{\tau}\|_{(\tilde{\Sigma}^{\operatorname{res}}_{t-1})^{-1}}  + \sqrt{\lambda}\|\delta^{\operatorname{res}}_t\|_2 \label{eq:delta_res_oracle_estimation_error}
\end{align}
From Lemma~\ref{lemma:self_norm_bound}, we obtain that, for all $\eta\in(0,1)$, with probability at least $1-\eta$, 
\begin{align*}
     \|\hat{\delta}^{\operatorname{res}}_t-\delta^{\operatorname{res}}_t\|_{\hat{\Sigma}_{t-1}}& \le \sigma\sqrt{2\log\left(\frac{1}{\eta}\right)+\log\left(\frac{\operatorname{det}(\tilde{\Sigma}^{\operatorname{res}}_{t-1})}{\operatorname{det}(\lambda I_{p^{\operatorname{res}}})}\right)} + \sqrt{\lambda}\|\delta^{\operatorname{res}}_t\|_2
\end{align*}
Assuming that $L$ and $M$ where chosen sufficiently large in Algorithm~\ref{alg:isd_linucb} such that they indeed bound the context-action features and the linear parameter respectively, we get that, with probability at least $1-\eta$,
\begin{align*}
    \|\hat{\delta}^{\operatorname{res}}_t-\delta^{\operatorname{res}}_t\|_{\hat{\Sigma}_{t-1}}& \le \sigma\sqrt{2\log\left(\frac{1}{\eta}\right)+p^{\operatorname{res}}\log\left(1+\frac{tL^{2}}{\lambda p^{\operatorname{res}}}\right)} + \sqrt{\lambda}M \eqqcolon\sqrt{\hat{\rho}^{\operatorname{res}}_t(\eta, L, M)}.
\end{align*}
Assumption~\ref{ass:isd} allows us to estimate $\gamma_{0, t}$ by separately estimating $\beta^{\operatorname{inv}}$ and $\delta^{\operatorname{res}}_t$ on the invariant and residual subspace respectively. This implies that 
the full regret consists of the sum of regret on the invariant and residual subspace, as follows 
\begin{align}
    \operatorname{reg}_t & = (\varphi(X_t, a^*_t)-\varphi(X_t, a_t))^{\top} \gamma_{0,t}\notag \\
        & \le \varphi(X_{t},a_t)^{\top}(\bar{\gamma}_t - \gamma_{0,t}) \notag\\
    & = \underbrace{\varphi(X_{t},a_t)^{\top}(\bar{\beta}_t-\beta^{\operatorname{inv}})}_{\operatorname{reg}^{\operatorname{inv}}_t}  + \underbrace{\varphi(X_{t},a_t)^{\top}(\bar{\delta}_t-\delta^{\operatorname{res}}_t)}_{\operatorname{reg}^{\operatorname{res}}_t} \notag \\
    & = {\operatorname{reg}^{\operatorname{res}}_t},\label{eq:inst_regret_separation}
\end{align}
where $\bar{\gamma}_t = \argmax_{\gamma\in\hat{\mathcal{C}}^{\beta}\oplus\hat{\mathcal{C}}^{\delta}_t}\varphi(X_t, a_t)^{\top}\gamma$, 
$(\bar{\beta}_t, \bar{\delta}_t)\coloneqq\argmax_{\beta\in\hat{\mathcal{C}}^{\beta}, \delta\in\hat{\mathcal{C}}^{\delta}_t}\varphi(X_t, a_t)^{\top}(\beta+\delta)$ and we have that by construction $\bar{\gamma}_t = \bar{\beta}_t+\bar{\delta}_t$. 
The first inequality holds because, by definition, $\bar{\gamma}_t$ maximizes the upper confidence bound at time $t$. The last equality follows from the assumption that we have oracle knowledge of $\beta^{\operatorname{inv}}$, therefore $\hat{\mathcal{C}}^{\beta} = \{\beta^{\operatorname{inv}}\}$ and $\operatorname{reg}_t^{\operatorname{inv}}=0$ 
and we only need to consider the regret on the residual subspace. 
Then, for all $t\in[T]$, the residual instantaneous regret is such that
\begin{align}
    \operatorname{reg}^{\operatorname{res}}_{t} &= 
    \varphi(X_t, a_t)^{\top}(\bar{\delta}_t-\delta^{\operatorname{res}}_t) \notag\\
    & \le \|(U^{\operatorname{res}})^{\top}\varphi(X_t, a_t)\|_{(\tilde{\Sigma}^{\operatorname{res}}_{t-1})^{-1}}  \|(U^{\operatorname{res}})^{\top}(\bar{\delta}_t-\delta^{\operatorname{res}}_t)\|_{\tilde{\Sigma}^{\operatorname{res}}_{t-1}} \notag\\
    & \le 2\|(U^{\operatorname{res}})^{\top}\varphi(X_t, a_t)\|_{(\tilde{\Sigma}^{\operatorname{res}}_{t-1})^{-1}}\sqrt{\hat{\rho}^{\operatorname{res}}_t(\eta,L,M)}\label{eq:residual_regret_bound_norms}, 
\end{align}
 The first inequality follows from Cauchy-Schwarz. The last inequality is obtained by adding and subtracting $\hat{\delta}^{\operatorname{res}}_t$ inside the second norm and using that $\bar{\delta}_t\in\hat{\mathcal{C}}_t^{\delta}$ by definition and that, with probability at least $1-\eta$, $\delta^{\operatorname{res}}_t\in\hat{\mathcal{C}}_t^{\delta}$. 
Moreover, we have that for all $t\in[T]$, $\operatorname{reg}_t\le 2LM$ and for $\eta=1/T$, with $T>2$, that $\hat{\rho}^{\operatorname{res}}_t(\eta,L,M)\ge1$.
Together with \eqref{eq:residual_regret_bound_norms}, it implies that 
\begin{align*}
    \operatorname{reg}^{\operatorname{res}}_t & \le \min\{2LM,  2\sqrt{
    \hat{\rho}^{\operatorname{res}}_t(\eta, L, M)}\|(U^{\operatorname{res}})^{\top}\varphi(X_t, a_t)\|_{(\tilde{\Sigma}^{\operatorname{res}}_{t-1})^{-1}}\} \\
    & \le 2\max\{LM, 1\}\sqrt{\hat{\rho}^{\operatorname{res}}_t(\eta, L, M)}\min\{1, \|(U^{\operatorname{res}})^{\top}\varphi(X_t, a_t)\|_{(\tilde{\Sigma}^{\operatorname{res}}_{t-1})^{-1}}\} \\
    & \le C^{\operatorname{res}}\sqrt{\hat{\rho}^{\operatorname{res}}_t(\eta,L,M)} \min\{1, \|(U^{\operatorname{res}})^{\top}\varphi(X_t, a_t)\|_{(\tilde{\Sigma}^{\operatorname{res}}_{t-1})^{-1}}\} 
\end{align*}
where $C^{\operatorname{res}}\coloneqq 2\max\{LM, 1\}$. Using that, for all $x\ge0$, $\min\{1, x\}\le 2\log(1+x)$, we can upper bound the cumulative full regret as
\begin{align}
   \operatorname{Reg}_{T} & = \sum_{t\in[T]}\operatorname{reg}^{\operatorname{res}}_{T} \le \sqrt{T\sum_{t\in[T]}(\operatorname{reg}^{\operatorname{res}}_t)^2} \notag\\ & \le  C^{\operatorname{res}}\sqrt{T\rho^{\operatorname{res}}_T(\eta, L, M)\sum_{t\in[T]}\log(1 + \|(U^{\operatorname{res}})^{\top}\varphi(X_t, a_t)\|^2_{(\tilde{\Sigma}^{\operatorname{res}}_{t-1})^{-1}})}.
   \label{eq:cumreg_res_bouund}
\end{align}
As a final step, we use the following Lemma by~\citet{lattimore2020bandit}, with notation adapted to our problem.
\begin{lemma}[\citet{lattimore2020bandit}, Lemma 19.4]
\label{lem:bandits_alg_contexts_bound}
    Consider the same assumptions of Theorem~\ref{thm:oracle_regret}. Let $\lambda>0$ and $L<\infty$. Then,
    \begin{align*}
    \sum_{t\in[T]}\log(1 + \|(U^{\operatorname{res}})^{\top}\varphi(X_t, a_t)\|^2_{(\tilde{\Sigma}^{\operatorname{res}}_{t-1})^{-1}}) \le p^{\operatorname{res}}\log\left(1+\frac{TL^2}{\lambda p^{\operatorname{res}}}\right).
\end{align*}
\end{lemma}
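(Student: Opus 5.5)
We prove Lemma~\ref{lem:bandits_alg_contexts_bound} with the standard elliptical potential argument, carried out entirely in the $p^{\operatorname{res}}$-dimensional coordinates given by $U^{\operatorname{res}}$. Write $z_t\coloneqq (U^{\operatorname{res}})^{\top}\varphi(X_t,a_t)\in\mathbb{R}^{p^{\operatorname{res}}}$. Since $U^{\operatorname{res}}$ has orthonormal columns, $\|z_t\|_2\le\|\varphi(X_t,a_t)\|_2\le L$. Moreover,
\begin{equation*}
\tilde{\Sigma}^{\operatorname{res}}_{t}=(U^{\operatorname{res}})^{\top}\hat{\Sigma}_t U^{\operatorname{res}}=\lambda I_{p^{\operatorname{res}}}+\sum_{\tau=1}^{t}z_\tau z_\tau^{\top},
\end{equation*}
so $\tilde{\Sigma}^{\operatorname{res}}_{t}=\tilde{\Sigma}^{\operatorname{res}}_{t-1}+z_tz_t^{\top}$, with $\tilde{\Sigma}^{\operatorname{res}}_{0}=\lambda I_{p^{\operatorname{res}}}$. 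The problem is then exactly the usual one in dimension $p^{\operatorname{res}}$.

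The first step is the matrix determinant lemma. Factoring out $\tilde{\Sigma}^{\operatorname{res}}_{t-1}$ gives
\begin{equation*}
\det(\tilde{\Sigma}^{\operatorname{res}}_{t})=\det(\tilde{\Sigma}^{\operatorname{res}}_{t-1})\det\bigl(I+(\tilde{\Sigma}^{\operatorname{res}}_{t-1})^{-1/2}z_tz_t^{\top}(\tilde{\Sigma}^{\operatorname{res}}_{t-1})^{-1/2}\bigr)=\det(\tilde{\Sigma}^{\operatorname{res}}_{t-1})\bigl(1+\|z_t\|^2_{(\tilde{\Sigma}^{\operatorname{res}}_{t-1})^{-1}}\bigr).
\end{equation*}
Taking logarithms and telescoping over $t\in[T]$ yields the identity
\begin{equation*}
\sum_{t\in[T]}\log\bigl(1+\|z_t\|^2_{(\tilde{\Sigma}^{\operatorname{res}}_{t-1})^{-1}}\bigr)=\log\frac{\det(\tilde{\Sigma}^{\operatorname{res}}_{T})}{\det(\lambda I_{p^{\operatorname{res}}})}.
\end{equation*}

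The second step bounds the determinant by AM--GM on the eigenvalues of the positive definite matrix $\tilde{\Sigma}^{\operatorname{res}}_{T}$:
\begin{equation*}
\det(\tilde{\Sigma}^{\operatorname{res}}_{T})\le\Bigl(\tfrac{\operatorname{tr}(\tilde{\Sigma}^{\operatorname{res}}_{T})}{p^{\operatorname{res}}}\Bigr)^{p^{\operatorname{res}}}.
\end{equation*}
The trace satisfies $\operatorname{tr}(\tilde{\Sigma}^{\operatorname{res}}_{T})=\lambda p^{\operatorname{res}}+\sum_{t\le T}\|z_t\|_2^2\le\lambda p^{\operatorname{res}}+TL^2$. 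Dividing by $\lambda^{p^{\operatorname{res}}}$ and taking logarithms gives $p^{\operatorname{res}}\log\bigl(1+\tfrac{TL^2}{\lambda p^{\operatorname{res}}}\bigr)$, which is the claimed bound. This is the second part of Lemma~\ref{lemma:self_norm_bound} applied with $p$ replaced by $p^{\operatorname{res}}$, so one could also cite it directly after the reduction.

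There is no real obstacle here. The only point needing care is the reduction to dimension $p^{\operatorname{res}}$. The regularizer must be $\lambda I_{p^{\operatorname{res}}}$ after projection, which holds because $(U^{\operatorname{res}})^{\top}U^{\operatorname{res}}=I_{p^{\operatorname{res}}}$; this is why the bound scales with $p^{\operatorname{res}}$ rather than $p$. We must also check that the projected features keep the norm bound $L$, which orthonormality of the columns of $U^{\operatorname{res}}$ gives.
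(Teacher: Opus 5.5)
Your proof is correct and takes the same approach as the source: the paper gives no proof of its own and simply cites Lemma~19.4 of \citet{lattimore2020bandit}. That lemma is proved by the matrix determinant lemma, telescoping of $\log\det$, and AM--GM on the trace, which is exactly what you carry out after the correctly justified reduction $\tilde{\Sigma}^{\operatorname{res}}_t=\lambda I_{p^{\operatorname{res}}}+\sum_{\tau\le t}z_\tau z_\tau^{\top}$ with $\|z_\tau\|_2\le L$.
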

Lemma~\ref{lem:bandits_alg_contexts_bound} implies that 
\begin{align*}
    \operatorname{Reg}_T \le C^{\operatorname{res}}\sqrt{T p^{\operatorname{res}}\log\left(1+\frac{TL^2}{\lambda p^{\operatorname{res}}}\right)}\left(\sigma\sqrt{2\log\left(\frac{1}{\eta}\right)+p^{\operatorname{res}}\log\left(1+\frac{TL^{2}}{\lambda p^{\operatorname{res}}}\right)} + \sqrt{\lambda}M\right).
\end{align*}
Finally, choosing $\eta=1/T$ implies that $\operatorname{Reg}_T$ is $\tilde{O}(p^{\operatorname{res}}\sqrt{T})$.    
\end{proof}

\subsection{Oracle subspaces}
\label{sec:proof_oracle_subs}

\begin{proof}[Proof of Theorem~\ref{thm:estimated_beta_inv}]
We start by showing that, under oracle knowledge of the subspaces, the radius $\sqrt{\hat{\rho}^{\operatorname{inv}}_{T_0}(\eta, L, M)}$ for the confidence set $\hat{\mathcal{C}}^{\beta}$ introduced in \eqref{eq:beta_confidence_set} can be defined to be $\tilde{O}(\sqrt{p^{\operatorname{inv}}}+ \sqrt{\frac{1}{\lambda_0}})$.
To do so, we consider the estimation error on $\hat{\beta}^{\operatorname{inv}}$ defined in \eqref{eq:beta_inv_est}, where $\hat{U}^{\operatorname{inv}}$ is replaced by $U^{\operatorname{inv}}$.  
\begin{align}
    & \quad\; \|\hat{\beta}^{\operatorname{inv}}-\beta\|_{\hat{\Sigma}_{[-T_0]}}\\
    & = \|U^{\operatorname{inv}}(\tilde{\Sigma}^{\operatorname{inv}}_{[-T_0]})^{-1} \sum_{t=-T_0}^{-1}(U^{\operatorname{inv}})^{\top}\varphi(X_t, a_t)R_t^{a_t} - U^{\operatorname{inv}}(U^{\operatorname{inv}})^{\top}\beta^{\operatorname{inv}}\|_{\hat{\Sigma}_{[-T_0]}}\notag\\
    & = \|(\tilde{\Sigma}^{\operatorname{inv}}_{[-T_0]})^{-1} \sum_{t=-T_0}^{-1}(U^{\operatorname{inv}})^{\top}\varphi(X_t, a_t)R_t^{a_t} - (U^{\operatorname{inv}})^{\top}\beta^{\operatorname{inv}}\|_{\tilde{\Sigma}^{\operatorname{inv}}_{[-T_0]}} \notag\\
    & = \|(\tilde{\Sigma}^{\operatorname{inv}}_{[-T_0]})^{-1} \sum_{t=-T_0}^{-1}(U^{\operatorname{inv}})^{\top}\varphi(X_t, a_t)(\varphi(X_{t},a_t)^{\top}(\beta^{\operatorname{inv}}+\delta^{\operatorname{res}}_t)+\epsilon_t) - (U^{\operatorname{inv}})^{\top}\beta^{\operatorname{inv}}\|_{\tilde{\Sigma}^{\operatorname{inv}}_{[-T_0]}} \notag\\
    & = \|(\tilde{\Sigma}^{\operatorname{inv}}_{[-T_0]})^{-1} \sum_{t=-T_0}^{-1}(U^{\operatorname{inv}})^{\top}\varphi(X_t, a_t)(\varphi(X_{t},a_t)^{\top}U^{\operatorname{res}}(U^{\operatorname{res}})^{\top}\delta^{\operatorname{res}}_t +\epsilon_t)\|_{\tilde{\Sigma}^{\operatorname{inv}}_{[-T_0]}} \notag\\
    & \le \|(\tilde{\Sigma}^{\operatorname{inv}}_{[-T_0]})^{-1} \sum_{t=-T_0}^{-1}(U^{\operatorname{inv}})^{\top}\varphi(X_t, a_t)\varphi(X_{t},a_t)^{\top}U^{\operatorname{res}}(U^{\operatorname{res}})^{\top}\delta^{\operatorname{res}}_t\|_{\tilde{\Sigma}^{\operatorname{inv}}_{[-T_0]}} \notag\\
    &\quad + \|(\tilde{\Sigma}^{\operatorname{inv}}_{[-T_0]})^{-1} \sum_{t=-T_0}^{-1}(U^{\operatorname{inv}})^{\top}\varphi(X_t, a_t)\epsilon_t\|_{\tilde{\Sigma}^{\operatorname{inv}}_{[-T_0]}}\notag \\
    & =  \| \sum_{t=-T_0}^{-1}(U^{\operatorname{inv}})^{\top}\varphi(X_t, a_t)\varphi(X_{t},a_t)^{\top}U^{\operatorname{res}}(U^{\operatorname{res}})^{\top}\delta^{\operatorname{res}}_t\|_{(\tilde{\Sigma}^{\operatorname{inv}}_{[-T_0]})^{-1}}  \label{eq:cross_cov_finite_sample}\\
    &\quad + \| (\tilde{\mathbf{X}}^{\operatorname{inv}}_{T_0} )^{\top}{\bm \epsilon}_{T_0}\|_{(\tilde{\Sigma}^{\operatorname{inv}}_{[-T_0]})^{-1}}\label{eq:betainv_subgauss_error_term}
\end{align}
where $\tilde{\Sigma}^{\operatorname{inv}}_{[-T_0]}\coloneqq (U^{\operatorname{inv}})^{\top}\hat{\Sigma}_{[-T_0]}U^{\operatorname{inv}}$,  $\tilde{\mathbf{X}}^{\operatorname{inv}}_{T_0}\coloneqq [(U^{\operatorname{inv}})^{\top}\varphi(X_{-T_0}, a_{-T_0}), \dots, (U^{\operatorname{inv}})^{\top}\varphi(X_{-1}, a_{-1})]^{\top}\in\mathbb{R}^{T_0\times p^{\operatorname{inv}}}$ and ${\bm \epsilon}_{T_0}\coloneqq [\epsilon_{-T_0}, \dots,\epsilon_{-1}]^{\top}\in\mathbb{R}^{T_0}$. 
For the term in \eqref{eq:cross_cov_finite_sample}, it holds that 
\begin{align}
    &  \| \sum_{t=-T_0}^{-1}(U^{\operatorname{inv}})^{\top}\varphi(X_t, a_t)\varphi(X_{t},a_t)^{\top}U^{\operatorname{res}}(U^{\operatorname{res}})^{\top}\delta^{\operatorname{res}}_t\|_{(\tilde{\Sigma}^{\operatorname{inv}}_{[-T_0]})^{-1}}\notag\\
    \le & \|(\tilde{\Sigma}^{\operatorname{inv}}_{[-T_0]})^{-\frac{1}{2}}\|_{\operatorname{op}} \|\sum_{t=-T_0}^{-1}(U^{\operatorname{inv}})^{\top}\varphi(X_t, a_t)\varphi(X_{t},a_t)^{\top}U^{\operatorname{res}}(U^{\operatorname{res}})^{\top}\delta^{\operatorname{res}}_t\|_2 \notag\\
    \le & \sqrt{\frac{1}{\lambda_0 T_0}} \| \sum_{t=-T_0}^{-1} \alpha_t \|_2
    \label{eq:beta_bound_delta_res_cross_cov_term}
\end{align}
where $\alpha_t\coloneqq(U^{\operatorname{inv}})^{\top}\varphi(X_t, a_t)\varphi(X_t, a_t)^{\top}U^{\operatorname{res}}(U^{\operatorname{res}})^{\top}\delta^{\operatorname{res}}_t \in\mathbb{R}^{p^{\operatorname{inv}}}$. We bound $\|\sum_{t=-T_0}^{-1}\alpha_t\|_2$ using the matrix Hoeffding inequality \citep[see, for example,][Theorem 1.3, which we report below]{tropp2012user}.
\begin{lemma}[Matrix Hoeffding inequality]
\label{lem:matrix_hoeff}
    Let $\{\mathbf{M}_k\}_{k\in[K]}$ a finite sequence of independent, random, self-adjoint matrices of dimension $p$ and let $\{\mathbf{A}_k\}_{k\in [K]}$ be a sequence of fixed self-adjoint matrices of dimension $p$. Assume that, for all $k\in[K]$, $\mathbb{E}[\mathbf{M}_k]=0$ and $\mathbf{M}_k^2\preceq \mathbf{A}_k$ almost surely. Then, for all $\xi>0$, it holds that
    \begin{equation*}
        \mathbb{P}\left(\lambda_{\max}\left(\sum_{k\in[K]}\mathbf{M}_k\right)\ge \xi\right)\le p \exp\left(-\frac{\xi^2}{8\sigma^2}\right)\quad \text{where} \quad \sigma^2\coloneqq \left\|\sum_{k\in[K]}\mathbf{A}_k^2\right\|.
    \end{equation*}
\end{lemma}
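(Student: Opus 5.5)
This is the matrix Hoeffding inequality of \citet{tropp2012user}, quoted here as a tool, so I would not reprove it from scratch. The plan follows Tropp's matrix Laplace-transform method.

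\textbf{Step 1 (Laplace transform).} For every $\theta>0$, Markov's inequality together with the spectral mapping $e^{\theta\lambda_{\max}(Y)}=\lambda_{\max}(e^{\theta Y})\le \operatorname{tr} e^{\theta Y}$ gives
\begin{equation*}
\mathbb{P}\big(\lambda_{\max}(Y)\ge\xi\big)\le e^{-\theta\xi}\,\mathbb{E}\operatorname{tr}\exp(\theta Y),\qquad Y\coloneqq\sum_{k\in[K]}\mathbf{M}_k .
\end{equation*}

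\textbf{Step 2 (subadditivity of cumulants).} The exponential does not factor across sums of noncommuting matrices. To get around this, I would use Lieb's concavity theorem: the map $A\mapsto\operatorname{tr}\exp(H+\log A)$ is concave on positive definite matrices. Apply it together with Jensen's inequality, conditioning on one $\mathbf{M}_k$ at a time and using independence. This yields
\begin{equation*}
\mathbb{E}\operatorname{tr}\exp\Big(\sum_k\theta\mathbf{M}_k\Big)\le\operatorname{tr}\exp\Big(\sum_k\log\mathbb{E}e^{\theta\mathbf{M}_k}\Big).
\end{equation*}

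\textbf{Step 3 (cumulant bound).} This is the main obstacle: bounding each $\log\mathbb{E}e^{\theta\mathbf{M}_k}$ in terms of $\mathbf{A}_k$, using only $\mathbb{E}\mathbf{M}_k=0$ and $\mathbf{M}_k^2\preceq\mathbf{A}_k$. A Taylor argument alone fails because the matrices do not commute. I would first try a symmetrization step: let $\varepsilon$ be an independent Rademacher sign and compare $\mathbf{M}_k$ with $\varepsilon\mathbf{A}_k$, since $\mathbb{E}_\varepsilon e^{\theta\varepsilon\mathbf{A}}\preceq e^{\theta^2\mathbf{A}^2/2}$. Combining this with operator monotonicity of the logarithm, the aim is
\begin{equation*}
\log\mathbb{E}e^{\theta\mathbf{M}_k}\preceq 2\theta^2\mathbf{A}_k^2 .
\end{equation*}
The factor $2$ is the price of symmetrization, and it is what produces the constant $8$ in the statement.

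\textbf{Step 4 (combine and optimize).} Since $\operatorname{tr}\exp$ is monotone in the Loewner order,
\begin{equation*}
\operatorname{tr}\exp\Big(2\theta^2\sum_k\mathbf{A}_k^2\Big)\le p\,e^{2\theta^2\sigma^2}.
\end{equation*}
Hence
\begin{equation*}
\mathbb{P}\big(\lambda_{\max}(Y)\ge\xi\big)\le p\,e^{-\theta\xi+2\theta^2\sigma^2}.
\end{equation*}
Choosing $\theta=\xi/(4\sigma^2)$ gives exactly $p\exp\!\big(-\xi^2/(8\sigma^2)\big)$.

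In the paper's application, the vectors $\alpha_t$ are not self-adjoint matrices. I would handle this with the Hermitian dilation
\begin{equation*}
\begin{pmatrix}0&\alpha_t\\ \alpha_t^{\top}&0\end{pmatrix},
\end{equation*}
whose largest eigenvalue equals $\|\alpha_t\|_2$. One caveat: this gives centered summands only because Assumption~\ref{ass:isd}(ii) makes the cross-covariance zero, and only after subtracting the conditional expectation.
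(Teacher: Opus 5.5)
Steps 1, 2 and 4 are sound. Step 3, which you rightly call the crux, cannot be carried out the way you propose.

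Comparing $\mathbf{M}_k$ with $\varepsilon\mathbf{A}_k$ has no Loewner-order justification: the chord argument behind the scalar Hoeffding lemma needs commutativity. A per-summand symmetrization at the operator level breaks in two places:
\begin{enumerate}[(a)]
\item Going from $\mathbb{E}e^{\theta\mathbf{M}_k}=\mathbb{E}e^{\theta(\mathbf{M}_k-\mathbb{E}\mathbf{M}_k')}$ to $\mathbb{E}e^{\theta(\mathbf{M}_k-\mathbf{M}_k')}$ in the order $\preceq$ is Jensen's inequality for the matrix exponential, which is not operator convex.
\item Even after the Rademacher bound, you cannot replace $(\mathbf{M}_k-\mathbf{M}_k')^2$ by $4\mathbf{A}_k^2$ inside an exponential, because $\exp$ is not operator monotone.
\end{enumerate}
Operator monotonicity of $\log$ repairs neither.

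The missing idea is to symmetrize inside the trace, \emph{before} applying Lieb. There only convexity and monotonicity of the real-valued map $Z\mapsto\operatorname{tr}e^{Z}$ are needed. With independent copies $\mathbf{M}_k'$ and independent signs $\varepsilon_k$,
\[
\mathbb{E}\operatorname{tr}e^{\theta Y}\le\mathbb{E}\operatorname{tr}\exp\Big(\theta\sum_k(\mathbf{M}_k-\mathbf{M}_k')\Big)=\mathbb{E}\operatorname{tr}\exp\Big(\theta\sum_k\varepsilon_k(\mathbf{M}_k-\mathbf{M}_k')\Big).
\]
Conditionally on $(\mathbf{M}_k,\mathbf{M}_k')_k$, the right-hand side is a Rademacher series with fixed coefficients $B_k\coloneqq\mathbf{M}_k-\mathbf{M}_k'$. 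Your Step 2 then applies, and the cumulant bound $\log\mathbb{E}_{\varepsilon}e^{\theta\varepsilon B_k}=\log\cosh(\theta B_k)\preceq\tfrac{\theta^2}{2}B_k^2$ is plain functional calculus. Next, $B_k^2\preceq 2\mathbf{M}_k^2+2\mathbf{M}_k'^2\preceq 4\mathbf{A}_k^2$, using $(X-X')^2+(X+X')^2=2X^2+2X'^2$. Together with monotonicity of $\operatorname{tr}\exp$ this gives $\mathbb{E}\operatorname{tr}e^{\theta Y}\le\operatorname{tr}\exp(2\theta^2\sum_k\mathbf{A}_k^2)$, and your Step 4 goes through unchanged. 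I believe this trace-level symmetrization is how the argument in Tropp, whom the paper cites without reproducing a proof, proceeds.

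The inequality $B_k^2\preceq 4\mathbf{A}_k^2$ needs $\mathbf{M}_k^2\preceq\mathbf{A}_k^2$, the hypothesis Tropp's Theorem 1.3 uses. The printed condition $\mathbf{M}_k^2\preceq\mathbf{A}_k$ is a misprint. You carried it into Step 3 (``using only $\mathbb{E}\mathbf{M}_k=0$ and $\mathbf{M}_k^2\preceq\mathbf{A}_k$'') without noticing that it is incompatible with a variance proxy built from $\mathbf{A}_k^2$. Under the literal hypothesis, both the lemma and your Step 3 target are false. Take $p=K=1$, $\mathbf{M}_1=a\varepsilon$ and $\mathbf{A}_1=a^2$ with $a$ small:
\begin{enumerate}[(a)]
\item $\mathbb{P}(\mathbf{M}_1\ge a)=1/2$, whereas the claimed bound at $\xi=a$ is $e^{-1/(8a^2)}$.
\item $\log\mathbb{E}e^{\theta\mathbf{M}_1}=\log\cosh(\theta a)\sim\theta^2a^2/2$ as $\theta\to0$, which exceeds $2\theta^2\mathbf{A}_1^2=2\theta^2a^4$ once $a^2<1/4$.
\end{enumerate}
The paper's own application matches the corrected form: there $\mathscr{S}(\alpha_t)^2\preceq(L^2M)^2 I_{p^{\operatorname{inv}}+1}$, i.e.\ $\mathbf{A}_k=L^2M\,I_{p^{\operatorname{inv}}+1}$ and $\sigma^2=T_0(L^2M)^2$.
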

By definition of ISD, it holds that, for all $t\in[-T_0]$, $\mathbb{E}[\alpha_t]=0$. Using Cauchy-Schwarz, we also have that $\|\alpha_t\|_2 \le L^2M$. Moreover, we can construct the self-adjoint dilation \citep[see][Section 2.6]{tropp2012user} for $\alpha_t$ as 
\begin{equation*}
    \mathscr{S}(\alpha_t)\coloneqq \begin{bmatrix}
        0 & \alpha_t \\
        \alpha_t^{\top} & 0
    \end{bmatrix} \in\mathbb{R}^{(p^{\operatorname{inv}}+1)\times(p^{\operatorname{inv}}+1)},
\end{equation*}
which is such that $\|\alpha_t\|_2 = \|\mathscr{S}(\alpha_t)\|_{\operatorname{op}} = \lambda_{\max}(\mathscr{S}(\alpha_t))$ and 
\begin{equation*}
    \mathscr{S}(\alpha_t)^2\coloneqq \begin{bmatrix}
        \alpha_t\alpha_t^{\top} & 0 \\
        0 & \alpha_t^{\top}\alpha_t
    \end{bmatrix}.
\end{equation*}
Hoeffding inequality can be now applied to the sequence of self-adjoint matrices $(\mathscr{S}(\alpha_t))_{t\in[-T_0]}$. Let 
$\sigma_{\alpha}^2\coloneqq\|\sum_{t\in[-T_0]} \mathscr{S}(\alpha_t)^2\|_{\operatorname{op}} \le \sum_{t\in[-T_0]}\|\mathscr{S}(\alpha_t)\|^2_{\operatorname{op}}\le T_0(L^2M)^2$. Then, for all $\xi\ge 0$ it holds by Hoeffding's inequality that
\begin{equation*}
    \mathbb{P}\left(\lambda_{\max}\left(\sum_{t\in[-T_0]}\mathscr{S}(\alpha_t)\right)\ge\xi\right)\le (p^{\operatorname{inv}}+1)e^{-\frac{\xi^2}{8\sigma_{\alpha}^2}}.
\end{equation*}
Using the definition of $\mathscr{S}(\alpha_t)$ and defining $\eta\coloneqq(p^{\operatorname{inv}}+1)e^{-\frac{\xi^2}{8\sigma_{\alpha}^2}}$, the above is equivalent to 
\begin{equation*}
    \mathbb{P}\left(\|\sum_{t\in[-T_0]}\alpha_t\|_2\le 2L^2M\sqrt{2T_0\log\left(\frac{p^{\operatorname{inv}}+1}{\eta}\right)}\right)\ge 1-\eta.
\end{equation*}
Therefore, with probability at least $1-\eta$, $\sqrt{\frac{1}{\lambda_0 T_0}} \|\sum_{t=-T_0}^{-1}\alpha_t\|_2$ is $O\left( \sqrt{\frac{1}{\lambda_0 }\log\left(\frac{p^{\operatorname{inv}}}{\eta}\right)}  \right)$.

We now need to bound the term $\| (\tilde{\mathbf{X}}^{\operatorname{inv}}_{T_0} )^{\top}{\bm \epsilon}_{T_0}\|_{(\tilde{\Sigma}^{\operatorname{inv}}_{[-T_0]})^{-1}}$ in \eqref{eq:betainv_subgauss_error_term}.
By Assumption~\ref{ass:min_eig}, we have that $\hat{\Sigma}_{[-T_0]}\succeq \lambda_0T_0 I_p$, which implies that $\tilde{\Sigma}^{\operatorname{inv}}_{[-T_0]}\succeq \lambda_0T_0 I_{p^{\operatorname{inv}}}$ and therefore $\tilde{\Sigma}^{\operatorname{inv}}_{[-T_0]}\succeq \frac{1}{2}(\lambda_0T_0 I_{p^{\operatorname{inv}}}+\tilde{\Sigma}^{\operatorname{inv}}_{[-T_0]})$. This implies that, 
\begin{align*}
    \| (\tilde{\mathbf{X}}^{\operatorname{inv}}_{T_0} )^{\top}{\bm \epsilon}_{T_0}\|_{(\tilde{\Sigma}^{\operatorname{inv}}_{[-T_0]})^{-1}} \le \sqrt{2}\|(\tilde{\mathbf{X}}^{\operatorname{inv}}_{T_0} )^{\top}{\bm \epsilon}_{T_0}\|_{(\lambda_0T_0 I_{p^{\operatorname{inv}}}+\tilde{\Sigma}^{\operatorname{inv}}_{T_0})^{-1}}
\end{align*}
Lemma~\ref{lemma:self_norm_bound} implies that, for all $\eta\in(0,1)$, with probability at least $1-\eta$,
\begin{align*}
    \|(\tilde{\mathbf{X}}^{\operatorname{inv}}_{T_0} )^{\top}{\bm \epsilon}_{T_0}\|_{(\lambda_0T_0 I_{p^{\operatorname{inv}}}+\tilde{\Sigma}^{\operatorname{inv}}_{[-T_0]})^{-1}} \le \sigma\sqrt{\log\left(\frac{1}{\eta}\right)+\frac{1}{2}\log\left(\frac{\operatorname{det}(\tilde{\Sigma}^{\operatorname{inv}}_{[-T_0]})}{\operatorname{det}(\lambda_0T_0I_{p^{\operatorname{inv}}})}\right)} 
\end{align*}
By the arithmetic mean---geometric mean inequality, it holds that
\begin{align*}
    \operatorname{det}(\tilde{\Sigma}^{\operatorname{inv}}_{[-T_0]}) \le \left(\frac{1}{p^{\operatorname{inv}}}\operatorname{trace}(\tilde{\Sigma}^{\operatorname{inv}}_{[-T_0]})\right)^{p^{\operatorname{inv}}} \le \left(\frac{T_0 L^{2}}{p^{\operatorname{inv}}}\right)^{p^{\operatorname{inv}}}
\end{align*}
where the last inequality follows from
$ \operatorname{trace}(\tilde{\Sigma}^{\operatorname{inv}}_{[-T_0]}) = \operatorname{trace}((\tilde{\mathbf{X}}^{\operatorname{inv}}_{T_0})^{\top}\tilde{\mathbf{X}}^{\operatorname{inv}}_{T_0})\le T_0L^{2}$. Moreover, $\operatorname{det}(\lambda_0T_0I_{p^{\operatorname{inv}}}) = (\lambda_0T_0)^{p^{\operatorname{inv}}}$.
Therefore, with probability at least $1-\eta$,
\begin{align}
     \|(\tilde{\mathbf{X}}^{\operatorname{inv}}_{T_0} )^{\top}{\bm \epsilon}_{T_0}\|_{(\lambda_0T_0 I_{p^{\operatorname{inv}}}+\tilde{\Sigma}^{\operatorname{inv}}_{[-T_0]})^{-1}}  & \le \sigma\sqrt{\log\left(\frac{1}{\eta}\right)+\frac{p^{\operatorname{inv}}}{2}\log\left(\frac{T_0L^2}{p^{\operatorname{inv}}\lambda_0T_0}\right)}  \label{eq:or_subs_beta_err_eps}
\end{align}
and, with probability at least $1-2\eta$,
\begin{align}
    & \|\hat{\beta}^{\operatorname{inv}}-\beta\|_{\hat{\Sigma}_{[-T_0]}} \notag\\
     \le &\sigma\sqrt{2\log\left(\frac{1}{\eta}\right)+p^{\operatorname{inv}}\log\left(\frac{L^2}{p^{\operatorname{inv}}\lambda_0}\right)} +  2L^2M\sqrt{\frac{2}{\lambda_0}\log\left(\frac{p^{\operatorname{inv}}+1}{\eta}\right)}\eqqcolon \hat{\rho}^{\operatorname{inv}}_{T_0}(\eta, L, M)
    \label{eq:oracle_beta_radius}
\end{align}
Using Assumption~\ref{ass:min_eig}, we further obtain that, with probability at least $1-2\eta$,
\begin{align}
    \|\hat{\beta}^{\operatorname{inv}}-\beta\|_2 & \le \sqrt{\frac{\hat{\rho}^{\operatorname{inv}}_{T_0}(\eta, L, M)}{\lambda_0 T_0}}
    \label{eq:beta_inv_2norm_bound}
\end{align}
Consider now Algorithm~\ref{alg:isd_linucb}. For all $a\in\mathcal{A}$, let $\operatorname{UCB}_t^{\beta,\delta}(a)\coloneqq\max_{\beta\in\hat{\mathcal{C}}^{\beta}, \delta\in\hat{\mathcal{C}}^{\delta}_t}\varphi(X_t, a)^{\top}(\beta+\delta)$ and let $(\bar{\beta}_t, \bar{\delta}_t)\coloneqq \argmax_{\beta\in\hat{\mathcal{C}}^{\beta}, \delta\in\hat{\mathcal{C}}^{\delta}_t} \varphi(X_t, a_t)^{\top}(\beta+\delta)$
. Then, as in \eqref{eq:inst_regret_separation}, for all $t\in[T]$, the instantaneous regret is such that
$
    \operatorname{reg}_t \le \operatorname{reg}^{\operatorname{inv}}_t + \operatorname{reg}^{\operatorname{res}}_t.
$
 We upper bound the $\operatorname{reg}^{\operatorname{inv}}_t$ as follows
    \begin{align*}
         \operatorname{reg}^{\operatorname{inv}}_t & \coloneqq ((U^{\operatorname{inv}})^{\top}\varphi(X_t, a_t))^{\top}(U^{\operatorname{inv}})^{\top}(\bar{\beta}_t-\beta^{\operatorname{inv}}) \\
         & \le  \|(U^{\operatorname{inv}})^{\top}\varphi(X_t, a_t)\|_2 \|(U^{\operatorname{inv}})^{\top}(\bar{\beta}_t-\beta^{\operatorname{inv}})\|_2 \\
         & \le\frac{ 2 L}{\sqrt{\lambda_0 T_0}}\left(\sigma\sqrt{2\log\left(\frac{1}{\eta}\right)+p^{\operatorname{inv}}\log\left(\frac{L^2}{p^{\operatorname{inv}}\lambda_0}\right)} +  2L^2M\sqrt{\frac{2}{\lambda_0}\log\left(\frac{p^{\operatorname{inv}}+1}{\eta}\right)}\right)
    \end{align*}
where the first inequality holds with probability $1-2\eta$ and follows from the definition of $\hat{\mathcal{C}}^{\beta}$ and from Cauchy-Schwarz inequality. 

    For $\operatorname{reg}^{\operatorname{res}}_t$, the analysis from Theorem~\ref{thm:oracle_regret} remains valid, with an additional term to be considered due to using the estimated invariant component instead of the oracle one. In particular, following the same steps as in \eqref{eq:delta_res_oracle_estimation_error}, we have that, for all $t\in[T]$,
    \begin{align*}
        \|\hat{\delta}^{\operatorname{res}}_t-\delta^{\operatorname{res}}_t\|_{\hat{\Sigma}_{t-1}}& \le \|\sum_{\tau=1}^{t-1}(U^{\operatorname{res}})^{\top}\varphi(X_{\tau}, a_{\tau})\epsilon_{\tau}\|_{(\tilde{\Sigma}^{\operatorname{res}}_{t-1})^{-1}}  + \sqrt{\lambda}\|\delta^{\operatorname{res}}_t\|_2 \\
        & + \|\sum_{\tau=1}^{t-1}(U^{\operatorname{res}})^{\top}\varphi(X_{\tau}, a_{\tau})\varphi(X_{\tau}, a_{\tau})^{\top}(\beta^{\operatorname{inv}}-\hat{\beta}^{\operatorname{inv}})\|_{(\tilde{\Sigma}^{\operatorname{res}}_{t-1})^{-1}}.
    \end{align*}
 We analyzed the first two terms in Theorem~\ref{thm:oracle_regret}. For the last term, we have that 
 \begin{align*}
 & \|\sum_{\tau=1}^{t-1}(U^{\operatorname{res}})^{\top}\varphi(X_{\tau}, a_{\tau})\varphi(X_{\tau}, a_{\tau})^{\top}(\beta^{\operatorname{inv}}-\hat{\beta}^{\operatorname{inv}})\|_{(\tilde{\Sigma}^{\operatorname{res}}_{t-1})^{-1}} \\
 \le & \|(\tilde{\Sigma}^{\operatorname{res}}_{t-1})^{-\frac{1}{2}}\|_{\operatorname{op}} \|\sum_{\tau=1}^{t-1}(U^{\operatorname{res}})^{\top}\varphi(X_{\tau}, a_{\tau})\varphi(X_{\tau}, a_{\tau})^{\top}U^{\operatorname{inv}}\|_{\operatorname{op}}\|(U^{\operatorname{inv}})^{\top}(\beta^{\operatorname{inv}}-\hat{\beta}^{\operatorname{inv}})\|_2.
\end{align*}
Using matrix Hoeffding inequality (see Lemma~\ref{lem:matrix_hoeff}) and following the same steps used to bound \eqref{eq:cross_cov_finite_sample} that, for all $\eta\in(0,1)$, with probability at least $1-\eta$,
\begin{align*}
    \|\sum_{\tau=1}^{t-1}(U^{\operatorname{res}})^{\top}\varphi(X_{\tau}, a_{\tau})\varphi(X_{\tau}, a_{\tau})^{\top}U^{\operatorname{inv}}\|_{\operatorname{op}} \le 2L^{2}\sqrt{2(t-1)\log\left(\frac{p+1}{\eta}\right)}. 
\end{align*}
Therefore, with probability at least $1-3\eta$,
\begin{align*}
 & \|\sum_{\tau=1}^{t-1}(U^{\operatorname{res}})^{\top}\varphi(X_{\tau}, a_{\tau})\varphi(X_{\tau}, a_{\tau})^{\top}(\beta^{\operatorname{inv}}-\hat{\beta}^{\operatorname{inv}})\|_{(\tilde{\Sigma}^{\operatorname{res}}_{t-1})^{-1}} \\
 \le & \frac{1}{\sqrt{\lambda}} 2L^{2}\sqrt{2(t-1)\log\left(\frac{p+1}{\eta}\right)} \sqrt{\frac{\hat{\rho}^{\operatorname{inv}}_{T_0}(\eta, L, M)}{\lambda_0 T_0}}.
\end{align*}
The right-hand side of the inequality can be added to $\sqrt{\hat{\rho}^{\operatorname{res}}_t(\eta, L, M)}$ obtained in the proof of Theorem~\ref{thm:oracle_regret}, so that, with probability at least $1-4\eta$
\begin{align*}
     \sum_{t\in[T]}\operatorname{reg}^{\operatorname{res}}_t & \le C^{\operatorname{res}}\sqrt{T p^{\operatorname{res}}\log\left(1+\frac{TL^2}{\lambda p^{\operatorname{res}}}\right)}\\
    & \times \left(\sigma\sqrt{2\log\left(\frac{1}{\eta}\right)+p^{\operatorname{res}}\log\left(1+\frac{TL^{2}}{\lambda p^{\operatorname{res}}}\right)} + \sqrt{\lambda}M
     2L^{2}\sqrt{\frac{2T\hat{\rho}^{\operatorname{inv}}_{T_0}(\eta, L, M)}{\lambda\lambda_0 T_0}\log\left(\frac{p+1}{\eta}\right)}\right).
\end{align*}
Finally, this implies that the cumulative regret can be upper bounded, for all $\eta\in(0, \frac{1}{5})$ with probability at least $1-5\eta$ as
\begin{align*}
    \operatorname{Reg}_T &  = \sum_{t\in[T]}\operatorname{reg}_t = \sum_{t\in[T]}\operatorname{reg}^{\operatorname{inv}} + \sum_{t\in[T]}\operatorname{reg}^{\operatorname{res}} \\
    & \le\frac{ 2 LT}{\sqrt{\lambda_0 T_0}}\left(\sigma\sqrt{2\log\left(\frac{1}{\eta}\right)+p^{\operatorname{inv}}\log\left(\frac{L^2}{p^{\operatorname{inv}}\lambda_0}\right)} +  2L^2M\sqrt{\frac{2}{\lambda_0}\log\left(\frac{p^{\operatorname{inv}}+1}{\eta}\right)}\right) \\
    & +  C^{\operatorname{res}}\sqrt{T p^{\operatorname{res}}\log\left(1+\frac{TL^2}{\lambda p^{\operatorname{res}}}\right)}\\
    & \times\left(\sigma\sqrt{2\log\left(\frac{1}{\eta}\right)+p^{\operatorname{res}}\log\left(1+\frac{TL^{2}}{\lambda p^{\operatorname{res}}}\right)} + \sqrt{\lambda}M + 2L^{2}\sqrt{\frac{2T\hat{\rho}^{\operatorname{inv}}_{T_0}(\eta, L, M)}{\lambda\lambda_0 T_0}\log\left(\frac{p+1}{\eta}\right)}\right) 
\end{align*}
Choosing $\eta'=1/T$, where $\eta'\coloneqq5\eta$, implies that $\operatorname{Reg}_T$ is, with probability at least $1-\eta'$ $\tilde{O}\left(\sqrt{T}\left(p^{\operatorname{res}}+ \sqrt{\frac{p^{\operatorname{inv}}T}{\lambda_0 T_0}} + \frac{1}{\lambda_0}\sqrt{\frac{T}{T_0}}\right)\right)$.
\end{proof}

\subsection{Accounting for subspace decomposition errors}
\label{sec:proof_full_isd}
\begin{proof}[Proof of Lemma~\ref{lem:beta_inv_bound}] 
    We want to bound the $\hat{\Sigma}_{[-T_0]}$-norm of the estimation error for the invariant component in the estimated invariant subspace $\hat{\mathcal{S}}^{\operatorname{inv}}$, i.e., $\|\hat{\beta}^{\operatorname{inv}}-\hat{\Pi}^{\mathcal{S}^{\operatorname{inv}}}\beta^{\operatorname{inv}}\|_{\hat{\Sigma}_{[-T_0]}}$.
    Let %
    $\tilde{\Sigma}^{\operatorname{inv}}_{[-T_0]}\coloneqq (\hat{U}^{\operatorname{inv}})^{\top}\hat{\Sigma}_{[-T_0]}\hat{U}^{\operatorname{inv}}$, $\mathbf{X}_{T_0}\coloneqq[\varphi(X_{-T_0}, a_{-T_0}), \dots, \varphi(X_{-1}, a_{-1})]^{\top}\in\mathbb{R}^{T_0\times p}$, $\mathbf{R}_{T_0}\coloneqq [R^{a_{-T_0}}_{-T_0}, \dots, R^{a_{-1}}_{-1}]^{\top}\in\mathbb{R}^{T_0}$ and $\bm{\epsilon}_{T_0}\coloneqq[\epsilon_{-T_0}, \dots, \epsilon_{-1}]^{\top}\in\mathbb{R}^{T_0}$. Then, we have that
    \begin{align}
         \|\hat{\beta}^{\operatorname{inv}} - \hat{\Pi}^{\mathcal{S}^{\operatorname{inv}}}\beta^{\operatorname{inv}}\|_{\hat{\Sigma}_{[-T_0]}} & = \|\hat{U}^{\operatorname{inv}}((\tilde{\Sigma}^{\operatorname{inv}}_{[-T_0]})^{-1} \sum_{t=-T_0}^{-1}(\hat{U}^{\operatorname{inv}})^{\top}\varphi(X_t, a_t)R_t^{a_t} -(\hat{U}^{\operatorname{inv}})^{\top}\beta^{\operatorname{inv}})\|_{\hat{\Sigma}_{[-T_0]}} \notag\\
         & \le \|(\tilde{\Sigma}^{\operatorname{inv}}_{[-T_0]})^{-1} (\hat{U}^{\operatorname{inv}})^{\top}\mathbf{X}_{T_0}^{\top}\mathbf{X}_{T_0}\beta^{\operatorname{inv}}- (\hat{U}^{\operatorname{inv}})^{\top}\beta^{\operatorname{inv}}\|_{\tilde{\Sigma}^{\operatorname{inv}}_{[-T_0]}} \notag\\
         & + \|(\tilde{\Sigma}^{\operatorname{inv}}_{[-T_0]})^{-1}\sum_{t=-T_0}^{-1}(\hat{U}^{\operatorname{inv}})^{\top}\varphi(X_t, a_t)\varphi(X_t, a_t)^{\top}\delta^{\operatorname{res}}_t\|_{\tilde{\Sigma}^{\operatorname{inv}}_{[-T_0]}} \notag\\
         & + \|(\tilde{\Sigma}^{\operatorname{inv}}_{[-T_0]})^{-1}(\hat{U}^{\operatorname{inv}})^{\top}\mathbf{X}_{T_0}^{\top}\bm{\epsilon}_{T_0}\|_{\tilde{\Sigma}^{\operatorname{inv}}_{[-T_0]}} \notag\\
         & =  \|(\hat{U}^{\operatorname{inv}})^{\top}\mathbf{X}_{T_0}^{\top}\mathbf{X}_{T_0}(U^{\operatorname{inv}}(U^{\operatorname{inv}})^{\top}- \hat{U}^{\operatorname{inv}}(\hat{U}^{\operatorname{inv}})^{\top})\beta^{\operatorname{inv}}\|_{(\tilde{\Sigma}^{\operatorname{inv}}_{[-T_0]})^{-1}} \label{eq:beta_err_beta_term}\\
         & + \|\sum_{t=-T_0}^{-1}(\hat{U}^{\operatorname{inv}})^{\top}\varphi(X_t, a_t)\varphi(X_t, a_t)^{\top}\delta^{\operatorname{res}}_t\|_{(\tilde{\Sigma}^{\operatorname{inv}}_{[-T_0]})^{-1}} \label{eq:beta_err_delta_term}\\
         & + \|(\hat{U}^{\operatorname{inv}})^{\top}\mathbf{X}_{T_0}^{\top}\bm{\epsilon}_{T_0}\|_{(\tilde{\Sigma}^{\operatorname{inv}}_{[-T_0]})^{-1}} \label{eq:beta_err_eps_term}
    \end{align}
The term in \eqref{eq:beta_err_eps_term} can be upper bounded exactly as in Section~\ref{sec:proof_oracle_subs} (see Equation~\eqref{eq:or_subs_beta_err_eps}), since the presence of $\hat{U}^{\operatorname{inv}}$ in place of $U^{\operatorname{inv}}$ does not influence its analysis.
The terms in \eqref{eq:beta_err_beta_term} and \eqref{eq:beta_err_delta_term} appear because of the misalignment between the true and the estimated invariant subspace: the former represents how much of the invariant parameter we are not able to estimate due to the difference between $\hat{\mathcal{S}}^{\operatorname{inv}}$ and $\mathcal{S}^{\operatorname{inv}}$, the latter quantifies how much of the true residual parameter enters in the estimation of the invariant component due to the intersection between $\hat{\mathcal{S}}^{\operatorname{inv}}$ and $\mathcal{S}^{\operatorname{res}}$. For \eqref{eq:beta_err_beta_term}, we have that
\begin{align*}
    & \|(\hat{U}^{\operatorname{inv}})^{\top}\mathbf{X}_{T_0}^{\top}\mathbf{X}_{T_0}(\Pi^{\mathcal{S}^{\operatorname{inv}}}- \hat{\Pi}^{\mathcal{S}^{\operatorname{inv}}})\beta^{\operatorname{inv}}\|_{(\tilde{\Sigma}^{\operatorname{inv}}_{[-T_0]})^{-1}} \\
      =  & \|(\tilde{\Sigma}^{\operatorname{inv}}_{[-T_0]})^{-\frac{1}{2}}(\hat{U}^{\operatorname{inv}})^{\top}\mathbf{X}_{T_0}^{\top}\mathbf{X}_{T_0}(\Pi^{\mathcal{S}^{\operatorname{inv}}}- \hat{\Pi}^{\mathcal{S}^{\operatorname{inv}}})\beta^{\operatorname{inv}}\|_{2} \\
       \le &\sqrt{\sum_{t\in[-T_0]} \|(\tilde{\Sigma}^{\operatorname{inv}}_{[-T_0]})^{-\frac{1}{2}}(\hat{U}^{\operatorname{inv}})^{\top}\varphi(X_t, a_t) \|^2_2 } \sqrt{\sum_{t\in[-T_0]}(\varphi(X_t, a_t)^{\top}(\Pi^{\mathcal{S}^{\operatorname{inv}}}- \hat{\Pi}^{\mathcal{S}^{\operatorname{inv}}})\beta^{\operatorname{inv}})^2}\\
       \le & \sqrt{p^{\operatorname{inv}}}\sqrt{T_0 L^2M^2 (\Delta\Pi)^2} \eqqcolon \hat{\rho}^{\operatorname{inv}}_1\\ 
\end{align*}
    The first inequality uses Cauchy-Schwarz inequality. The second inequality follows from the fact that
    \begin{align*}
        & \quad\; \sum_{t\in[-T_0]} \|(\tilde{\Sigma}^{\operatorname{inv}}_{[-T_0]})^{-\frac{1}{2}}(\hat{U}^{\operatorname{inv}})^{\top}\varphi(X_t, a_t) \|^2_2 \\ & =  \sum_{t\in[-T_0]} \varphi(X_t, a_t)^{\top}\hat{U}^{\operatorname{inv}}(\tilde{\Sigma}^{\operatorname{inv}}_{[-T_0]})^{-1}(\hat{U}^{\operatorname{inv}})^{\top}\varphi(X_t, a_t)\\
        & = \operatorname{trace}\left(\sum_{t\in[-T_0]} (\hat{U}^{\operatorname{inv}})^{\top}\varphi(X_t, a_t)  \varphi(X_t, a_t)^{\top}\hat{U}^{\operatorname{inv}}(\tilde{\Sigma}^{\operatorname{inv}}_{[-T_0]})^{-1}\right) \\
        & = \operatorname{trace}\left(I_{p^{\operatorname{inv}}}\right)\\
        & = p^{\operatorname{inv}},
    \end{align*}
and by singling out the norm of the individual factors under the second square root. Finally, under Assumption~\ref{ass:subspace_dec_error} we obtain that, for all $\eta\in(0,1)$, it holds with probability at least $1-\eta$ that $\|(\hat{U}^{\operatorname{inv}})^{\top}\mathbf{X}_{T_0}^{\top}\mathbf{X}_{T_0}(\Pi^{\mathcal{S}^{\operatorname{inv}}}- \hat{\Pi}^{\mathcal{S}^{\operatorname{inv}}})\beta^{\operatorname{inv}}\|_{(\tilde{\Sigma}^{\operatorname{inv}}_{[-T_0]})^{-1}} $ is $O(\sqrt{p^{\operatorname{inv}}\log(\frac{p}{\eta})})$. 

We now need to upper bound the term in \eqref{eq:beta_err_delta_term}, that is 
\begin{align*}
     &\|\sum_{t=-T_0}^{-1}(\hat{U}^{\operatorname{inv}})^{\top}\varphi(X_t, a_t)\varphi(X_t, a_t)^{\top}\delta^{\operatorname{res}}_t\|_{(\tilde{\Sigma}^{\operatorname{inv}}_{[-T_0]})^{-1}} \\
     =& \|\sum_{t=-T_0}^{-1}(\hat{U}^{\operatorname{inv}})^{\top}(U^{\operatorname{inv}}(U^{\operatorname{inv}})^{\top}+U^{\operatorname{res}}(U^{\operatorname{res}})^{\top})\varphi(X_t, a_t)\varphi(X_t, a_t)^{\top}U^{\operatorname{res}}(U^{\operatorname{res}})^{\top}\delta^{\operatorname{res}}_t\|_{(\tilde{\Sigma}^{\operatorname{inv}}_{[-T_0]})^{-1}} \\
      \le & \|\sum_{t=-T_0}^{-1}(\hat{U}^{\operatorname{inv}})^{\top}U^{\operatorname{inv}}(U^{\operatorname{inv}})^{\top}\varphi(X_t, a_t)\varphi(X_t, a_t)^{\top}U^{\operatorname{res}}(U^{\operatorname{res}})^{\top}\delta^{\operatorname{res}}_t\|_{(\tilde{\Sigma}^{\operatorname{inv}}_{[-T_0]})^{-1}}\\
      + & \|\sum_{t=-T_0}^{-1}(\hat{U}^{\operatorname{inv}})^{\top}U^{\operatorname{res}}(U^{\operatorname{res}})^{\top}\varphi(X_t, a_t)\varphi(X_t, a_t)^{\top}U^{\operatorname{res}}(U^{\operatorname{res}})^{\top}\delta^{\operatorname{res}}_t\|_{(\tilde{\Sigma}^{\operatorname{inv}}_{[-T_0]})^{-1}}
\end{align*}
 For the second term, it holds that
\begin{align*}
    & \|\sum_{t=-T_0}^{-1}(\hat{U}^{\operatorname{inv}})^{\top}U^{\operatorname{res}}(U^{\operatorname{res}})^{\top}\varphi(X_t, a_t)\varphi(X_t, a_t)^{\top}U^{\operatorname{res}}(U^{\operatorname{res}})^{\top}\delta^{\operatorname{res}}_t\|_{(\tilde{\Sigma}^{\operatorname{inv}}_{[-T_0]})^{-1}} \\
    \le  & \|(\tilde{\Sigma}^{\operatorname{inv}}_{[-T_0]})^{-\frac{1}{2}}\|_{\operatorname{op}}\|(\hat{U}^{\operatorname{inv}})^{\top}U^{\operatorname{res}}\|_{\operatorname{op}} T_0L^2M \\
     \le & \sqrt{\frac{1}{\lambda_0 T_0}} \Delta\Pi  T_0L^2M \\
     = & \sqrt{\frac{T_0}{\lambda_0}} \Delta\Pi L^2M \eqqcolon \hat{\rho}^{\operatorname{inv}}_2.
\end{align*}
Under Assumption~\ref{ass:subspace_dec_error}, for all $\eta\in(0, 1)$, this quantity is, with probability at least $1-\eta$, $O\left(\sqrt{\frac{\log(p/\eta)}{\lambda_0}}\right)$. For the first term, we obtain 
\begin{align*}
     & \|\sum_{t=-T_0}^{-1}(\hat{U}^{\operatorname{inv}})^{\top}U^{\operatorname{inv}}(U^{\operatorname{inv}})^{\top}\varphi(X_t, a_t)\varphi(X_t, a_t)^{\top}U^{\operatorname{res}}(U^{\operatorname{res}})^{\top}\delta^{\operatorname{res}}_t\|_{(\tilde{\Sigma}^{\operatorname{inv}}_{[-T_0]})^{-1}} \\
     \le & \|(\tilde{\Sigma}^{\operatorname{inv}}_{[-T_0]})^{-\frac{1}{2}}\|_{\operatorname{op}}\|(\hat{U}^{\operatorname{inv}})^{\top}U^{\operatorname{inv}}\|_{\operatorname{op}} \|\sum_{t=-T_0}^{-1}(U^{\operatorname{inv}})^{\top}\varphi(X_t, a_t)\varphi(X_t, a_t)^{\top}U^{\operatorname{res}}(U^{\operatorname{res}})^{\top}\delta^{\operatorname{res}}_t\|_2 \\
     \le & \sqrt{\frac{1}{\lambda_0 T_0}} \|\sum_{t=-T_0}^{-1}\alpha_t\|_2
\end{align*}
where $\alpha_t\coloneqq(U^{\operatorname{inv}})^{\top}\varphi(X_t, a_t)\varphi(X_t, a_t)^{\top}U^{\operatorname{res}}(U^{\operatorname{res}})^{\top}\delta^{\operatorname{res}}_t \in\mathbb{R}^{p^{\operatorname{inv}}}$ and in the last inequality we have used that $\|(\hat{U}^{\operatorname{inv}})^{\top}U^{\operatorname{inv}}\|_{\operatorname{op}}\le 1$ since both matrices have orthonormal columns. This is the same quantity that appears in \eqref{eq:beta_bound_delta_res_cross_cov_term} in the oracle subspaces case (together with \eqref{eq:beta_err_eps_term}, this forms the full oracle bound given in \eqref{eq:oracle_beta_radius}, which we denote here by $\hat{\rho}^{\operatorname{inv}}_3$). We have shown \eqref{eq:beta_bound_delta_res_cross_cov_term} to be, for all $\eta\in(0,1)$, with probability at least $1-\eta$,
$O\left( \sqrt{\frac{1}{\lambda_0 }\log\left(\frac{p^{\operatorname{inv}}}{\eta}\right)}  \right)$.  
For the same choice of $\eta$, this term is dominated by $O\left( \sqrt{\frac{1}{\lambda_0 }\log\left(\frac{p}{\eta}\right)} \right)$. Hence, the expression in \eqref{eq:beta_err_delta_term} is, with probability at least $1-2\eta$, $O\left( \sqrt{\frac{1}{\lambda_0 }\log\left(\frac{p^{\operatorname{inv}}}{\eta}\right)}  \right)$. 

Let
\begin{align}
    \hat{\rho}^{\operatorname{inv}}_{T_0}(\eta, L, M) \coloneqq \sum_{i=1}^3\hat{\rho}^{\operatorname{inv}}_{i} & = \sigma\sqrt{2\log\left(\frac{1}{\eta}\right)+p^{\operatorname{inv}}\log\left(\frac{L^2}{p^{\operatorname{inv}}\lambda_0}\right)} +  2L^2M\sqrt{\frac{2}{\lambda_0}\log\left(\frac{p^{\operatorname{inv}}+1}{\eta}\right)} \label{eq:beta_inv_radius_full}\\
    & + \sqrt{p^{\operatorname{inv}} T_0 }\Delta\Pi LM + \sqrt{\frac{T_0}{\lambda_0}} \Delta\Pi L^2M. \notag
\end{align}

By the union bound we have that, with probability at least $1-4\eta$, 
\begin{equation*}
    \|\hat{\beta}^{\operatorname{inv}} - \hat{\Pi}^{\mathcal{S}^{\operatorname{inv}}}\beta^{\operatorname{inv}}\|_{\hat{\Sigma}_{[-T_0]}} \le \sum_{i=1}^3\hat{\rho}^{\operatorname{inv}}_{i} \eqqcolon \hat{\rho}^{\operatorname{inv}}_{T_0}(\eta, L, M)
\end{equation*}$ $ is 
\begin{equation*}
    O\left(\sqrt{\left(\log\left(\frac{1}{\eta}\right)+p^{\operatorname{inv}}\log\left(\frac{1}{p^{\operatorname{inv}}\lambda_0}\right)\right)}\right) + O\left(\sqrt{p^{\operatorname{inv}}\log\left(\frac{p}{\eta}\right)}\right) +  O\left(\sqrt{\frac{1}{\lambda_0}\log\left(\frac{p}{\eta}\right)}\right).
\end{equation*}
This concludes the proof of the lemma. 
\end{proof}

\begin{proof}[Proof of Lemma~\ref{lem:delta_res_bound}]
    We want to bound the estimation error for the residual component in the residual subspace, that is, $\|\hat{\delta}^{\operatorname{res}}_t-\hat{\Pi}^{\mathcal{S}^{\operatorname{res}}}\delta^{\operatorname{res}}_{t}\|_{\hat{\Sigma}_{t-1}}$. Recall that we have defined $\tilde{\Sigma}^{\operatorname{res}}_{t}\coloneqq (\hat{U}^{\operatorname{res}})^{\top}\hat{\Sigma}_{t}\hat{U}^{\operatorname{res}}$. We start by stating explicitly the expression for $\hat{\delta}^{\operatorname{res}}_t$, obtaining that
    \begin{align}
        & \|\hat{\delta}^{\operatorname{res}}_t-\hat{\Pi}^{\mathcal{S}^{\operatorname{res}}}\delta^{\operatorname{res}}_t\|_{\hat{\Sigma}_{t-1}}\notag\\
         =& \|\hat{U}^{\operatorname{res}}(\tilde{\Sigma}^{\operatorname{res}}_{t-1})^{-1}\sum_{\tau\in[t-1]}(\hat{U}^{\operatorname{res}})^{\top}\varphi(X_{\tau}, a_{\tau})(R^{a_{\tau}}_{\tau}- \varphi(X_{\tau}, a_{\tau})^{\top}\hat{\beta}^{\operatorname{inv}})-\hat{U}^{\operatorname{res}}(\hat{U}^{\operatorname{res}})^{\top}\delta^{\operatorname{res}}_t\|_{\hat{\Sigma}_{t-1}} \notag\\
         = & \|(\tilde{\Sigma}^{\operatorname{res}}_{t-1})^{-1}\sum_{\tau\in[t-1]}(\hat{U}^{\operatorname{res}})^{\top}\varphi(X_{\tau}, a_{\tau})(R^{a_{\tau}}_{\tau}- \varphi(X_{\tau}, a_{\tau})^{\top}\hat{\beta}^{\operatorname{inv}})-(\hat{U}^{\operatorname{res}})^{\top}\delta^{\operatorname{res}}_t\|_{\tilde{\Sigma}^{\operatorname{res}}_{t-1}} \notag\\
         \le &  \|\sum_{\tau\in[t-1]}(\hat{U}^{\operatorname{res}})^{\top}\varphi(X_{\tau}, a_{\tau})\varphi(X_{\tau}, a_{\tau})^{\top}(U^{\operatorname{res}}(U^{\operatorname{res}})^{\top}-\hat{U}^{\operatorname{res}}(\hat{U}^{\operatorname{res}})^{\top})\delta^{\operatorname{res}}_{t} - \lambda (\hat{U}^{\operatorname{res}})^{\top}\delta^{\operatorname{res}}_{t}\|_{(\tilde{\Sigma}^{\operatorname{res}}_{t-1})^{-1}} \notag\\
         +& \|\sum_{\tau\in[t-1]}(\hat{U}^{\operatorname{res}})^{\top}\varphi(X_{\tau}, a_{\tau})\varphi(X_{\tau}, a_{\tau})^{\top}(\beta^{\operatorname{inv}}-\hat{\beta}^{\operatorname{inv}})\|_{(\tilde{\Sigma}^{\operatorname{res}}_{t-1})^{-1}} \notag\\
         + & \|\sum_{\tau\in[t-1]}(\hat{U}^{\operatorname{res}})^{\top}\varphi(X_{\tau}, a_{\tau})\epsilon_{\tau}\|_{(\tilde{\Sigma}^{\operatorname{res}}_{t-1})^{-1}}\notag\\
         \le &  \|\sum_{\tau\in[t-1]}(\hat{U}^{\operatorname{res}})^{\top}\varphi(X_{\tau}, a_{\tau})\varphi(X_{\tau}, a_{\tau})^{\top}(U^{\operatorname{res}}(U^{\operatorname{res}})^{\top}-\hat{U}^{\operatorname{res}}(\hat{U}^{\operatorname{res}})^{\top})\delta^{\operatorname{res}}_{t}\|_{(\tilde{\Sigma}^{\operatorname{res}}_{t-1})^{-1}} \label{eq:delta_bound_delta_term}\\
         +& \|\sum_{\tau\in[t-1]}(\hat{U}^{\operatorname{res}})^{\top}\varphi(X_{\tau}, a_{\tau})\varphi(X_{\tau}, a_{\tau})^{\top}(\beta^{\operatorname{inv}}-\hat{\beta}^{\operatorname{inv}})\|_{(\tilde{\Sigma}^{\operatorname{res}}_{t-1})^{-1}} \label{eq:delta_bound_beta_term}\\
         + & \|\sum_{\tau\in[t-1]}(\hat{U}^{\operatorname{res}})^{\top}\varphi(X_{\tau}, a_{\tau})\epsilon_{\tau}\|_{(\tilde{\Sigma}^{\operatorname{res}}_{t-1})^{-1}} + \sqrt{\lambda}\|\delta^{\operatorname{res}}_t\|_2\label{eq:delta_bound_eps_term}
    \end{align}
The analysis of the terms in \eqref{eq:delta_bound_eps_term} follows the same steps as the one for \eqref{eq:delta_res_oracle_estimation_error} in the oracle case. In particular, from Theorem~\ref{thm:oracle_regret} we have that for all $\eta\in(0,1)$, with probability at least $1-\eta$,%
\begin{align*}
    &\|\sum_{\tau\in[t-1]}(\hat{U}^{\operatorname{res}})^{\top}\varphi(X_{\tau}, a_{\tau})\epsilon_{\tau}\|_{(\tilde{\Sigma}^{\operatorname{res}}_{t-1})^{-1}} + \sqrt{\lambda}\|\delta^{\operatorname{res}}_t\|_2\\ \le &\sigma\sqrt{2\log\left(\frac{1}{\eta}\right)+p^{\operatorname{res}}\log\left(1+\frac{tL^{2}}{\lambda p^{\operatorname{res}}}\right)} + \sqrt{\lambda}M \eqqcolon \hat{\rho}^{\operatorname{res}}_1.
\end{align*}
For the term in \eqref{eq:delta_bound_delta_term}, we have that
\begin{align}
    & \|\sum_{\tau\in[t-1]}(\hat{U}^{\operatorname{res}})^{\top}\varphi(X_{\tau}, a_{\tau})\varphi(X_{\tau}, a_{\tau})^{\top}(\Pi^{\mathcal{S}^{\operatorname{res}}}-\hat{\Pi}^{\mathcal{S}^{\operatorname{res}}})\delta^{\operatorname{res}}_{t}\|_{(\tilde{\Sigma}^{\operatorname{res}}_{t-1})^{-1}} \notag\\
    \le & \sqrt{\sum_{\tau\in[t-1]}\|(\tilde{\Sigma}^{\operatorname{res}}_{t-1})^{-\frac{1}{2}}(\hat{U}^{\operatorname{res}})^{\top}\varphi(X_{\tau}, a_{\tau})\|^2_2}\sqrt{\sum_{\tau\in[t-1]}(\varphi(X_{\tau}, a_{\tau})^{\top}(\Pi^{\mathcal{S}^{\operatorname{res}}}-\hat{\Pi}^{\mathcal{S}^{\operatorname{res}}})\delta^{\operatorname{res}}_{t})^2}\notag\\
    \le &  \sqrt{p^{\operatorname{res}}}\sqrt{t\max_{\tau\in[t-1]}\|\varphi(X_{\tau}, a_{\tau})\|_2^2\|\delta^{\operatorname{res}}_t\|_2^2\|\Pi^{\mathcal{S}^{\operatorname{res}}}-\hat{\Pi}^{\mathcal{S}^{\operatorname{res}}}\|_{\operatorname{op}}^2} \notag\\
    \le & 
LM\Delta\Pi\sqrt{p^{\operatorname{res}}t}\eqqcolon \hat{\rho}^{\operatorname{res}}_2 \label{eq:delta_bound_delta_term_end}.
\end{align}
The first inequality follows from Cauchy-Schwarz inequality. The second inequality uses the fact that
\begin{align*}
    & \sum_{\tau\in[t-1]}\|(\tilde{\Sigma}^{\operatorname{res}}_{t-1})^{-\frac{1}{2}}(\hat{U}^{\operatorname{res}})^{\top}\varphi(X_{\tau}, a_{\tau})\|^2_2 \\
     = &\sum_{\tau\in[t-1]}\varphi(X_{\tau}, a_{\tau})^{\top}\hat{U}^{\operatorname{res}}(\tilde{\Sigma}^{\operatorname{res}}_{t-1})^{-1}(\hat{U}^{\operatorname{res}})^{\top}\varphi(X_{\tau}, a_{\tau}) \\
    = & \operatorname{trace}\left( \sum_{\tau\in[t-1]}(\hat{U}^{\operatorname{res}})^{\top}\varphi(X_{\tau}, a_{\tau})\varphi(X_{\tau}, a_{\tau})^{\top}\hat{U}^{\operatorname{res}}(\tilde{\Sigma}^{\operatorname{res}}_{t-1})^{-1}\right) \\
     \le & \operatorname{trace}\left( \left(\sum_{\tau\in[t-1]}(\hat{U}^{\operatorname{res}})^{\top}\varphi(X_{\tau}, a_{\tau})\varphi(X_{\tau}, a_{\tau})^{\top}\hat{U}^{\operatorname{res}}+\lambda I_{p^{\operatorname{res}}}\right)(\tilde{\Sigma}^{\operatorname{res}}_{t-1})^{-1}\right) \\
      = & \operatorname{trace}(I_{p^{\operatorname{res}}})\\
      = & p^{\operatorname{res}}.     
\end{align*}
Under Assumption~\ref{ass:subspace_dec_error}, for all $\eta\in(0,1)$, with probability at least $1-\eta$, \eqref{eq:delta_bound_delta_term_end} is $O\left(\sqrt{\frac{p^{\operatorname{res}}t\log(p/\eta)}{T_0}}\right)$. 
To bound the term in \eqref{eq:delta_bound_beta_term} we use the result in Lemma~\ref{lem:beta_inv_bound} on the estimation error for the invariant component, since
\begin{align*}
    & \|\sum_{\tau\in[t-1]}(\hat{U}^{\operatorname{res}})^{\top}\varphi(X_{\tau}, a_{\tau})\varphi(X_{\tau}, a_{\tau})^{\top}(\beta^{\operatorname{inv}}-\hat{\beta}^{\operatorname{inv}})\|_{(\tilde{\Sigma}^{\operatorname{res}}_{t-1})^{-1}} \\
      = &  \|(\tilde{\Sigma}^{\operatorname{res}}_{t-1})^{-\frac{1}{2}}\sum_{\tau\in[t-1]}(\hat{U}^{\operatorname{res}})^{\top}\varphi(X_{\tau}, a_{\tau})\varphi(X_{\tau}, a_{\tau})^{\top}(\beta^{\operatorname{inv}}-\hat{\beta}^{\operatorname{inv}})\|_{2} \\
      \le & \sqrt{\sum_{\tau\in[t-1]}\|(\tilde{\Sigma}^{\operatorname{res}}_{t-1})^{-\frac{1}{2}}(\hat{U}^{\operatorname{res}})^{\top}\varphi(X_{\tau}, a_{\tau})\|_2^2} \sqrt{\sum_{\tau\in[t-1]}(\varphi(X_{\tau}, a_{\tau})^{\top}(\beta^{\operatorname{inv}}-\hat{\beta}^{\operatorname{inv}}))^2}\\
      \le & \sqrt{p^{\operatorname{res}}}\sqrt{t\max_{\tau\in[t-1]}\|\varphi(X_{\tau}, a_{\tau})\|_2^2\|\beta^{\operatorname{inv}}-\hat{\beta}^{\operatorname{inv}}\|_2^2}\\
      \le & L\|\beta^{\operatorname{inv}}-\hat{\beta}^{\operatorname{inv}}\|_2\sqrt{p^{\operatorname{res}}t} \eqqcolon \hat{\rho}^{\operatorname{res}}_3. 
\end{align*}
Taking the union bound, we have that, with probability at least $1-2\eta$,
\begin{equation}
    \|\hat{\delta}^{\operatorname{res}}_t-\hat{\Pi}^{\mathcal{S}^{\operatorname{res}}}\delta^{\operatorname{res}}_t\|_{\hat{\Sigma}_{t-1}}\le \sum_{i=1}^3 \hat{\rho}^{\operatorname{res}}_i \eqqcolon \hat{\rho}^{\operatorname{res}}_t(\eta, L, M).
    \label{eq:delta_res_radius_full}
\end{equation}
To conclude the proof of the lemma it is sufficient to replace $\eta$ with $\eta'\coloneqq \eta/2$ in all above terms.  
\end{proof}

\begin{proof}[Proof of Theorem~\ref{thm:full_regret_bound}]
    We start by considering a decomposition of the instantaneous regret similar to the one in \eqref{eq:inst_reget_isd}. Now, the true subspace decomposition is unknown, so $\hat{\mathcal{C}}^\beta$ and $\hat{\mathcal{C}}^{\delta}_t$ are confidence sets containing, with probability at least $1-\eta$, $\hat{\Pi}^{\mathcal{S}^{\operatorname{inv}}}\beta^{\operatorname{inv}}$  and $\hat{\Pi}^{\mathcal{S}^{\operatorname{res}}}\delta^{\operatorname{res}}_t$, respectively. 
      As before, we denote by $(\bar{\beta}_t, \bar{\delta}_t) = \argmax_{\beta\in\hat{\mathcal{C}}^{\beta}, \delta\in\hat{\mathcal{C}}^{\delta}_t}\varphi(X_t, a_t)^{\top}(\beta+\delta)$, and by $\bar{\gamma}_t=\bar{\beta}_t+\bar{\delta}_t$ the parameter at which the upper confidence bound for the predicted reward at time $t$ is achieved. Moreover, recall that $\tilde{\Sigma}^{\operatorname{res}}_t = (\hat{U}^{\operatorname{res}})^{\top}\hat{\Sigma}_t \hat{U}^{\operatorname{res}}$ and $\tilde{\Sigma}^{\operatorname{inv}}_{[-T_0]} = (\hat{U}^{\operatorname{inv}})^{\top}\hat{\Sigma}_{[-T_0]} \hat{U}^{\operatorname{inv}}$. Then, at time $t\in[T]$, $\operatorname{reg}_t$ is such that
    \begin{align*}
        \operatorname{reg}_t & \coloneqq (\varphi(X_t, a^*_t)-\varphi(X_t, a_t))^{\top}\gamma_{0,t} \\
        & \le \varphi(X_t, a_t)^{\top}(\bar{\gamma}_t - \gamma_{0,t})\\
        & = \varphi(X_t, a_t)^{\top}(\bar{\beta}_t-\beta^{\operatorname{inv}}) + \varphi(X_t, a_t)^{\top}(\bar{\delta}_t-\delta^{\operatorname{res}}_t)\\
        & =  \varphi(X_t, a_t)^{\top}(\bar{\beta}_t- \hat{\Pi}^{\mathcal{S}^{\operatorname{inv}}}\beta^{\operatorname{inv}}+\hat{\Pi}^{\mathcal{S}^{\operatorname{inv}}}\beta^{\operatorname{inv}}- \beta^{\operatorname{inv}}) \\
        &+ \varphi(X_t, a_t)^{\top}(\bar{\delta}_t- \hat{\Pi}^{\mathcal{S}^{\operatorname{res}}}\delta^{\operatorname{res}}_t+\hat{\Pi}^{\mathcal{S}^{\operatorname{res}}}\delta^{\operatorname{res}}_t -\delta^{\operatorname{res}}_t)\\
        & \le \|(\hat{U}^{\operatorname{inv}})^{\top}\varphi(X_t, a_t)\|_2 \|(\hat{U}^{\operatorname{inv}})^{\top}\bar{\beta}_t-(\hat{U}^{\operatorname{inv}})^{\top}\beta^{\operatorname{inv}}\| _2 \\ 
        & + \|(\hat{U}^{\operatorname{res}})^{\top}\varphi(X_t, a_t)\|_{((\hat{U}^{\operatorname{res}})^{\top}\hat{\Sigma}_{t-1}\hat{U}^{\operatorname{res}})^{-1}} \|(\hat{U}^{\operatorname{res}})^{\top}\bar{\delta}_t-(\hat{U}^{\operatorname{res}})^{\top}\delta^{\operatorname{res}}_t\|_{(\hat{U}^{\operatorname{res}})^{\top}\hat{\Sigma}_{t-1}\hat{U}^{\operatorname{res}}}\\
        & + \|\varphi(X_t, a_t)\|_2 \|\gamma_{0,t}\|_2 \Delta\Pi \\
        & \le \|(\hat{U}^{\operatorname{inv}})^{\top}\varphi(X_t, a_t)\|_2 \tfrac{1}{\sqrt{\lambda_0 T_0}}\|(\hat{U}^{\operatorname{inv}})^{\top}\bar{\beta}_t-(\hat{U}^{\operatorname{inv}})^{\top}\beta^{\operatorname{inv}}\|_{\tilde{\Sigma}^{\operatorname{inv}}_{[-T_0]}} \\ 
        & + \|(\hat{U}^{\operatorname{res}})^{\top}\varphi(X_t, a_t)\|_{(\tilde{\Sigma}^{\operatorname{res}}_{t-1})^{-1}} \|(\hat{U}^{\operatorname{res}})^{\top}\bar{\delta}_t-(\hat{U}^{\operatorname{res}})^{\top}\delta^{\operatorname{res}}_t\|_{\tilde{\Sigma}^{\operatorname{res}}_{t-1}}\\
        & + \|\varphi(X_t, a_t)\|_2  \|\gamma_{0,t}\|_2 \Delta\Pi \\
        & \le 2 \|(\hat{U}^{\operatorname{inv}})^{\top}\varphi(X_t, a_t)\|_2 \sqrt{\tfrac{\hat{\rho}^{\operatorname{inv}}_{T_0}(\eta, L, M)}{\lambda_0 T_0}} + 2 \|(\hat{U}^{\operatorname{res}})^{\top}\varphi(X_t, a_t)\|_{(\tilde{\Sigma}^{\operatorname{res}}_{t-1})^{-1}} \sqrt{\hat{\rho}^{\operatorname{res}}_t(\eta, L, M)} \\
        & +  \|\varphi(X_t, a_t)\|_2 \|\gamma_{0,t}\|_2 \Delta\Pi.
    \end{align*}
    Lemma~\ref{lem:beta_inv_bound} implies that we can define $\hat{\mathcal{C}}^{\operatorname{\beta}}$ to contain $\hat{\Pi}^{\mathcal{S}^{\operatorname{inv}}}\beta^{\operatorname{inv}}$ with probability at least $1-\eta$ so that
    $\sqrt{\hat{\rho}^{\operatorname{inv}}_{T_0}(\eta, L, M)}$ is \begin{align*}
         O\left(\sqrt{p^{\operatorname{inv}}\log\left(\tfrac{1}{p^{\operatorname{inv}}\lambda_0}\right)+\log(\tfrac{1}{\eta})}\right)+
         O\left(\sqrt{p^{\operatorname{inv}}\log(\tfrac{p}{\eta})}\right)+
         O\left(\sqrt{\tfrac{1}{\lambda_0}\log(\tfrac{p}{\eta})}\right).
    \end{align*}
    Lemma~\ref{lem:delta_res_bound} implies instead that we can define $\hat{\mathcal{C}}^{\delta}_t$ to contain $\hat{\Pi}^{\mathcal{S}^{\operatorname{res}}}\delta^{\operatorname{res}}_t$ so that $\sqrt{\hat{\rho}^{\operatorname{res}}_t(\eta, L, M)}$ is, for all $\eta\in(0, \frac{1}{2})$ with probability at least $1-2\eta$
    \begin{align}
        O\left(\sqrt{p^{\operatorname{res}}\log(\tfrac{t}{p^{\operatorname{res}}})+\log(\tfrac{1}{\eta})}\right)
        + O\left(\sqrt{\tfrac{p^{\operatorname{res}}t}{T_0}}\left(\sqrt{\log(\tfrac{p}{\eta})} + \sqrt{\tfrac{\hat{\rho}^{\operatorname{inv}}_{T_0}(\eta, L, M)}{\lambda_0}} \right)\right).\label{eq:rho_res_bound}
    \end{align}
    Moreover, by Assumption~\ref{ass:subspace_dec_error}, we have that, for all $\eta\in(0,1)$, with probability at least $1-\eta$,
    \begin{align*}
        \|\varphi(X_t, a_t)\|_2 \Delta\Pi\|\gamma_{0,t}\|_2 \le LM \Delta\Pi
    \end{align*}
    is $O(\sqrt{\log(p/\eta)/T_0})$. 
    Taking now the sum over the time horizon $T$ we obtain that
    \begin{align*}
        \operatorname{Reg}_T & = \sum_{t=1}^T \operatorname{reg}_t \\
        & \le \sum_{t=1}^T \left(2 \|(\hat{U}^{\operatorname{inv}})^{\top}\varphi(X_t, a_t)\|_2 \sqrt{\tfrac{\hat{\rho}^{\operatorname{inv}}_{T_0}(\eta, L, M)}{\lambda_0 T_0}} +  \|\varphi(X_t, a_t)\|_2 \|\gamma_{0,t}\|_2\Delta\Pi \right)\\
        & + \sum_{t=1}^T 2 \|(\hat{U}^{\operatorname{res}})^{\top}\varphi(X_t, a_t)\|_{(\tilde{\Sigma}^{\operatorname{res}}_{t-1})^{-1}} \sqrt{\rho^{\operatorname{res}}_t}
    \end{align*}
    We consider these two sums separately. For the first, we obtain that 
    \begin{align*}
       & \sum_{t=1}^T \left(2 \|(\hat{U}^{\operatorname{inv}})^{\top}\varphi(X_t, a_t)\|_2 \sqrt{\tfrac{\hat{\rho}^{\operatorname{inv}}_{T_0}(\eta, L, M)}{\lambda_0 T_0}} +  \|\varphi(X_t, a_t)\|_2 \|\gamma_{0,t}\|_2 \Delta\Pi\right) \\
    \le & T\left(2 L \sqrt{\tfrac{\hat{\rho}^{\operatorname{inv}}_{T_0}(\eta, L, M)}{\lambda_0 T_0}}  + LM \Delta\Pi\right).
    \end{align*}
From the results above, we obtain that this term is, for all $\eta\in(0, \frac{1}{5})$, with probability at least $1-5\eta$,
\begin{align*}
     O\left(\tfrac{T}{\sqrt{T_0\lambda_0}}\sqrt{p^{\operatorname{inv}}+\log(\tfrac{1}{\eta})}\right) 
         +  O\left(\tfrac{T}{\sqrt{T_0\lambda_0}} \sqrt{p^{\operatorname{inv}}\log(\tfrac{p}{\eta})}\right) 
        + O\left(\tfrac{T}{\sqrt{T_0\lambda_0}} \sqrt{\max\{\lambda_0, \tfrac{1}{\lambda_0}\}\log(\tfrac{p}{\eta})}\right).
\end{align*}
For the remaining part of the cumulative regret, we have that
\begin{align*}
    \sum_{t=1}^T 2 \|(\hat{U}^{\operatorname{res}})^{\top}\varphi(X_t, a_t)\|_{(\tilde{\Sigma}^{\operatorname{res}}_{t-1})^{-1}} \sqrt{\hat{\rho}^{\operatorname{res}}_t(\eta, L, M)} \le 2 \sqrt{T \hat{\rho}^{\operatorname{res}}_T(\eta, L, M)\sum_{t=1}^T \|(\hat{U}^{\operatorname{res}})^{\top}\varphi(X_t, a_t)\|_{(\tilde{\Sigma}^{\operatorname{res}}_{t-1})^{-1}}^2 }
\end{align*}
As in the oracle case (see the proof of Lemma~\ref{lem:delta_res_bound}), we can use \eqref{eq:cumreg_res_bouund} and Lemma~\ref{lem:bandits_alg_contexts_bound} and obtain
\begin{align*}
     \sum_{t=1}^T 2 \|(\hat{U}^{\operatorname{res}})^{\top}\varphi(X_t, a_t)\|_{(\tilde{\Sigma}^{\operatorname{res}}_{t-1})^{-1}} \sqrt{\hat{\rho}^{\operatorname{res}}_t(\eta, L, M)} \le C^{\operatorname{res}}\sqrt{\hat{\rho}^{\operatorname{res}}_T(\eta, L, M) T p^{\operatorname{res}}\log \left(1+\tfrac{TL^2}{\lambda p^{\operatorname{res}}}\right)}.
\end{align*}
Using \eqref{eq:rho_res_bound}, we have that the above term is, for all $\eta\in(0, \frac{1}{2})$, with probability at least $1-2\eta$,
\begin{align*}
     & O\left(p^{\operatorname{res}}\sqrt{ T\log \left(1+\tfrac{TL^2}{\lambda p^{\operatorname{res}}}\right)\left(\log(\tfrac{T}{p^{\operatorname{res}}})+\log(\tfrac{1}{\eta})\right)}\right) \\
        + & O\left(\tfrac{p^{\operatorname{res}}T}{\sqrt{T_0}}\sqrt{\log \left(1+\tfrac{TL^2}{\lambda p^{\operatorname{res}}}\right)}\left(\sqrt{\log(\tfrac{p}{\eta})} + \sqrt{\tfrac{\hat{\rho}^{\operatorname{inv}}_{T_0}(\eta, L, M)}{\lambda_0}} \right)\right).
\end{align*}
 As a result, up to logarithmic terms, we obtain that the cumulative regret for the ISD-linUCB algorithm is, for all $\eta\in(0, \frac{1}{11})$, with probability at least $1-11\eta$
\begin{equation*}
    \tilde{O}\left(\sqrt{T}\left(p^{\operatorname{res}} + p^{\operatorname{res}}\sqrt{\tfrac{T}{\lambda_0 T_0}}\left(\sqrt{p^{\operatorname{inv}}}+\sqrt{\max\{\lambda_0, \tfrac{1}{\lambda_0}\}}\right)\right)\right).
\end{equation*}
\end{proof}

\section{Lower bound}
\label{sec:lower_bound}
\subsection{Lower bound for LinUCB}
\citet{lattimore2020bandit} show the lower bound $\Omega(p\sqrt{T})$ for the LinUCB algorithm. In the linear model, they consider Gaussian noise with unit variance. Then, for a context-action space $\mathcal{X}\times\mathcal{A}$ and a linear parameter $\gamma\in\mathbb{R}^p$, the regret of a policy is 
\begin{equation*}
    \operatorname{Reg}_T(\mathcal{X}\times\mathcal{A}, \gamma) = T\max_{\varphi\in\mathcal{X}\times\mathcal{A}} \varphi^{\top}\gamma - \mathbb{E}_{\gamma} \left[\sum_{t=1}^T R_t^{a_t}\right]
 \end{equation*}
where the expectation is taken with respect to the measure on the rewards induced by the interaction of the policy with the bandit parametrized by $\gamma$. 
The idea is to find a worst case instance of the bandit problem, which is characterized by the context-action space and by the parameter $\gamma$, with cumulative regret at least of the order of $p\sqrt{T}$.
\citet{lattimore2020bandit} show the result for two different choices of the context-action space. The first is the hypercube, where $\mathcal{X}\times\mathcal{A}= [-1, 1]^p$. The second choice is the unit sphere, namely $\mathcal{X}\times\mathcal{A}=\{x\in\mathbb{R}^p\mid \|x\|_2\le 1\}$. In the first case, the proof is simpler because the value of the context-action feature in one direction is not constrained by the values in the remaining directions: we focus on this proof for now.
More in detail, the proof for $\mathcal{X}\times\mathcal{A}= [-1, 1]^p$ shows that there exists a parameter $\gamma\in\Gamma\coloneqq \left\{\pm\frac{1}{\sqrt{T}}\right\}^p$ such that $\operatorname{Reg}_T(\mathcal{X}\times\mathcal{A}, \gamma)$ is $\Omega(p\sqrt{T})$.
The proof starts by considering the relative entropy (or KL-divergence) between the probability measures $\mathbb{P}_{\gamma}, \mathbb{P}_{\gamma'}$ on the rewards, induced by the interaction of a fixed policy, i.e., a fixed sequence of context-action features $\{\varphi(X_t, a_t)\}_{t=1}^T$, with two bandits parametrized by $\gamma, \gamma' \in \Gamma$ respectively (with noise $\epsilon_t\sim\mathcal{N}(0,1)$), that is,
\begin{equation*}
    D(\mathbb{P}_{\gamma}, \mathbb{P}_{\gamma'}) = \frac{1}{2} \sum_{t=1}^T \mathbb{E}_{\gamma}[(\varphi(X_t, a_t)^{\top}(\gamma-\gamma'))^2].
\end{equation*}
Denoting with the subscript $i$ the $i$-th component of a vector, they then define for $i\in[p]$ and $\gamma\in\Gamma$
\begin{equation*}
    q_{\gamma_i}\coloneqq \mathbb{P}_{\gamma}\left(\sum_{t=1}^T \mathbbm{1}\{\operatorname{sign}(\varphi_i(X_t, a_t) \neq \operatorname{sign}(\gamma_i)\}\ge \frac{T}{2} \right),
\end{equation*}
that is, the probability that the number of times that the sign of the $i$-th component of the selected context-action feature is not optimal exceeds half of the total learning horizon. 
For a fixed $i\in[p]$ and $\gamma\in\Gamma$, they further fix $\gamma'$ as the parameter that equals $\gamma$ in all its components except for the $i$-th, which is set to $\gamma'_i=-\gamma_i$. Then, it holds by a property of the relative entropy (Bretangolle-Huber inequality) that
\begin{equation*}
    q_{\gamma_i} + q_{\gamma'_i} \ge \frac{1}{2}\operatorname{exp}\left(-D(\mathbb{P}_{\gamma}, \mathbb{P}_{\gamma'})\right) = \frac{1}{2}\operatorname{exp}\left(-\frac{1}{2} \sum_{t=1}^T \mathbb{E}_{\gamma}[(\varphi(X_t, a_t)^{\top}(\gamma-\gamma'))^2]\right) \ge \frac{1}{2} \operatorname{exp}(-2),
\end{equation*}
where the last inequality follows from the definition of the spaces $\mathcal{A}\times\mathcal{X}$ and $\Gamma$. 
Averaging over all possible parameters $\gamma\in\Gamma$, this implies that
\begin{equation*}
    \frac{1}{|\Gamma|}\sum_{\gamma\in\Gamma} \sum_{i\in[p]} q_{\gamma_i} =  \frac{1}{|\Gamma|} \sum_{i\in[p]} \sum_{\gamma\in\Gamma} q_{\gamma_i} \ge \frac{1}{|\Gamma|}\sum_{i\in[p]} \frac{|\Gamma|}{2} \frac{1}{2} \exp{(-2)} = \frac{p}{4}\operatorname{exp}(-2).
\end{equation*}
Finally, this implies that there exists at least one parameter $\gamma\in\Gamma$ such that $\sum_{i\in[p]}  q_{\gamma_i} \ge \frac{p}{4}\operatorname{exp}(-2)$. For this choice of $\gamma$, the regret is 
\begin{align*}
    \operatorname{Reg}_T(\mathcal{X}\times\mathcal{A}, \gamma) & = \mathbb{E}_{\gamma}\left[ \sum_{t=1}^T\sum_{i=1}^p(\operatorname{sign}(\gamma_i)-\varphi_i(X_t, a_t))\gamma_i\right] \\
    & \ge \frac{1}{\sqrt{T}} \sum_{i=1}^p\mathbb{E}_{\gamma} \left[ \sum_{t=1}^T\mathbbm{1}\{\operatorname{sign}(\varphi_i(X_t, a_t))\neq\operatorname{sign}(\gamma_i)\}\right] \\
    & \ge \frac{\sqrt{T}}{2} \sum_{i=1}^p \mathbb{P}_{\gamma}\left(\sum_{t=1}^T\mathbbm{1}\{\operatorname{sign}(\varphi_i(X_t, a_t))\neq\operatorname{sign}(\gamma_i)\}\ge \frac{T}{2} \right) \\
    & = \frac{\sqrt{T}}{2} \sum_{i=1}^p q_{\gamma_i} \\
    & \ge \frac{\exp{-2}}{8} p\sqrt{T}.
\end{align*}
The first equality follows from the fact that $\max_{\varphi\in [-1, 1]^{p}} \varphi^{\top}\theta = \operatorname{sign}(\theta)^{\top}\theta$. The first inequality follows from the definition of the space $\Gamma$ and a case-based analysis on the value of the summation terms. The second inequality uses Markov's inequality. The last equality follows from the choice of $\gamma$. 
This concludes the proof for the lower bound.

\subsection{Lower bound for ISD with oracle subspaces and oracle invariant component}
Consider now the case in which we have oracle access to the invariant subspace decomposition $(\mathcal{S}^{\operatorname{inv}}, \mathcal{S}^{\operatorname{res}})$ and the invariant parameter $\beta^{\operatorname{inv}}$.  We can modify the proof for the standard linear case above by  considering a parameter space where the parameters only differ in their residual components, that is, their projection onto $\mathcal{S}^{\operatorname{res}}$, while the invariant component remains fixed. Under this additional constraint, the effective dimension of $\Gamma$ is reduced from $p$ to $p^{\operatorname{res}}$,  leading to the lower bound $\Omega(p^{\operatorname{res}}\sqrt{T})$. 

More in detail, we consider again the context-action space $\mathcal{X}\times\mathcal{A}=[-1, 1]^p$. We now define the parameter space as $\Gamma \coloneqq \left\{\gamma\in\mathbb{R}^p\mid \gamma = \beta^{\operatorname{inv}} + \delta^{\operatorname{res}}, (U^{\operatorname{res}})^{\top}\delta^{\operatorname{res}}\in \left\{\pm \frac{1}{\sqrt{T}}\right\}^{p^{\operatorname{res}}}\right\}$. 
Then, we have that for two parameters $\gamma, \gamma'\in\Gamma$, 
\begin{equation*}
     D(\mathbb{P}_{\gamma}, \mathbb{P}_{\gamma'}) = \frac{1}{2} \sum_{t=1}^T \mathbb{E}_{\gamma}[(\varphi(X_t, a_t)^{\top}(\delta^{\operatorname{res}}-\delta'^{\operatorname{res}}))^2].
\end{equation*}
For all $t\in[T]$ let $\tilde{\varphi}^{\operatorname{res}}(X_t, a_t)\coloneqq (U^{\operatorname{res}})^{\top}\varphi(X_t, a_t)$ and $\tilde{\delta}^{\operatorname{res}}\coloneqq (U^{\operatorname{res}})^{\top}\delta^{\operatorname{res}}$. As above, we can define for $i\in[p^{\operatorname{res}}]$ and $\gamma\in\Gamma$
\begin{equation*}
    q_{\gamma_i} \coloneqq \mathbb{P}_{\gamma} \left(\sum_{t=1}^T \mathbbm{1}\{\operatorname{sign}(\tilde{\varphi}^{\operatorname{res}}_i(X_t, a_t)) \neq \operatorname{sign}(\tilde{\delta}^{\operatorname{res}}_i)\} \ge \frac{T}{2}\right).
\end{equation*}
For a fixed $i\in[p^{\operatorname{res}}]$ and $\gamma\in\Gamma$, define $\gamma'\in\Gamma$ as the parameter obtained from $\gamma$ by setting $\tilde{\delta}'^{\operatorname{res}}_i = - \tilde{\delta}^{\operatorname{res}}_i$ (and otherwise equal to $\gamma$). Then,
\begin{align*}
    q_{\gamma_i} + q_{\gamma'_i} \ge \frac{1}{2}\exp{\left(-\frac{1}{2}\sum_{t=1}^T\mathbb{E}_{\gamma}[(\varphi(X_t, a_t)^{\top}(\delta^{\operatorname{res}}-\delta'^{\operatorname{res}}))^2]\right)}\ge \frac{1}{2}\exp{(-2)}
\end{align*}
which again follows from the definition of the spaces $\mathcal{X}\times\mathcal{A}$ and $\Gamma$. 
Taking the average over all possible parameters $\gamma\in\Gamma$, we obtain
\begin{equation*}
    \frac{1}{|\Gamma|}\sum_{\gamma\in\Gamma}\sum_{i\in[p^{\operatorname{res}}]}q_{\gamma_i} \ge \frac{1}{|\Gamma|}\sum_{i\in[p^{\operatorname{res}}]} \frac{|\Gamma|}{2} \frac{1}{2}\exp{(-2)} = p^{\operatorname{res}}\frac{\exp{(-2)}}{4},
\end{equation*}
meaning that there exists $\gamma\in\Gamma$ such that $\sum_{i\in[p^{\operatorname{res}}]} q_{\gamma_i} \ge p^{\operatorname{res}}\frac{\exp{(-2)}}{4}$. The last part of the proof follows exactly the same steps as the one in the standard case, leading this time to a regret that is $\Omega(p^{\operatorname{res}}\sqrt{T})$.

\end{document}